\definecolor{shadecolor}{gray}{0.9}
\colorlet{DarkBlue}{black!50!blue!100}
\colorlet{DarkRed}{black!15!red!100}
\pgfplotsset{compat=1.3}
\definecolor{hexcolor0xbfbfbf}{rgb}{0.749,0.749,0.749}
\tikzset{>=latex}
\tikzstyle{none}   = [inner sep=0pt]
\tikzstyle{line}   = [ -, thick, shorten <=1pt, shorten >=1pt ]
\tikzstyle{arrow}  = [ ->, thick, shorten <=1pt, shorten >=1pt ]
\tikzstyle{ardash} = [ dashed, ->, thick, shorten <=1pt, shorten >=1pt ]
\tikzstyle{empty}=[circle,opacity=0.0,text opacity=1.0,inner sep=0pt]
\tikzstyle{box}=[rectangle,fill=White,draw=Black]
\tikzstyle{filled}=[circle,thick,fill=hexcolor0xbfbfbf,draw=Black]
\tikzstyle{hollow}=[circle,thick,fill=White,draw=Black]
\tikzstyle{param}=[rectangle,fill=Black,draw=Black,inner sep=0pt,minimum width=4pt,minimum height=4pt]
\tikzstyle{paramhollow}=[rectangle,thick,fill=White,draw=Black,inner sep=0pt,minimum
\tikzset{
  myarrow/.style={stealth-,shorten >=3pt,shorten <=3pt}
}
\tikzset{
testarrow/.style={draw,decoration={markings, mark=at position 1 with {\arrow{>}}},postaction={decorate}}
}
\newtheorem{thm}{Theorem}
\newtheorem{lem}{Lemma}
\newtheorem{cor}{Corollary}
\DeclareMathOperator{\g}{\,|\,}
\DeclareRobustCommand{\KL}[2]{\ensuremath{\operatorname{KL}\left(#1\;\|\;#2\right)}}
\DeclareRobustCommand{\E}[1]{\ensuremath{\mathbb{E}\!\left[#1\right]}}
\DeclareRobustCommand{\Ed}[2]{\ensuremath{\mathbb{E}_{#1}\!\left[#2\right]}}
\DeclarePairedDelimiterX{\inp}[2]{\langle}{\rangle}{#1, #2} 
\DeclareMathOperator*{\argmax}{arg\,max  \ }
\DeclareMathOperator*{\alg}{\ensuremath{\varphi}}
\DeclareMathOperator{\rR}{{\mathbb{R}}}
\DeclareMathOperator{\rN}{{\mathbb{N}}}
\DeclareMathOperator{\rV}{{\mathbb{V}}}
\DeclareMathOperator{\cA}{\mathcal{A}}
\DeclareMathOperator{\cB}{\mathcal{B}}
\DeclareMathOperator{\cD}{\mathcal{D}}
\DeclareMathOperator{\cH}{\mathcal{H}}
\DeclareMathOperator{\cL}{\mathcal{L}}
\DeclareMathOperator{\cM}{\mathcal{M}}
\DeclareMathOperator{\cP}{\mathcal{P}}
\DeclareMathOperator{\cR}{\mathcal{R}}
\DeclareMathOperator{\cS}{\mathcal{S}}
\DeclareMathOperator{\cW}{\mathcal{W}}
\DeclareMathOperator{\cX}{\mathcal{X}}
\DeclareMathOperator{\ba}{\mathbf{a}}
\DeclareMathOperator{\bb}{\mathbf{b}}
\DeclareMathOperator{\bd}{\mathbf{d}}
\DeclareMathOperator{\bg}{\mathbf{g}}
\DeclareMathOperator{\bh}{\mathbf{h}}
\DeclareMathOperator{\bm}{\mathbf{m}}
\DeclareMathOperator{\bo}{\mathbf{o}}
\DeclareMathOperator{\bp}{\mathbf{p}}
\DeclareMathOperator{\bs}{\mathbf{s}}
\DeclareMathOperator{\bt}{\mathbf{t}}
\DeclareMathOperator{\bu}{\mathbf{u}}
\DeclareMathOperator{\bv}{\mathbf{v}}
\DeclareMathOperator{\bw}{\mathbf{w}}
\DeclareMathOperator{\bx}{\mathbf{x}}
\DeclareMathOperator{\bz}{\mathbf{z}}
\DeclareMathOperator{\bzeta}{\boldsymbol{\zeta}}
\DeclareMathOperator{\0}{\boldsymbol{0}}
\crefname{equation}{Eq.}{Eqs.}
\crefname{lem}{lemma}{lemmas}
\crefname{thm}{theorem}{theorems}
\crefname{defn}{definition}{definitions}
\crefname{cor}{corollary}{corollaries}
\def\mg{{MG}\xspace}
\def\bmg{{BMG}\xspace}
\def\tb{{TB}\xspace}
\def\stacx{{STACX}\xspace}
\title{{Bootstrapped Meta-Learning}}
\author{%
  Sebastian Flennerhag\\
  DeepMind\\
  \texttt{flennerhag@google.com}
  \And
  Yannick Schroecker\\
  DeepMind
  \And
  Tom Zahavy\\
  DeepMind
  \AND
  Hado van Hasselt\\
  DeepMind
  \And
  David Silver\\
  DeepMind
  \And
  Satinder Singh\\
  DeepMind
}
\begin{document}

\maketitle

\begin{abstract}
Meta-learning empowers artificial intelligence to increase its efficiency by learning how to learn. Unlocking this potential involves overcoming a challenging meta-optimisation problem.
We propose an algorithm that tackles this problem by letting the meta-learner teach itself. The algorithm first bootstraps a \emph{target} from the meta-learner, then optimises the meta-learner by minimising the \emph{distance} to that target under a chosen (pseudo-)metric.
Focusing on meta-learning with gradients, we establish conditions that guarantee performance improvements and show that the metric can control meta-optimisation.
Meanwhile, the bootstrapping mechanism can extend the effective meta-learning horizon without requiring backpropagation through all updates.
We achieve a new state-of-the art for model-free agents on the Atari ALE benchmark and demonstrate that it yields both performance and efficiency gains in multi-task meta-learning. Finally, we explore how bootstrapping opens up new possibilities and find that it can meta-learn efficient exploration in an  $\varepsilon$-greedy $Q$-learning agent---without backpropagating through the update rule.
\end{abstract}

\section{Introduction}

In a standard machine learning problem, a \emph{learner} or \emph{agent} learns a task by iteratively adjusting its parameters under a given update rule, such as Stochastic Gradient Descent (SGD). Typically, the learner's update rule must be tuned manually. In contrast, humans learn seamlessly by relying on previous experiences to inform their learning processes \citep{Spelke:2007co}.

For a (machine) learner to have the same capability, it must be able to learn its update rule (or such inductive biases). \emph{Meta-learning} is one approach that learns (parts of) an update rule by applying it for some number of steps and then evaluating the resulting performance \citep{Schmidhuber:1987ev,Hinton:1987fa,Bengio:1991le}. For instance, a well-studied and often successful approach is to tune parameters of a gradient-based update, either online during training on a single task \citep{Bengio:2000hypergradients,Maclaurin:2015hypergradients,Xu:2018metagradient,Zahavy:2020se}, or meta-learned over a distribution of tasks \citep{Finn:2017maml,Rusu:2019le,Flennerhag:2020wg,Jerfel:2019online,Denevi:2019online}. More generally, the update rule can be an arbitrary parameterised function \citep{Hochreiter:2001le,Andrychowicz:2016tf,Kirsch:2019im,Oh:2020discoveringrl}, or the function itself can be meta-learned jointly with its parameters \citep{Alet:2020me,Real:2020ze}.

Meta-learning is challenging because to evaluate an update rule, it must first be applied. This often leads to high computational costs. As a result most works optimise performance after $K$ applications of the update rule and assume that this yields improved performance for the remainder of the learner's lifetime \citep{Bengio:1991le,Maclaurin:2015hypergradients,Metz:2019pathologies}. When this assumption fails, meta-learning suffers from a short-horizon bias \citep{Wu:2018shorthorizon,Metz:2019pathologies}. Similarly, optimizing the learner's performance after $K$ updates can fail to account for the \emph{process} of learning, causing another form of myopia \citep{Flennerhag:2019tl,Stadie:2018ha,Chen2016:le,Cao:2019swarm}. Challenges in meta-optimisation have been observed to cause degraded lifetime performance \citep{Lv:2017opt,wichrowska:2017opt}, collapsed exploration \citep{Stadie:2018ha,Chen2016:le}, biased learner updates \citep{Stadie:2018ha,Zheng:2018le}, and poor generalisation performance \citep{Wu:2018shorthorizon,Yin:2019me,Triantafillou:2019metadata}.

We argue that defining the meta-learner's objective directly in terms of the learner's objective---i.e. the performance after $K$ update steps---creates two bottlenecks in meta-optimisation. The first bottleneck is curvature: the meta-objective is constrained to the same type of geometry as the learner; the second is myopia: the meta-objective is fundamentally limited to evaluating performance within the $K$-step horizon, but ignores future learning dynamics. Our goal is to design an algorithm that removes these.

The algorithm relies on two main ideas. First, to mitigate myopia, we introduce the notion of bootstrapping a target from the meta-learner itself, a \emph{meta-bootstrap}, that infuses information about learning dynamics in the objective. Second, to control curvature, we formulate the meta-objective in terms of minimising distance (or divergence) to the bootstrapped target, thereby controlling the meta-loss landscape. In this way, the meta-learner learns from its future self. This leads to a bootstrapping effect where improvements beget further improvements. We present a detailed formulation in \Cref{sec:btm}; on a high level, as in previous works, we first unroll the meta-learned update rule for $K$ steps to obtain the learner's new parameters. Whereas standard meta-objectives optimise the update rule with respect to (w.r.t.) the learner's performance under the new parameters, our proposed algorithm constructs the meta-objective in two steps: 
\begin{enumerate}[nosep]
    \item It \emph{bootstraps} a \emph{target} from the learner's new parameters. In this paper, we generate targets by continuing to update the learner's parameters---either under the meta-learned update rule or another update rule---for some number of steps.
    \item The learner's new parameters---which are a function of the meta-learner's parameters---and the target are projected onto a \emph{matching space}. A simple example is Euclidean parameter space. To control curvature, we may choose a different (pseudo-)metric space. For instance, a common choice under probabilistic models is the Kullback-Leibler (KL) divergence.
\end{enumerate}
The meta-learner is optimised by minimising distance to the bootstrapped target. We focus on gradient-based optimisation, but other optimisation routines are equally applicable. By optimising meta-parameters in a well-behaved space, we can drastically reduce ill-conditioning and other phenomena that disrupt meta-optimisation. In particular, this form of \emph{Bootstrapped Meta-Gradient} (\bmg) enables us to infuse information about \emph{future} learning dynamics without increasing the number of update steps to backpropagate through. In effect, the meta-learner becomes its own teacher. We show that \bmg can guarantee performance improvements (\Cref{thm:btm}) and that this guarantee can be stronger than under standard meta-gradients (\Cref{cor:btm2}). Empirically, we find that \bmg provides substantial performance improvements over standard meta-gradients in various settings. We obtain a new state-of-the-art result for model-free agents on Atari (\Cref{sec:empirics:atari}) and improve upon MAML \citep{Finn:2017maml} in the few-shot setting (\Cref{sec:fs}). Finally, we demonstrate how \bmg enables new forms of meta-learning, exemplified by meta-learning $\varepsilon$-greedy exploration (\Cref{sec:empirics:twocolor}).

\section{Related Work}\label{sec:literature}

Bootstrapping as used here stems from temporal difference (TD) algorithms in reinforcement learning (RL) \citep{Sutton:1988td}. In these algorithms, an agent learns a value function by using its own future predictions as targets. Bootstrapping has recently been introduced in the self-supervised setting \citep{Guo2020bootstrap,Grill:2020BYOL}. In this paper, we introduce the idea of bootstrapping in the context of meta-learning, where a meta-learner learns about an update rule by generating future targets from it.

Our approach to target matching is related to methods in multi-task meta-learning \citep{Flennerhag:2019tl,Nichol:2018uo} that meta-learn an initialisation for SGD by minimising the Euclidean distance to task-optimal parameters. \bmg generalise this concept by allowing for arbitrary meta-parameters, matching functions, and target bootstraps. It is further related the more general concept of self-referential meta-learning \citep{Schmidhuber:1987ev,Schmidhuber:1993selfref}, where the meta-learned update rule is used to optimise its own meta-objective.

Target matching under KL divergences results in a form of distillation \citep{Hinton:2015dis}, where an online network (student) is encouraged to match a target network (teacher). In a typical setup, the target is either a fixed (set of) expert(s) \citep{Hinton:2015dis,Rusu:2015policy} or a moving aggregation of current experts \citep{Teh:2017distral,Grill:2020BYOL}, whereas \bmg bootstraps a target by following an update rule. Finally, \bmg is loosely inspired by trust-region methods that introduce a distance function to regularize gradient updates \citep{Pascanu:2013ngdDL,Schulman:2015tr,Tomar:2020mdpo,Hessel:2021muesli}.

\section{Bootstrapped Meta-Gradients}\label{sec:btm}

\begin{wrapfigure}{r}{0.5\linewidth}
    \vspace*{-38pt}
    \centering
    \begin{tikzpicture}
        \begin{axis}[name=main, hide axis,
        colormap={whiteblack}{color(-1cm)  = (white);color(1cm) = (black)}, scale=1.1]
        \addplot3[patch,patch refines=4,shader=faceted interp,patch type=biquadratic, opacity=0.15] 
            table[z expr=x^2-y^2-2*x+y]
        {
            x y
            -2 -2
            2 -2
            2 2
            -2 2
            0 -2
            2 0
            0 2
            -2 0
            0 0
        };
        \addplot3[black, patch, mesh, patch type=quadratic spline, color=black]
        coordinates {
            (-2, 1.54, 5.2) (-1, 1.0, 3) (-1.5, 1.3, 3.8) 
        };
        \addplot3[black, dashed, patch, mesh, patch type=cubic spline, color=black]
        coordinates {(-1.05, 1.125, 3.025) (1.57, -2, -2) (-0.3, 0.53, 2.59) (0.57, 0.3, -0.4) 
        };
        \addplot3[black, dotted, thick, patch, mesh, patch type=quadratic spline, color=black]
        coordinates {(-2, 1.54, 5.2) (-0.8, 0.5, -1) (-1, 0.5, 3) 
        };    
        \addplot3[mark=*, black, point meta=explicit symbolic, nodes near coords] 
        coordinates {(-2, 1.54, 5.2)[$\bx$]};
        \addplot3[mark=*, black, point meta=explicit symbolic, nodes near coords] 
        coordinates {(-1, 1.0, 3)[$\,\,\,\,\,\,\,\,\,\bx^{(K)}$]};
        \addplot3[black, point meta=explicit symbolic, nodes near coords] 
        coordinates {(-1.5, 1.4, 3.7)[$\bw$]};
        \addplot3[black, point meta=explicit symbolic, nodes near coords] 
        coordinates {(-1.1, 0.3, 1.6)[$\tilde{\bw}$]};
        \addplot3[mark=*, black, point meta=explicit symbolic, nodes near coords] 
        coordinates {(1.57, -2, -2)[$\,\,\,\,\,\,\,\,\,\,\tilde{\bx}$]};
    \end{axis}
    \begin{axis}[hide axis, at={($(main.south west)+(0cm,-5.7cm)$)}, anchor=south west, scale=1.1]
        \addplot3[surf, mesh, opacity=0.15, colormap={whiteblack}{color(-1cm)  = (white);color(1cm) = (blue)},  shader=faceted interp, domain=-5:5]{exp(-1*((x-1.5)^2+(y-1.5)^2)/2^2)};
        \addplot3[surf, mesh, opacity=0.30, colormap={whiteblack}{color(-1cm)  = (white);color(1cm) = (red)},  shader=faceted interp, domain=-5:5]{exp(-0.5*((x+1.5)^2+(y+1.5)^2)/2^2)};    
        \addplot3[black, point meta=explicit symbolic, nodes near coords] 
        coordinates {(1.5, 2, 0.3)[$\pi_{\bx^{(K)}}$]};    
        \addplot3[black, point meta=explicit symbolic, nodes near coords] 
        coordinates {(-1.5, -2, 0.2)[$\pi_{\tilde{\bx}}$]};        
    \end{axis}
    \draw (6.5, 1) node[above] {$(\bx, f(\bx))$};
    \draw (6.5, -1.8) node[below] {$(\bs, \pi_{\bx}(\bs))$};
    \draw [myarrow, dotted]  (3.1, 2.9)  to (3, 3.2);
    \draw [myarrow]  (3.4, -1.67) to (4.2, -1.53);
    \draw [myarrow]  (1.3, 1) to[bend right] (1.3, -1.8);    
    \draw []  (1, -0.4) node[right] {$\nabla_w \mu(\tilde{\bx}, \bx^{(K)}(\bw))$};
    \draw [myarrow]  (6, -1.8) to[bend right] (6, 1);    
    \draw []  (6, -0.4) node {$\pi$};    
    \end{tikzpicture}
    \vspace*{-18pt}
    \caption{Bootstrapped Meta-Gradients. 
    }
    \label{fig:btm-schema}
    \vspace*{-12pt}
\end{wrapfigure}
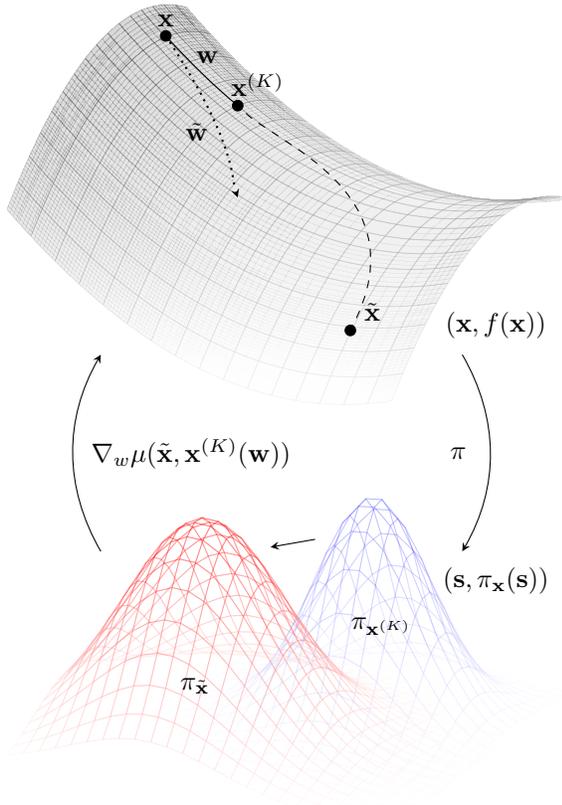

We begin in the single-task setting and turn to multi-task meta-learning in \Cref{sec:fs}. The learner's problem is to minimize a stochastic objective $f(\bx) \coloneqq \E{\ell(\bx; \bzeta)}$ over a data distribution $p(\bzeta)$, where $\bzeta$ denotes a source of data and $\bx \in \cX \subset \rR^{n_x}$ denotes the learner's parameters. In RL, $f$ is typically the (negative) expected value of a policy $\pi_{\bx}$; in supervised learning, $f$ may be the expected negative log-likelihood under a probabilistic model $\pi_{\bx}$. We provide precise formulations in \Cref{sec:rl,sec:fs}. 

The meta-learner's problem is to learn an update rule $\alg: \cX \times \cH \times \cW \to \cX$ that updates the learner's parameters by $\bx^{(1)} = \bx + \alg(\bx, \bh, \bw)$ given $\bx \in \cX$, a learning state $\bh \in \cH$, and meta-parameters $\bw \in \cW \subset \rR^{n_w}$ of the update rule. We make no assumptions on the update rule other than differentiability in $\bw$. As such, $\alg$ can be a recurrent neural network \citep{Hochreiter:2001le,Wang:2016re,Andrychowicz:2016tf} or gradient descent \citep{Bengio:2000hypergradients,Maclaurin:2015hypergradients,Finn:2017maml}. The learning state $\bh$ contains any other data required to compute the update; in a black-box setting $\bh$ contains an observation and the recurrent state of the network; for gradient-based updates, $\bh$ contains the (estimated) gradient of $f$ at $\bx$ along with any auxiliary information; for instance, SGD is given by $\bx^{(1)} = \bx - \alpha \nabla_x f(\bx)$ with $\bh = \nabla_x f(\bx), \, \bw = \alpha \in \rR_{+}$.

The standard meta-gradient (\mg) optimises meta-parameters $\bw$ by taking $K$ steps under $\alg$ and evaluating the resulting learner parameter vector under $f$. With a slight abuse of notation, let $\bx^{(K)}(\bw)$ denote the learner's parameters after $K$ applications of $\alg$ starting from some $(\bx, \bh, \bw)$, where $(\bx, \bh)$ evolve according to $\alg$ and the underlying data distribution. The \mg update is defined by
\begin{equation}\label{eq:mg}
\bw^\prime = \bw - \beta \, \nabla_{\bw} f\left(\bx^{(K)}(\bw)\right), \qquad \beta \in \rR_{+}.
\end{equation}
Extensions involve averaging the performance over all iterates $\bx^{(1)}, \ldots, \bx^{(K)}$ \citep{Andrychowicz:2016tf,Chen2016:le,Antoniou:2019ma} or using validation data in the meta-objective \citep{Bengio:1991le,Maclaurin:2015hypergradients,Finn:2017maml,Xu:2018metagradient}. We observe two bottlenecks in the meta-objective in \Cref{eq:mg}. First, the meta-objective is subject to the same curvature as the learner. Thus if $f$ is ill-conditioned, so will the meta-objective be. Second, the meta-objective is only able to evaluate the meta-learner on dynamics up to the $K$th step, but ignores effects of future updates.

To tackle myopia, we introduce a \emph{Target Bootstrap} (\tb) $\xi: \cX \mapsto \cX$ that maps the meta-learner's output $\bx^{(K)}$ into a bootstrapped target $\tilde{\bx} = \xi(\bx^{(K)})$. We focus on {\tb}s that unroll $\varphi$ a further $L-1$ steps before taking final gradient step on $f$, with targets of the form $\tilde{\bx} = \bx^{(K+L-1)} - \alpha \nabla f(\bx^{(K+L-1)})$. This \tb encourages the meta-learner to reach future states on its trajectory faster while nudging the trajectory in a descent direction. Crucially, regardless of the bootstrapping strategy, we do \emph{not} backpropagate through the target. Akin to temporal difference learning in RL \citep{Sutton:1988td}, the target is a fixed goal that the meta-learner should try to produce within the $K$-step budget. 

Finally, to improve the meta-optimisation landscape, we introduce a \emph{matching function} $\mu: \cX \times \cX \to \rR_{+}$ that measures the (dis)similarity between the meta-learner's output, $\bx^{(K)}(\bw)$, and the target, $\tilde{\bx}$, in a matching space defined by $\mu$ (see \Cref{fig:btm-schema}). Taken together, the \bmg update is defined by
\begin{equation}\label{eq:btm}
\tilde{\bw} = \bw - \beta \, \nabla_{\bw} \, \mu\left(\tilde{\bx}, \, \bx^{(K)}(\bw) \right), \qquad \beta \in \rR_{+},
\end{equation}

where the gradient is with respect to the second argument of $\mu$. Thus, \bmg describes a family of algorithms based on the choice of matching function $\mu$ and \tb $\xi$. In particular, \mg is a special case of \bmg under matching function $\mu(\tilde{\bx}, \bx^{(K)}) = \| \tilde{\bx} - \bx^{(K)} \|_2^2 $ and \tb $\xi(\bx^{(K)}) = \bx^{(K)} - \frac{1}{2}\nabla_x f(\bx^{(K)})$, since the bootstrapped meta-gradient reduces to the standard meta-gradient:
\begin{equation}\label{eq:equivalence}
\nabla_w {\left\| \tilde{\bx} - \bx^{(K)}(\bw) \right\|}_2^2
= - 2 D \left(\tilde{\bx} - \bx^{(K)}\right)
= D \nabla_x f\left(\bx^{(K)}\right)
= \nabla_w f\left(\bx^{(K)}(\bw)\right),
\end{equation}
where $D$ denotes the (transposed) Jacobian of $\bx^{(K)}(\bw)$. For other matching functions and target strategies, \bmg produces different meta-updates compared to \mg. We discuss these choices below.

\paragraph{Matching Function} Of primary concern to us are models that output a probabilistic distribution, $\pi_{\bx}$. A common pseudo-metric over a space of probability distributions is the Kullback-Leibler (KL) divergence. For instance, Natural Gradients \citep{Amari:1998ngd} point in the direction of steepest descent under the KL-divergence, often approximated through a KL-regularization term \citep{Pascanu:2013ngdDL}. KL-divergences also arise naturally in RL algorithms \citep{Kakade:2001npg,Schulman:2015tr,Schulman:2017ppo,Abdolmaleki:2018mpo}. Hence, a natural starting point is to consider KL-divergences between the target and the iterate, e.g. $\mu(\tilde{\bx}, \bx^{(K)}) = \KL{\pi_{\tilde{\bx}}}{\pi_{\bx^{(K)}}}$. In actor-critic algorithms \citep{Sutton2000ac}, the policy defines only part of the agent---the value function defines the other. Thus, we also consider a composite matching function over both policy and value function.

\paragraph{Target Bootstrap} We analyze conditions under which \bmg guarantees performance improvements in \Cref{sec:analysis} and find that the target should co-align with the gradient direction. Thus, in this paper we focus on gradient-based {\tb}s and find that they perform well empirically. As with matching functions, this is a small subset of all possible choices; we leave the exploration of other choices for future work.

\section{Performance Guarantees}\label{sec:analysis}

In this analysis, we restrict attention to the noise-less setting (true expectations). In this setting, we ask three questions: (1) what local performance guarantees are provided by \mg? (2) What performance guarantees can \bmg provide? (3) How do these guarantees relate to each other? To answer these questions, we analyse how the performance around $f(\bx^{(K)}(\bw))$ changes by updating $\bw$ either under standard meta-gradients (\Cref{eq:mg}) or bootstrapped meta-gradients (\Cref{eq:btm}).

First, consider improvements under the \mg update. In online optimisation, the \mg update can achieve strong convergence guarantees if the problem is well-behaved \citep{vanErven:2016metagrad}, with similar guarantees in the multi-task setting \citep{Balcan:2019metalearntheory,Khodak:2019adaptivemetalearntheory,Denevi:2019online}. A central component of these results is that the \mg update guarantees a local improvement in the objective. \Cref{lem:mg} below presents this result in our setting, with the following notation: let ${\| \bu \|}_{A} \coloneqq \sqrt{\left\langle \bu,  \, {A \bu} \right\rangle}$ for any square real matrix $A$. Let $G^T = D^T D \in \rR^{n_x \times n_x}$, with $D \coloneqq {\left[\frac{\partial}{\partial \bw} \bx^{(K)}(\bw)\right]}^T \in \rR^{n_w \times n_x}$. Note that $\nabla_w f(\bx^{(K)}(\bw)) = D \nabla_x f(\bx^{(K)})$.

\begin{lem}[\mg Descent]\label{lem:mg}
Let $\bw^\prime$ be given by \Cref{eq:mg}. For $\beta$ sufficiently small, $f\big(\bx^{(K)}(\bw^\prime)\big) - f\big(\bx^{(K)}(\bw) \big) = -\beta {\| \nabla_x f(\bx^{(K)}) \|}_{G^T}^2 + O(\beta^2) < 0$.
\end{lem}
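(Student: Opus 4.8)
The plan is to recognise the statement as the elementary gradient-descent decrease lemma applied to the composite meta-objective, and then to rewrite the resulting Euclidean quantity in the claimed $G^T$-norm. Define the scalar map $F(\bw) \coloneqq f\big(\bx^{(K)}(\bw)\big)$, so that the left-hand side is exactly $F(\bw^\prime) - F(\bw)$ and, by \Cref{eq:mg}, $\bw^\prime = \bw - \beta \nabla_w F(\bw)$. By the chain rule---as already recorded in the excerpt---its gradient factors through the (transposed) Jacobian $D$ as $\nabla_w F(\bw) = D\,\nabla_x f(\bx^{(K)})$.

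First I would apply Taylor's theorem to $F$ around $\bw$ along the displacement $\bw^\prime - \bw = -\beta\,\nabla_w F(\bw)$. Assuming enough smoothness that $F \in C^2$ in a neighbourhood of $\bw$ (which follows from twice-differentiability of the update rule $\varphi$ in $\bw$ together with smoothness of $f$, guaranteeing a $C^2$ composite), this gives
\[
F(\bw^\prime) - F(\bw) = \big\langle \nabla_w F(\bw),\, \bw^\prime - \bw \big\rangle + O\big(\|\bw^\prime - \bw\|^2\big) = -\beta\,{\|\nabla_w F(\bw)\|}_2^2 + O(\beta^2),
\]
where the remainder is genuinely $O(\beta^2)$ because $\|\bw^\prime - \bw\|^2 = \beta^2\,{\|\nabla_w F(\bw)\|}_2^2$.

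The one substantive step is to identify the leading term with the advertised $G^T$-norm. Substituting $\nabla_w F(\bw) = D\,\nabla_x f(\bx^{(K)})$ and using $G^T = D^T D$,
\[
{\|\nabla_w F(\bw)\|}_2^2 = \big\langle D\,\nabla_x f(\bx^{(K)}),\, D\,\nabla_x f(\bx^{(K)}) \big\rangle = \big\langle \nabla_x f(\bx^{(K)}),\, D^T D\,\nabla_x f(\bx^{(K)}) \big\rangle = {\|\nabla_x f(\bx^{(K)})\|}_{G^T}^2,
\]
which establishes the stated equality. For the strict inequality I would observe that $G^T = D^T D \succeq 0$, so the coefficient $c \coloneqq {\|\nabla_x f(\bx^{(K)})\|}_{G^T}^2$ is nonnegative; provided the meta-gradient does not vanish, i.e. $D\,\nabla_x f(\bx^{(K)}) \neq 0$, it is strictly positive, and then there is a $\beta_0 > 0$ such that for all $0 < \beta < \beta_0$ the linear term $-\beta c$ dominates the $O(\beta^2)$ remainder, forcing $F(\bw^\prime) - F(\bw) < 0$.

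I do not expect a genuine obstacle: the content is a one-line descent estimate dressed in the notation of the map $\bx^{(K)}(\bw)$, and the factorisation $\nabla_w F = D\,\nabla_x f$ is handed to us. The only points demanding care are (i) stating the smoothness hypotheses under which the $O(\beta^2)$ remainder is legitimate, and (ii) making explicit that the strict inequality presupposes we are not already at a stationary point of the meta-objective (otherwise the quadratic form vanishes and only the vacuous bound $O(\beta^2)$ survives).
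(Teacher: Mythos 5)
Your proposal is correct and follows essentially the same route as the paper's proof: a first-order Taylor expansion of $f\big(\bx^{(K)}(\cdot)\big)$ in $\bw$, the substitution $\bw^\prime - \bw = -\beta D \bg$, and the identification $\langle D\bg, D\bg\rangle = \|\bg\|_{G^T}^2$ via $G^T = D^T D$. Your explicit caveat that the strict inequality requires $D\,\nabla_x f(\bx^{(K)}) \neq \0$ is a legitimate refinement that the paper's own proof leaves implicit (it only invokes $\|\bg\|_{G^T}^2 \geq 0$ from positive semi-definiteness before asserting strictness).
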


We defer all proofs to \Cref{app:proofs}. \Cref{lem:mg} relates the gains obtained under {standard meta-gradients} to the local gradient norm of the objective. Because the meta-objective is given by $f$, the \mg update is not scale-free \citep[c.f.][]{Schraudolph:1999meta}, nor invariant to re-parameterisation. If $f$ is highly non-linear, the meta-gradient can vary widely, preventing efficient performance improvement. Next, we turn to \bmg, where we assume $\mu$ is differentiable and convex, with $0$ being its minimum.

\begin{thm}[\bmg Descent]\label{thm:btm}
Let $\tilde{\bw}$ be given by \Cref{eq:btm} for some \tb $\xi$. The \bmg update satisfies
\begin{equation*}
f\big(\bx^{(K)}(\tilde{\bw}) \big) - f\big(\bx^{(K)}(\bw) \big) = \frac{\beta}{\alpha} \left(\mu(\tilde{\bx}, \bx^{(K)} - \alpha G^T \bg) - \mu(\tilde{\bx}, \bx^{(K)}) \right) + o(\beta(\alpha + \beta)).
\end{equation*}
For $(\alpha, \beta)$ sufficiently small, there exists infinitely many $\xi$ for which $f\big(\bx^{(K)}(\tilde{\bw})\big) - f\big(\bx^{(K)}(\bw)\big) < 0$. In particular, $\xi(\bx^{(K)}) = \bx^{(K)} - \alpha G^T \bg$ yields improvements
\begin{equation*}
f\big(\bx^{(K)}(\tilde{\bw}) \big) - f\big(\bx^{(K)}(\bw) \big) = - \frac{\beta}{\alpha} \mu(\tilde{\bx}, \bx^{(K)}) + o(\beta(\alpha + \beta)) < 0.
\end{equation*}
This is not an optimal rate; there exists infinitely many {\tb}s that yield greater improvements.
\end{thm}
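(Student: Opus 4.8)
The plan is to read off, from the leading-order expression just established, the coefficient that governs the improvement and to show that the distinguished bootstrap does not make it as negative as possible. Write $\bz := \bx^{(K)}$, $\bg := \nabla_x f(\bx^{(K)})$ and $\bv := \bx^{(K)} - \alpha G^T \bg$, and define the \emph{leading coefficient} $J(\tilde{\bx}) := \mu(\tilde{\bx}, \bv) - \mu(\tilde{\bx}, \bz)$, so that the main formula reads $f(\bx^{(K)}(\tilde{\bw})) - f(\bx^{(K)}(\bw)) = \tfrac{\beta}{\alpha} J(\tilde{\bx}) + o(\beta(\alpha+\beta))$. The distinguished choice $\xi(\bx^{(K)}) = \bv$ gives $J(\bv) = \mu(\bv,\bv) - \mu(\bv,\bz) = -\mu(\bv,\bz)$, since $\mu$ attains its minimum $0$ on the diagonal. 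Because $\beta/\alpha > 0$, it suffices to exhibit infinitely many targets $\tilde{\bx}$ with $J(\tilde{\bx}) < J(\bv)$: each then yields a strictly smaller (more negative) leading coefficient and hence, once $(\alpha,\beta)$ are small enough, a strictly greater improvement.

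First I would show that $\tilde{\bx} = \bv$ fails to minimise $J$. Viewing $J$ as a function of the free target, $\nabla_{\tilde{\bx}} J(\bv) = \nabla_1 \mu(\bv, \bv) - \nabla_1 \mu(\bv, \bz)$, where $\nabla_1$ denotes the gradient in the first argument. Since $(\ba,\ba)$ is a global minimiser of $\mu$ for every $\ba$, the first term vanishes, leaving $\nabla_{\tilde{\bx}} J(\bv) = -\nabla_1 \mu(\bv, \bz) \ne \0$ (as $\bv \ne \bz$ and $\mu$ is strictly positive off the diagonal). Hence $\bv$ is not stationary for $J$, and moving $\tilde{\bx}$ in the descent direction $+\nabla_1\mu(\bv,\bz)$ strictly decreases $J$. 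Concretely, I would use the one-parameter ``overshoot'' family $\tilde{\bx}(t) := \bz - t G^T \bg$ and differentiate: using $\nabla_1\mu(\bv,\bv)=\0$, $\frac{d}{dt} J(\tilde{\bx}(t))\big|_{t=\alpha} = \inp{\nabla_1\mu(\bv,\bz)}{G^T\bg}$, which convexity of $\mu(\cdot,\bz)$ together with $\nabla_1\mu(\bz,\bz)=\0$ forces to be $\le 0$ (and $<0$ under strict convexity with $G^T\bg \ne \0$). Thus every $t$ in an interval $(\alpha, \alpha+\delta)$ yields $J(\tilde{\bx}(t)) < J(\bv)$, a continuum of bootstraps each producing greater improvement; this also matches the intuition that a single gradient step is not the most aggressive useful target.

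The hard part will be the asymptotic bookkeeping: converting a strict gap in the \emph{leading coefficient} into a strict gap in the \emph{improvement} requires the lower-order remainders of the two targets to be dominated by that gap. I would check that the gap $\tfrac{\beta}{\alpha}\big(J(\bv) - J(\tilde{\bx}(t))\big)$ is $\Theta(\alpha\beta)$ (since $J = \Theta(\alpha^2)$ near the diagonal, because $\mu$ and its first-argument gradient vanish there), while the combined remainder is $o(\beta(\alpha+\beta))$; in the joint small-step regime with $\beta = O(\alpha)$ one has $\beta(\alpha+\beta) = O(\alpha\beta)$ and the $o(\cdot)$ term is genuinely lower order, so the strict inequality survives. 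I would also verify $J(\tilde{\bx}(t)) < 0$ throughout the interval so that the greater improvements remain bona fide decreases, completing the argument that the distinguished rate is suboptimal.
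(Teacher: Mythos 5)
Your proposal addresses only the final claim of the theorem---that the \tb $\xi(\bx^{(K)}) = \bx^{(K)} - \alpha G^T\bg$ is not rate-optimal---together with, in passing, the ``in particular'' formula via $J(\bv) = \mu(\bv,\bv)-\mu(\bv,\bx^{(K)}) = -\mu(\bv,\bx^{(K)})$. The central content of the theorem, namely the expansion
\begin{equation*}
f\big(\bx^{(K)}(\tilde{\bw})\big) - f\big(\bx^{(K)}(\bw)\big) = \frac{\beta}{\alpha}\left(\mu(\tilde{\bx}, \bx^{(K)} - \alpha G^T\bg) - \mu(\tilde{\bx},\bx^{(K)})\right) + o(\beta(\alpha+\beta)),
\end{equation*}
is treated as ``just established'' and never proved. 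That is a genuine gap, not bookkeeping: one must (i) compute the bootstrapped meta-gradient as $D\bu$ with $\bu = \nabla_z\mu(\tilde{\bx},\bz)\big|_{\bz=\bx^{(K)}}$, (ii) Taylor-expand $f\circ\bx^{(K)}$ in $\bw$ to obtain $f\big(\bx^{(K)}(\tilde{\bw})\big)-f\big(\bx^{(K)}(\bw)\big) = -\beta\inp{\bu}{G^T\bg} + o\big(\beta^2\|\bu\|^2_{G^T}\big)$, and then (iii)---the step that gives the theorem its form---expand $\mu(\tilde{\bx},\cdot)$ around $\bx^{(K)}$ in the direction $\bd = -\alpha G^T\bg$ so that the inner product $-\beta\inp{\bu}{G^T\bg}$ becomes $\frac{\beta}{\alpha}\big(\mu(\tilde{\bx},\bx^{(K)}-\alpha G^T\bg)-\mu(\tilde{\bx},\bx^{(K)})\big)$ up to $o\big(\alpha\beta\|\bg\|^2_{G^T}\big)$. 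Without (iii) your ``leading coefficient'' $J$ has no derivation, and the existence claim (infinitely many $\xi$ with guaranteed descent) is likewise left implicit, though your interval of targets with $J<0$ would cover it once the expansion is in hand.

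For the part you do treat, your argument is essentially the paper's: both show that $\bv = \bx^{(K)}-\alpha G^T\bg$ is not a stationary point of the leading coefficient (since $\nabla_{\tilde{x}}\mu(\bv,\bv)=\0$ while $\nabla_{\tilde{x}}\mu(\bv,\bx^{(K)})\neq\0$) and then perturb the target along a descent direction. Two of your refinements improve on the paper: the justification that convexity plus strict positivity off the diagonal forces $\nabla_{\tilde{x}}\mu(\bv,\bx^{(K)})\neq\0$, and the explicit check that the $\Theta(\alpha\beta)$ gap in leading coefficients dominates the $o(\beta(\alpha+\beta))$ remainders. Note, however, that your concrete overshoot family $\tilde{\bx}(t)=\bx^{(K)}-tG^T\bg$ only yields a \emph{strict} decrease of $J$ under strict convexity; in the general convex case you should step along $+\nabla_{\tilde{x}}\mu(\bv,\bx^{(K)})$ itself, as the paper does.
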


\Cref{thm:btm} portrays the inherent trade-off in \bmg; targets should align with the local direction of steepest descent, but provide as much learning signal as possible. Importantly, this theorem also establishes that $\mu$ directly controls for curvature as improvements are expressed in terms of $\mu$. While the \tb $\xi^\alpha_G(\bx^{(K)}) \coloneqq \bx^{(K)} - \alpha G^T \bg$ yields performance improvements that are proportional to the meta-loss itself, larger improvements are possible by choosing a \tb that carries greater learning signal (by increasing $\mu(\tilde{\bx}, \bx^{(K)})$). To demonstrate that \bmg can guarantee larger improvements to the update rule than \mg, we consider the \tb $\xi^\alpha_G$ with $\mu$ the (squared) Euclidean norm. Let $r \coloneqq \|\nabla f\left(\bx^{(K)}\right)\|_2 / \| G^T \nabla f\left(\bx^{(K)}\right) \|_2$ denote the gradient norm ratio.

\begin{cor}\label{cor:btm2}
Let $\mu = \| \cdot \|_2^2 $ and $\tilde{\bx} = \xi_G^r(\bx^{(K)})$. Let $\bw^\prime$ be given by \Cref{eq:mg} and $\tilde{\bw}$ be given by \Cref{eq:btm}. For $\beta$ sufficiently small, 
$f\big(\bx^{(K)}(\tilde{\bw}) \big) \leq f\big(\bx^{(K)}(\bw^\prime)\big)$, strictly if $G G^T \neq G^T$ and $G^T \nabla_x f(\bx^{(K)}) = \0$.
\end{cor}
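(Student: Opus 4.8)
The plan is to read the two improvements off the results already established and reduce their comparison to a single Cauchy--Schwarz inequality. Write $\bg \coloneqq \nabla_x f(\bx^{(K)})$ and measure both updates against the common baseline $f(\bx^{(K)}(\bw))$, so that $f(\bx^{(K)}(\tilde\bw)) \le f(\bx^{(K)}(\bw'))$ is equivalent to comparing the two improvements. \Cref{lem:mg} gives the \mg improvement directly, $-\beta\,\|\bg\|_{G^T}^2 + O(\beta^2) = -\beta\,\inp{\bg}{G^T\bg} + O(\beta^2)$. For \bmg I would invoke the specific-target case of \Cref{thm:btm} with $\mu = \|\cdot\|_2^2$ and $\tilde\bx = \xi_G^r(\bx^{(K)}) = \bx^{(K)} - r\,G^T\bg$, i.e. with step $\alpha = r$. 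Evaluating the matching loss at the target, $\mu(\tilde\bx, \bx^{(K)}) = \|r\,G^T\bg\|_2^2 = r^2\,\|G^T\bg\|_2^2$, the improvement formula $-\tfrac{\beta}{r}\mu(\tilde\bx,\bx^{(K)}) + o(\cdot)$ collapses to $-\beta\,r\,\|G^T\bg\|_2^2 + o(\cdot)$.

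The normalisation $r = \|\bg\|_2/\|G^T\bg\|_2$ is chosen precisely so that this comparison closes: it makes the magnitude of the \bmg improvement equal to $r\,\|G^T\bg\|_2^2 = \|\bg\|_2\,\|G^T\bg\|_2$, which is exactly the Cauchy--Schwarz upper bound of the \mg magnitude $\inp{\bg}{G^T\bg}$. Subtracting the two improvements, the sign of $f(\bx^{(K)}(\tilde\bw)) - f(\bx^{(K)}(\bw'))$ is governed for small $\beta$ by $\inp{\bg}{G^T\bg} - \|\bg\|_2\,\|G^T\bg\|_2$, which is non-positive by Cauchy--Schwarz applied to $\bg$ and $G^T\bg$. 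The whole of the weak inequality thus rests on the single observation that $r$ equates the \bmg improvement with the bound that $\inp{\bg}{G^T\bg}$ can at best reach.

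Strictness I would obtain from the equality case of Cauchy--Schwarz, $\inp{\bg}{G^T\bg} = \|\bg\|_2\|G^T\bg\|_2$, which forces $G^T\bg$ to be a non-negative multiple of $\bg$, i.e. $\bg$ to be an eigenvector of the symmetric positive semidefinite matrix $G^T = D^T D$; the remaining work is to verify that the stated hypotheses (failure of idempotency, $GG^T \neq G^T$, together with the condition on $G^T\nabla_x f(\bx^{(K)})$) are exactly what rules out this collinearity and hence forces the inequality to be strict. The main obstacle is the asymptotic bookkeeping. Because \Cref{thm:btm} is stated for small $\alpha$ with an $o(\beta(\alpha+\beta))$ remainder, using it at the finite scale $\alpha = r$ needs justification: one must either control that remainder at $\alpha = r$ or re-expand keeping $r$ fixed, and this choice fixes the exact constant multiplying $\|\bg\|_2\|G^T\bg\|_2$ and therefore the precise non-degeneracy conditions under which the comparison is strict. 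Whenever the first-order coefficient vanishes---the collinear or idempotent configurations, or a degenerate $G^T\bg$ where the \mg step stalls and $r$ must be handled with care---the $O(\beta)$ terms cancel and the sign has to be decided at the next order in $\beta$; isolating and checking these borderline cases is where the real care is needed.
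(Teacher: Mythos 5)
Your proposal is correct and follows essentially the same route as the paper: both measure the two updates against the common baseline $f\big(\bx^{(K)}(\bw)\big)$, read the first-order improvements off \Cref{lem:mg} and \Cref{thm:btm} with $\alpha = r$, and reduce the comparison to the single inequality $r \inp{G^T \bg}{G^T \bg} \geq \inp{G^T \bg}{\bg}$ with $\bg = \nabla_x f(\bx^{(K)})$. The paper proves that inequality by maximising $\inp{G^T\bg}{\bv}$ over the unit ball via a Lagrangian, which is exactly your Cauchy--Schwarz step in disguise, and your closing remarks on the equality case of Cauchy--Schwarz and on the size of the residual at the finite scale $\alpha = r$ flag looseness that is present in the paper's own argument as well (note the paper's proof concludes strictness under $G^T\bg \neq \0$, indicating a sign typo in the statement's condition $G^T\nabla_x f(\bx^{(K)}) = \0$).
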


\paragraph{Discussion} Our analysis focuses on an arbitrary (albeit noiseless) objective $f$ and establishes that \bmg can guarantee improved performance under a variety of {\tb}s. We further show that \bmg can yield larger local improvements than \mg. To identify optimal {\tb}s, further assumptions are required on $f$ and $\mu$, but given these \Cref{thm:btm} can serve as a starting point for more specialised analysis. Empirically, we find that taking $L$ steps on the meta-learned update with an final gradient step on the objective performs well. \Cref{thm:btm} exposes a trade-off for targets that are ``far'' away. Empirically, we observe clear benefits from bootstraps that unroll the meta-learner for several steps before taking a gradient step on $f$; exploring other forms of bootstraps is an exciting area for future research. 

\section{Reinforcement Learning}\label{sec:rl}

We consider a typical reinforcement learning problem, modelled as an MDP $\cM = (\cS, \cA, \cP, \cR, \gamma)$. Given an initial state $\bs_0 \in \cS$, at each time step $t \in \rN$, the agent takes an action $\ba_t \sim \pi_{\bx}(\ba \g \bs_t)$ from a \emph{policy} $\pi: \cS \times \cA \to [0, 1]$ parameterised by $\bx$. The agent obtains a reward $r_{t+1} \sim \cR(\bs_t, \ba_t, \bs_{t+1})$ based on the transition $\bs_{t+1} \sim \cP(\bs_{t+1} \g \bs_{t}, \ba_{t})$. The \emph{action-value} of the agent's policy given a state $\bs_0$ and action $\ba_0$ is given by $Q_{\bx}(\bs_0, \ba_0) \coloneqq \E{\sum_{t=0}^\infty \gamma^t r_{t+1} \g \bs_0, \ba_0, \pi_{\bx}}$ under discount rate $\gamma \in [0, 1)$. The corresponding \emph{value} of policy $\pi_{\bx}$ is given by $V_{\bx}(\bs_0) \coloneqq \Ed{\ba_0 \sim \pi_{\bx}(\ba | \bs_0)}{Q_{\bx}(\bs_0, \ba_0)}$.

The agent's problem is to learn a policy that maximises the value given an expectation over $\bs_0$, defined either by an initial state distribution in the episodic setting (e.g. Atari, \Cref{sec:empirics:atari}) or the stationary state-visitation distribution under the policy in the non-episodic setting (\Cref{sec:empirics:twocolor}). Central to RL is the notion of \emph{policy-improvement}, which takes a current policy $\pi_{\bx}$ and constructs a new policy $\pi_{\bx^\prime}$ such that $\E{V_{\bx^\prime}} \geq \E{V_{\bx}}$.
A common policy-improvement step is $\arg\max_{\bx^\prime} \Ed{a \sim \pi_{\bx^\prime}(a|s)}{Q_{\bx}(s, a)}$. 

Most works in meta-RL rely on actor-critic algorithms \citep{Sutton2000ac}. These treat the above policy-improvement step as an optimisation problem and estimate a \emph{policy-gradient} \citep{Williams:1991fu,Sutton2000ac} to optimise $\bx$. To estimate $V_{\bx}$, these introduce a \emph{critic} $v_{\bz}$ that is jointly trained with the policy. The policy is optimised under the current estimate of its value function, while the critic is tracking the value function by minimizing a Temporal-Difference (TD) error. Given a \emph{rollout} $\tau = (\bs_0, \ba_0, r_{1}, \bs_{1}, \ldots, r_T, \bs_{T})$, the objective is given by $f(\bx, \bz) = \epsilon_{\text{PG}} \, \ell_{\text{PG}}(\bx) + \epsilon_{\text{EN}} \, \ell_{\text{EN}}(\bx) + \epsilon_{\text{TD}} \, \ell_{\text{TD}}(\bz)$, $\epsilon_{\text{PG}}, \epsilon_{\text{EN}}, \epsilon_{\text{TD}} \in \rR_{+}$, where

\begin{equation}\label{eq:actor-critic}
\begin{aligned}
\ell_{\text{EN}}(\bx) &= \sum_{t \in \tau}\sum_{\ba \in \cA} \pi_{\bx}(\ba \g \bs_t) \log \pi_{\bx}(\ba \g \bs_t), \qquad
\ell_{\text{TD}}(\bz) = \frac12 \sum_{t \in \tau} {{\left(G^{(n)}_{t} - v_{\bz}(\bs_t)\right)}^2}, \\
\ell_{\text{PG}}(\bx) &= - \sum_{t \in \tau} \rho_t \log \pi_{\bx}(\ba_t \g \bs_t) \left(G^{(n)}_t - v_{\bz}(\bs_t) \right),
\end{aligned}
\end{equation}

where $\rho_t$ denotes an importance weight and $G^{(n)}_t$ denotes an $n$-step bootstrap target. Its form depends on the algorithm; in \Cref{sec:empirics:twocolor}, we generate rollouts from $\pi_{\bx}$ (on-policy), in which case $\rho_t=1$ and $G^{(n)}_t = \sum_{i=0}^{(n-1)} \gamma^{i} r_{t+i+1} + \gamma^{n} v_{\bar{\bz}}(\bs_{t+n}) \, \forall t$, where $\bar{\bz}$ denotes fixed (non-differentiable) parameters. In the off-policy setting (\Cref{sec:empirics:atari}), $\rho$ corrects for sampling bias and $G_t^{(n)}$ is similarly adjusted.

\clearpage

\subsection{A non-stationary and non-episodic grid world}\label{sec:empirics:twocolor}

\begin{figure}[t!]
    \centering
    \begin{subfigure}{0.6\linewidth}
      \centering
      \includegraphics[width=\linewidth]{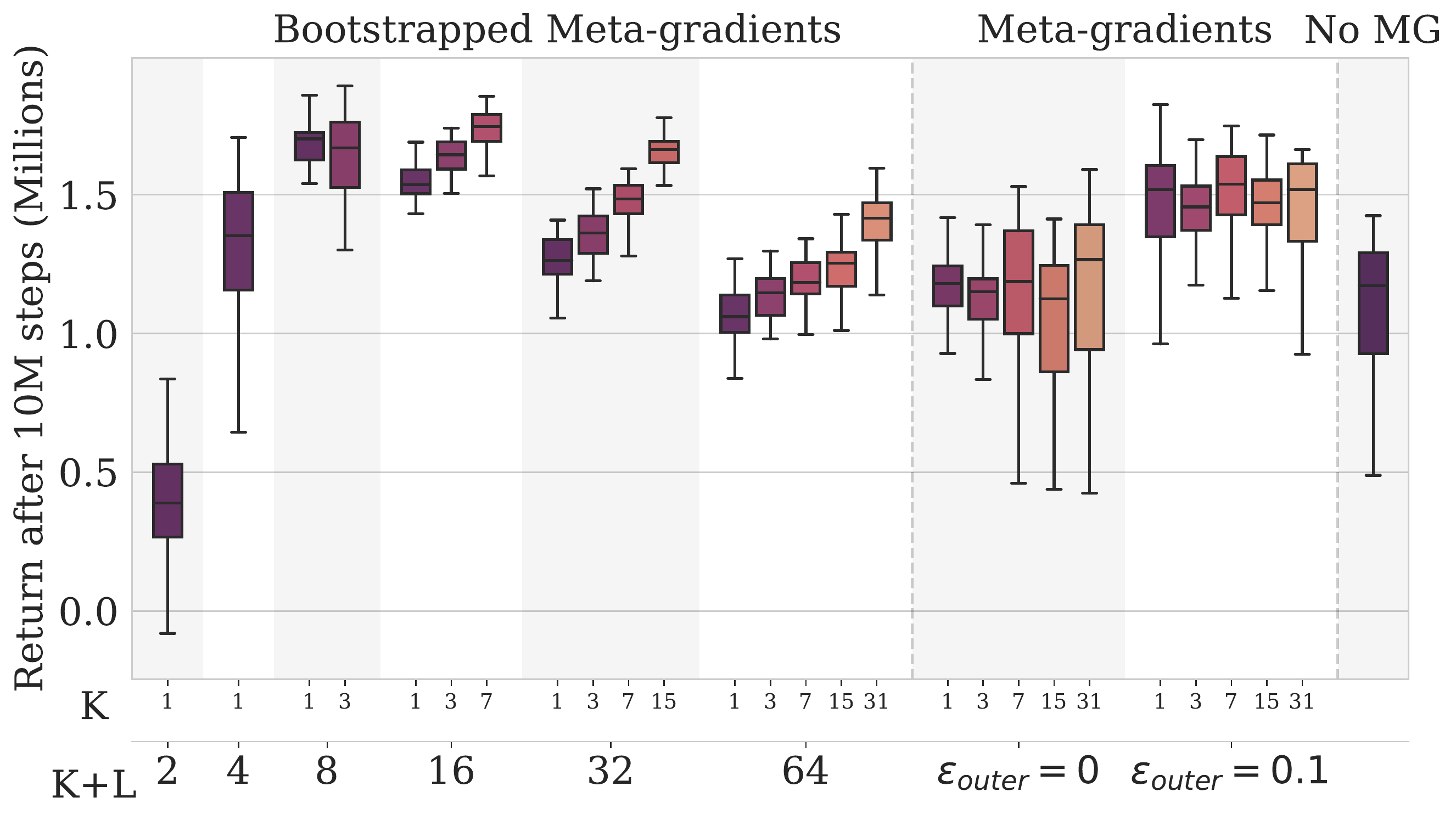}
    \end{subfigure}
    \begin{subfigure}{0.36\linewidth}
      \centering
      \includegraphics[width=\linewidth]{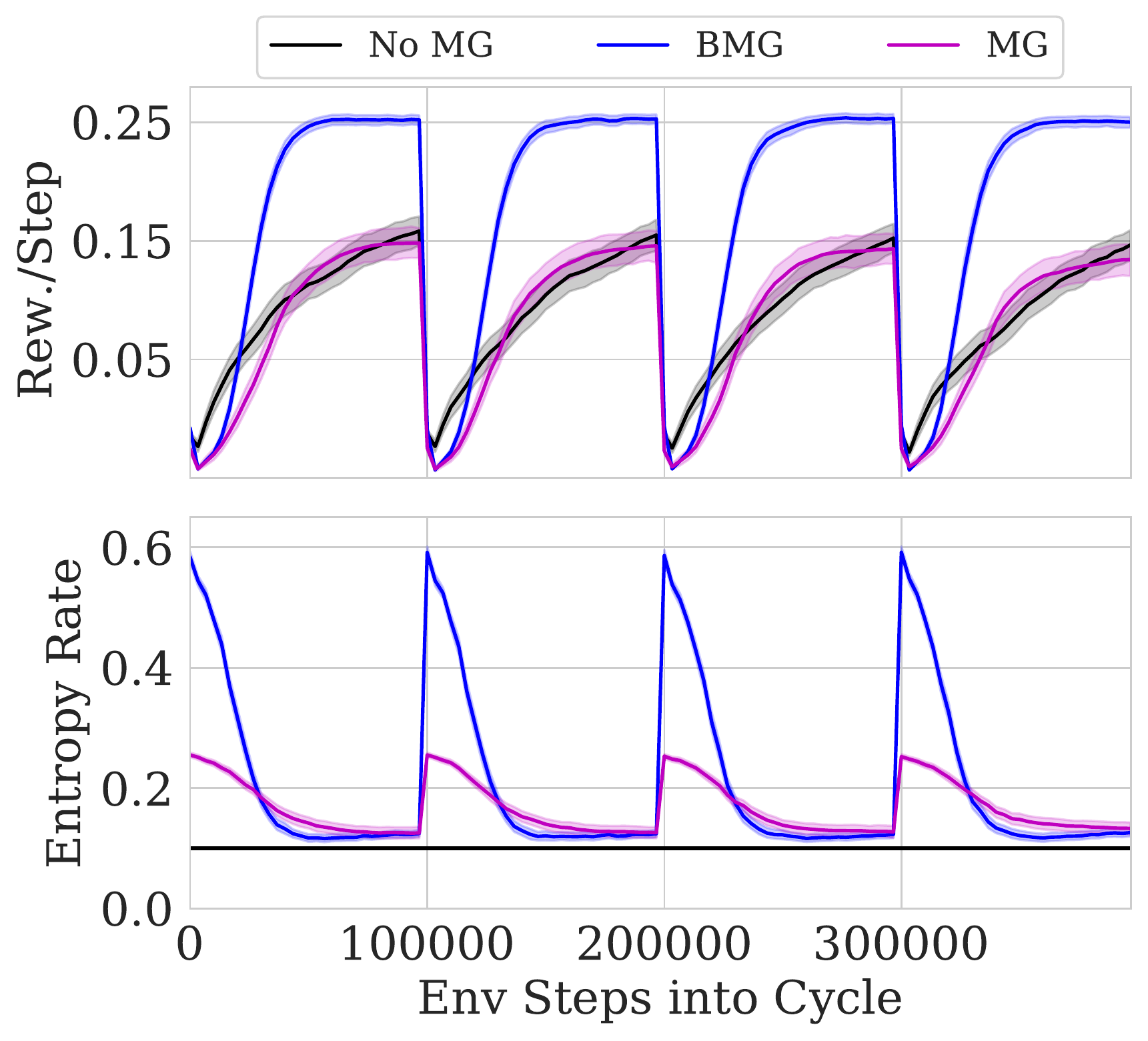}
    \end{subfigure}
    \caption{Non-stationary grid-world (\Cref{sec:empirics:twocolor}). Left: Comparison of total returns under an actor-critic agent over 50 seeds. Right: Learned entropy-regularization schedules. The figure depicts the average regularization weight ($\epsilon$) over 4 task-cycles at 6M steps in the environment.}
    \label{fig:twocolor}
\end{figure}

\begin{wrapfigure}{r}{0.35\linewidth}
    \vspace*{-18pt}
    \begin{subfigure}{\linewidth}
      \centering
      \includegraphics[width=\linewidth]{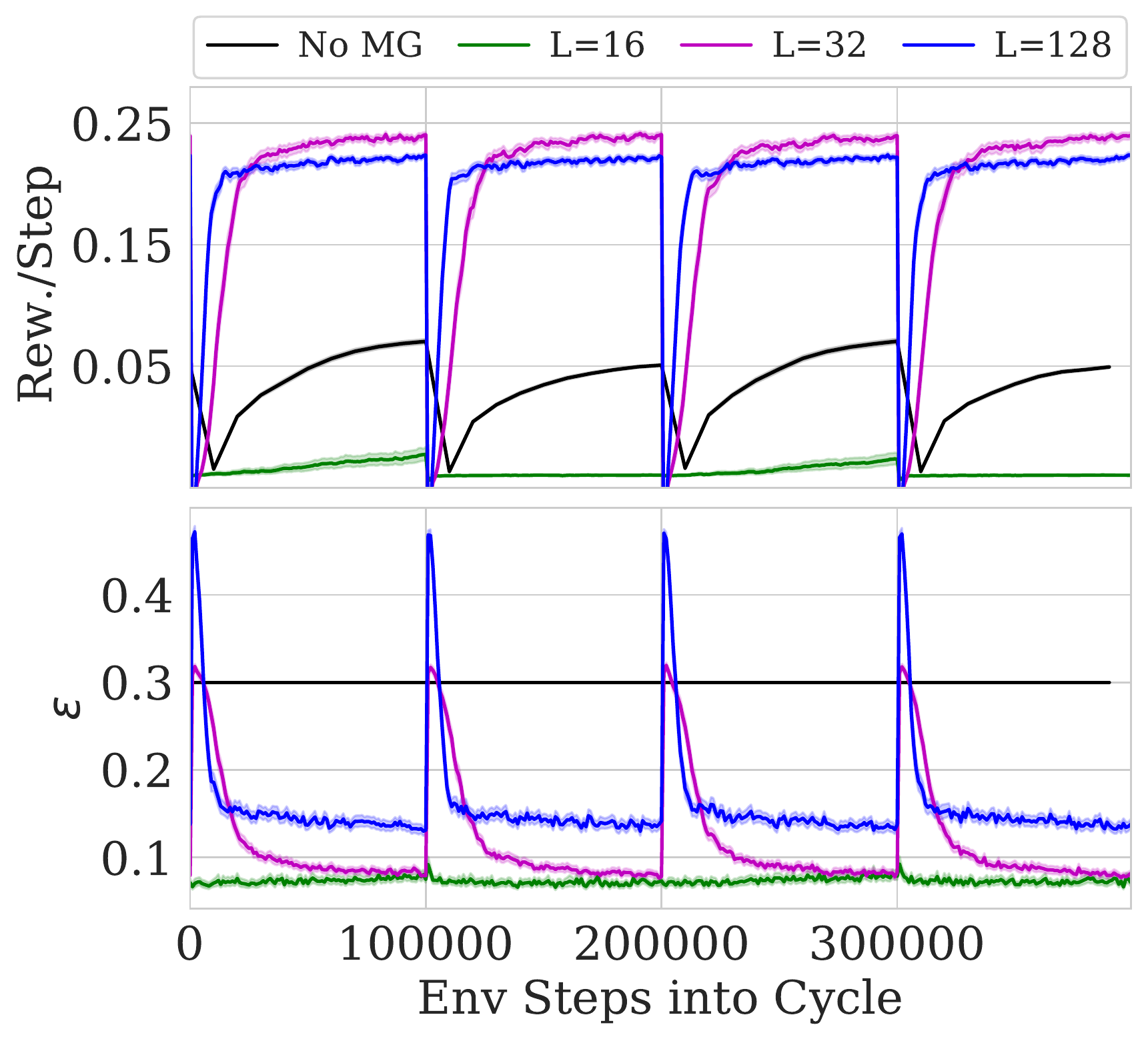}
    \end{subfigure}        
    \caption{\bmg $\varepsilon$-greedy exploration under a $Q(\lambda)$-agent.}\label{fig:twocolor-qlearning}
    \vspace*{-16pt}
\end{wrapfigure}

We begin with a tabular grid-world with two items to collect. Once an item is collected, it is randomly re-spawned. One item yields a reward of $+1$ and the other a reward of $-1$. The reward is flipped every 100,000 steps. To succeed, a memory-less agent must efficiently re-explore the environment. We study an on-policy actor-critic agent with $\epsilon_{\text{PG}}=\epsilon_{\text{TD}}=1$. As baseline, we tune a fixed entropy-rate weight $\epsilon = \epsilon_{\text{EN}}$. We compare against agents that meta-learn $\epsilon$ online. For \mg, we use the actor-critic loss as meta-objective ($\epsilon$ fixed), as per \Cref{eq:mg}. The setup is described in full in \Cref{app:twocolor:setup}

\textbf{\bmg} Our primary focus is on the effect of bootstrapping. Because this setup is fully online, we can generate targets using the most recent $L-1$ parameter updates and a final agent parameter update using $\epsilon=0$. Hence, the computational complexity of \bmg is constant in $L$ under this implementation (see \Cref{app:twocolors:ac}). We define the matching function as the KL-divergence between $\bx^{(K)}$ and the target, $\mu(\tilde{\bx}, \bx^{(K)}(\bw)) = \KL{\pi_{\tilde{\bx}}}{\pi_{\bx^{(K)}}}$.

\Cref{fig:twocolor} presents our main findings. Both \mg and \bmg learn adaptive entropy-rate schedules that outperform the baseline. However, \mg fails if $\epsilon=0$ in the meta-objective, as it becomes overly greedy (\Cref{fig:twocolor-entropy-schedule}). \mg shows no clear benefit of longer meta-learning horizons, indicating that myopia stems from the objective itself. In contrast, \bmg exhibits greater adaptive capacity and is able to utilise greater meta-learning horizons. Too short horizons induce myopia, whereas too long prevent efficient adaptation. For a given horizon, increasing $K$ is uniformly beneficial. Finally, we find that \bmg outperforms \mg for a given horizon without backpropagating through all updates. For instance, for $K=8$, \bmg outperforms \mg with $K=1$ and $L=7$. Our ablation studies (\Cref{app:twocolors:ac}) show that increasing the target bootstrap length counters myopia; however, using the meta-learned update rule for all $L$ steps can derail meta-optimization. 

Next, we consider a new form of meta-learning: learning $\varepsilon$-greedy exploration in a $Q(\lambda)$-agent (precise formulation in \Cref{app:twocolors:q}). While the $\varepsilon$ parameter has a similar effect to entropy-regularization, $\varepsilon$ is a parameter applied in the behaviour-policy while acting. As it does not feature in the loss function, it is not readily optimized by existing meta-gradient approaches. In contrast, \bmg can be implemented by matching the policy derived from a target action-value function, precisely as in the actor-critic case. An implication is that \bmg can meta-learn \emph{without} backpropagating through the update rule. Significantly, this opens up to meta-learning (parts of) the \emph{behaviour policy}, which is hard to achieve in the \mg setup as the behaviour policy is not used in the update rule. \Cref{fig:twocolor-qlearning} shows that meta-learning $\varepsilon$-greedy exploration in this environment significantly outperforms the best fixed $\varepsilon$ found by hyper-parameter tuning. As in the actor-critic case, we find that \bmg responds positively to longer meta-learning horizons (larger $L$); see \Cref{app:twocolors:q}, \Cref{fig:twocolor:ablation:qlambda} for detailed results.

\subsection{Atari}\label{sec:empirics:atari}

High-performing RL agents tend to rely on distributed learning systems to improve data efficiency \citep{Kapturowski:2018re,Espeholt:2018impala}. This presents serious challenges for meta-learning as the policy gradient becomes noisy and volatile due to off-policy estimation \citep{Xu:2018metagradient,Zahavy:2020se}. \Cref{thm:btm} suggests that \bmg can be particularly effective in this setting under the appropriate distance function. To test these predictions, we adapt the Self-Tuning Actor-Critic \citep[\stacx;][]{Zahavy:2020se} to meta-learn under \bmg on the 57 environments in the Atari Arcade Learning Environment \citep[ALE;][]{Bellemare:2013ar}. 

\begin{figure}[t!]
  \centering
  \begin{subfigure}{0.49\linewidth}
    \centering
    \includegraphics[width=\linewidth]{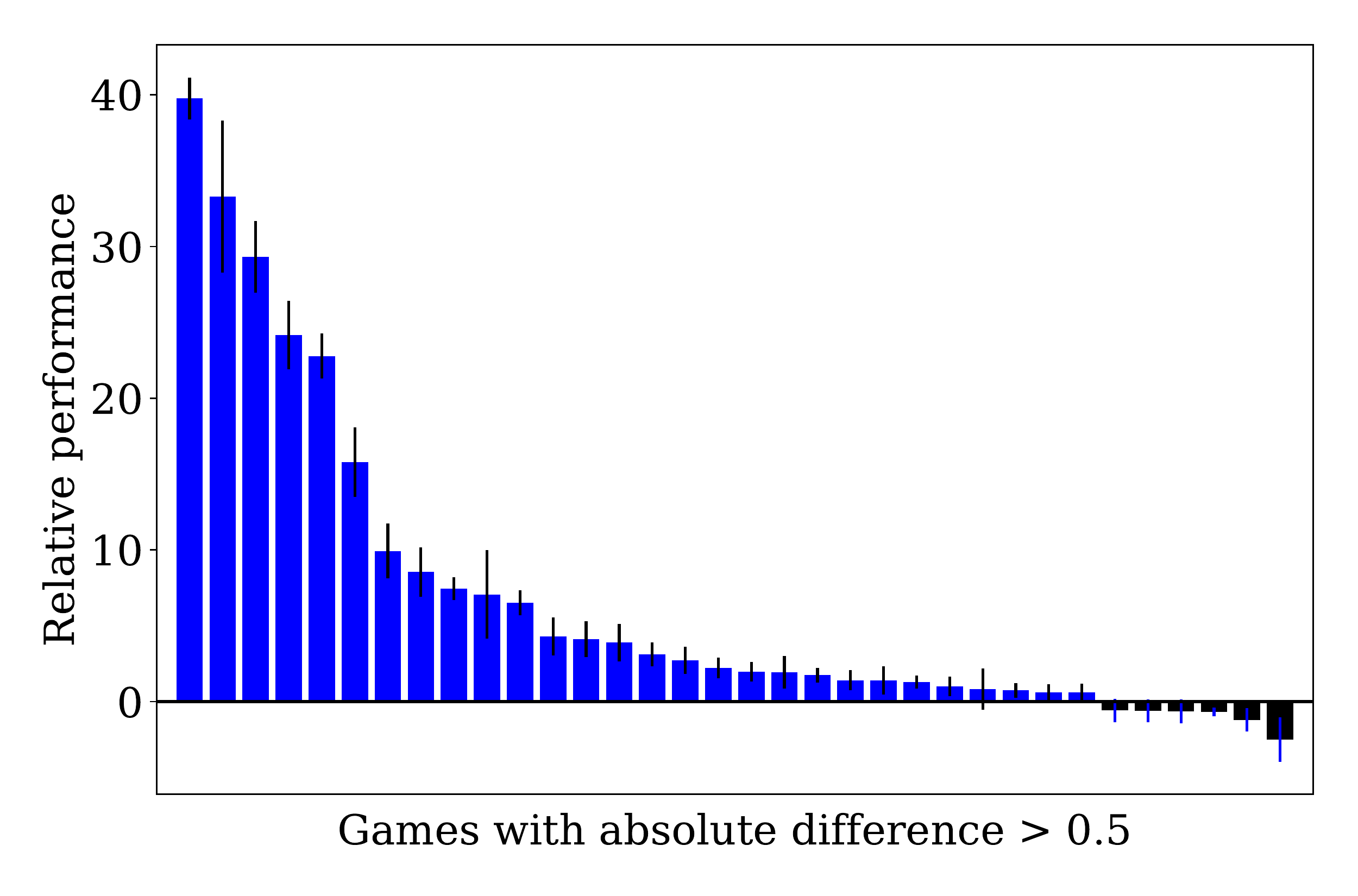}
  \end{subfigure}
  \begin{subfigure}{0.49\linewidth}
    \centering
    \includegraphics[width=\linewidth]{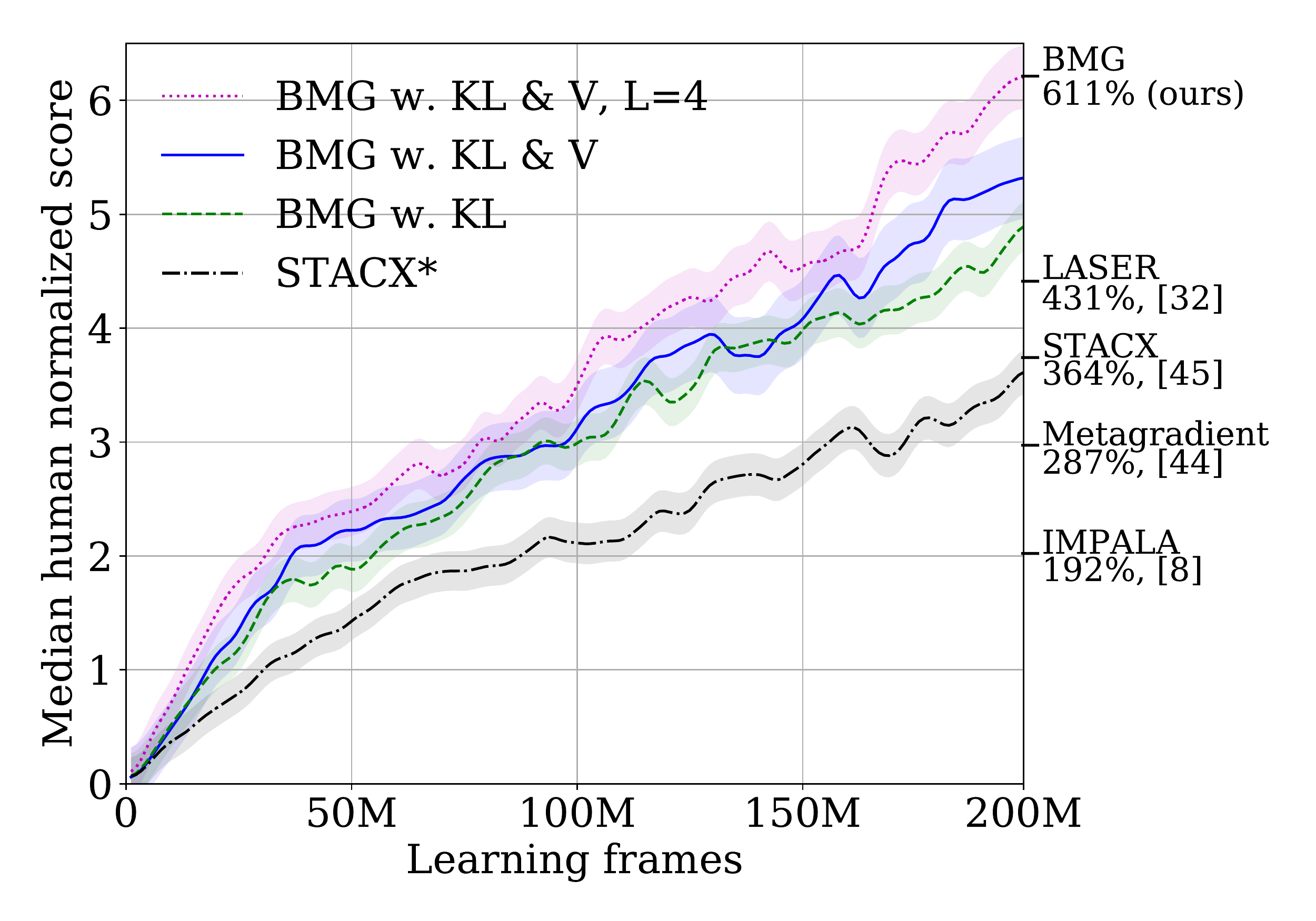}
  \end{subfigure}
  \caption{Human-normalized score across the 57 games in Atari ALE. Left: per-game difference in score between \bmg and our implementation of STACX$^*$ at 200M frames. Right: Median scores over learning compared to published baselines. Shading depict standard deviation across 3 seeds.}
  \label{fig:atari-hns}
\end{figure}

\textbf{Protocol} We follow the original IMPALA setup \citep{Espeholt:2018impala}, but we do not downsample or gray-scale inputs. Following the literature, we train for 200 million frames and evaluate agent performance by median Human Normalized Score (HNS) across 3 seeds \citep{Espeholt:2018impala,Xu:2018metagradient,Zahavy:2020se}.

\textbf{STACX} The IMPALA actor-critic agent runs multiple actors asynchronously to generate experience for a centralized learner. The learner uses truncated importance sampling to correct for off-policy data in the actor-critic update, which adjusts $\rho$ and $\hat{V}$ in \Cref{eq:actor-critic}. The STACX agent \citep{Zahavy:2020se} is a state-of-the-art meta-RL agent. It builds on IMPALA in two ways: (1) it introduces auxiliary tasks in the form of additional objectives that differ only in their hyper-parameters; (2) it meta-learns the hyper-parameters of each loss function (main and auxiliary). Meta-parameters are given by $\bw = (\gamma^i, \epsilon^i_{\text{PG}}, \epsilon^i_{\text{EN}}, \epsilon^i_{\text{TD}}, \lambda^i, \alpha^i )_{i=1}^{1+n}$, where $\lambda$ and $\alpha$ are hyper-parameters of the importance weighting mechanism and $n=2$ denotes the number of auxiliary tasks. STACX uses the IMPALA objective as the meta-objective with $K=1$. See \Cref{app:atari} for a complete description.

\textbf{\bmg} We conduct ceteris-paribus comparisons that only alter the meta-objective: agent parameter updates are \emph{identical} to those in STACX. When $L=1$, the target takes a gradient step on the original IMPALA loss, and hence the \emph{only} difference is the form of the meta-objective; they both use the same data and gradient information. For $L>1$, the first $L-1$ steps bootstrap from the meta-learned update rule itself. To avoid overfitting, each of the $L-1$ steps use separate replay data; this extra data is not used anywhere else. To understand matching functions, we test policy matching and value matching. Policy matching is defined by $\mu(\tilde{\bx}, \bx^{(K)}(\bw)) = \KL{\pi_{\tilde{\bx}}}{\pi_{\bx^{(1)}}}$; we also test a symmetric KL-divergence (KL-S). Value matching is defined by $\mu(\tilde{\bz}, \bz^{(1)}(\bw)) \coloneqq \E{(v_{\tilde{z}} - v_{z^{(1)}})^2}$. 

\Cref{fig:atari-hns} presents our main comparison. \bmg with $L=1$ and policy-matching (KL) obtains a median HNS of \textasciitilde$500\%$, compared to \textasciitilde$350\%$ for STACX. Recall that for $L=1$, \bmg uses the same data to compute agent parameter update, target update, and matching loss; hence this is an apples-to-apples comparison. Using both policy matching and value matching (with $0.25$ weight on the latter) further improves the score to \textasciitilde$520\%$ and outperforms STACX across almost all 57 games, with a few minor exceptions (left panel, \Cref{fig:atari-hns}). These results are obtained \emph{without} tuning hyper-parameters for \bmg. Finally, extending the meta-learning horizon by setting $L=4$ and adjusting gradient clipping from $.3$ to $.2$ obtains a score of \textasciitilde$610\%$.

\begin{figure}[t!]
  \centering
  \begin{subfigure}{0.32\linewidth}
    \centering
    \includegraphics[width=\linewidth]{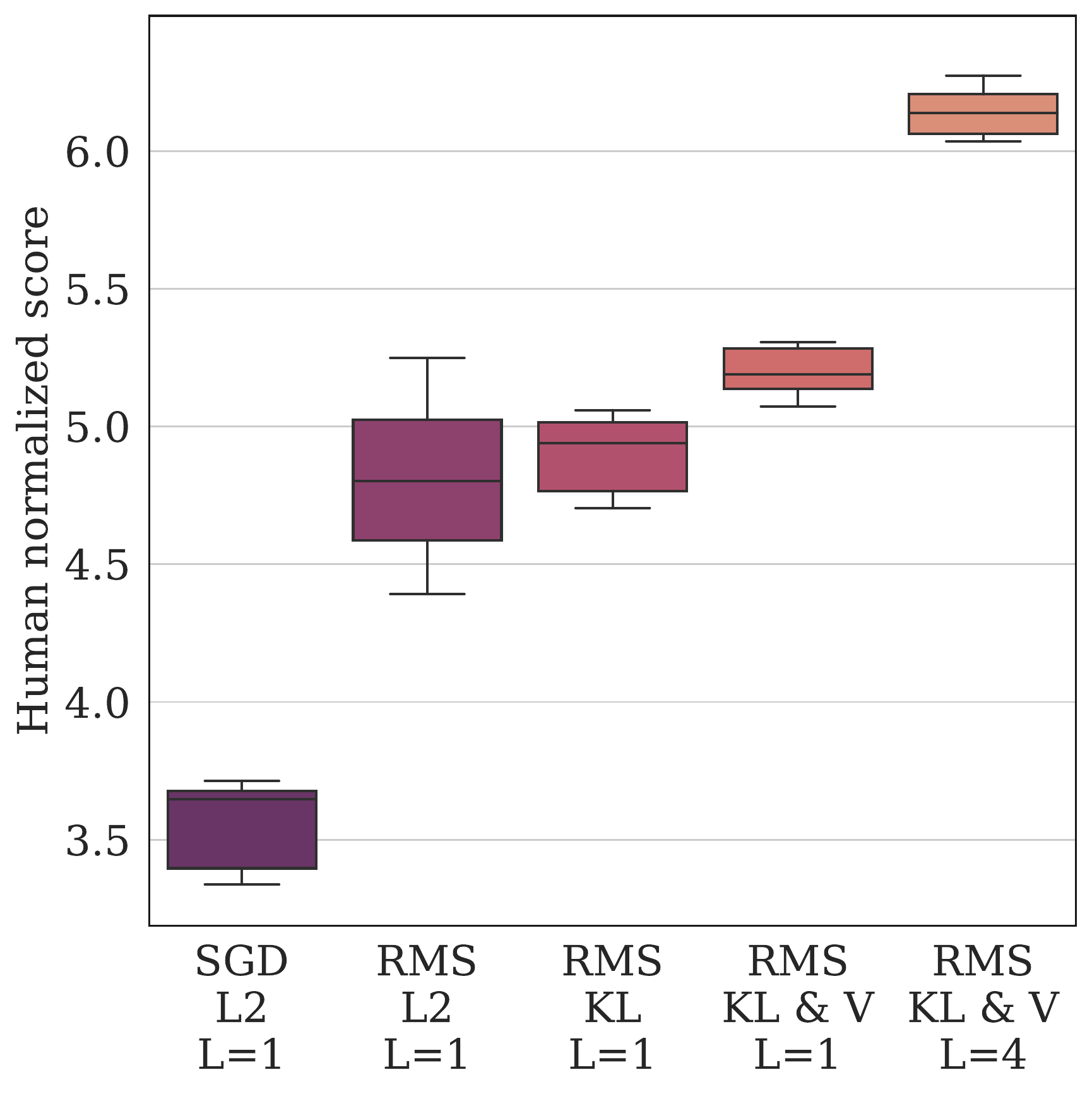}
  \end{subfigure}
  \begin{subfigure}{0.32\linewidth}
    \centering
    \raisebox{1mm}{\includegraphics[width=\linewidth]{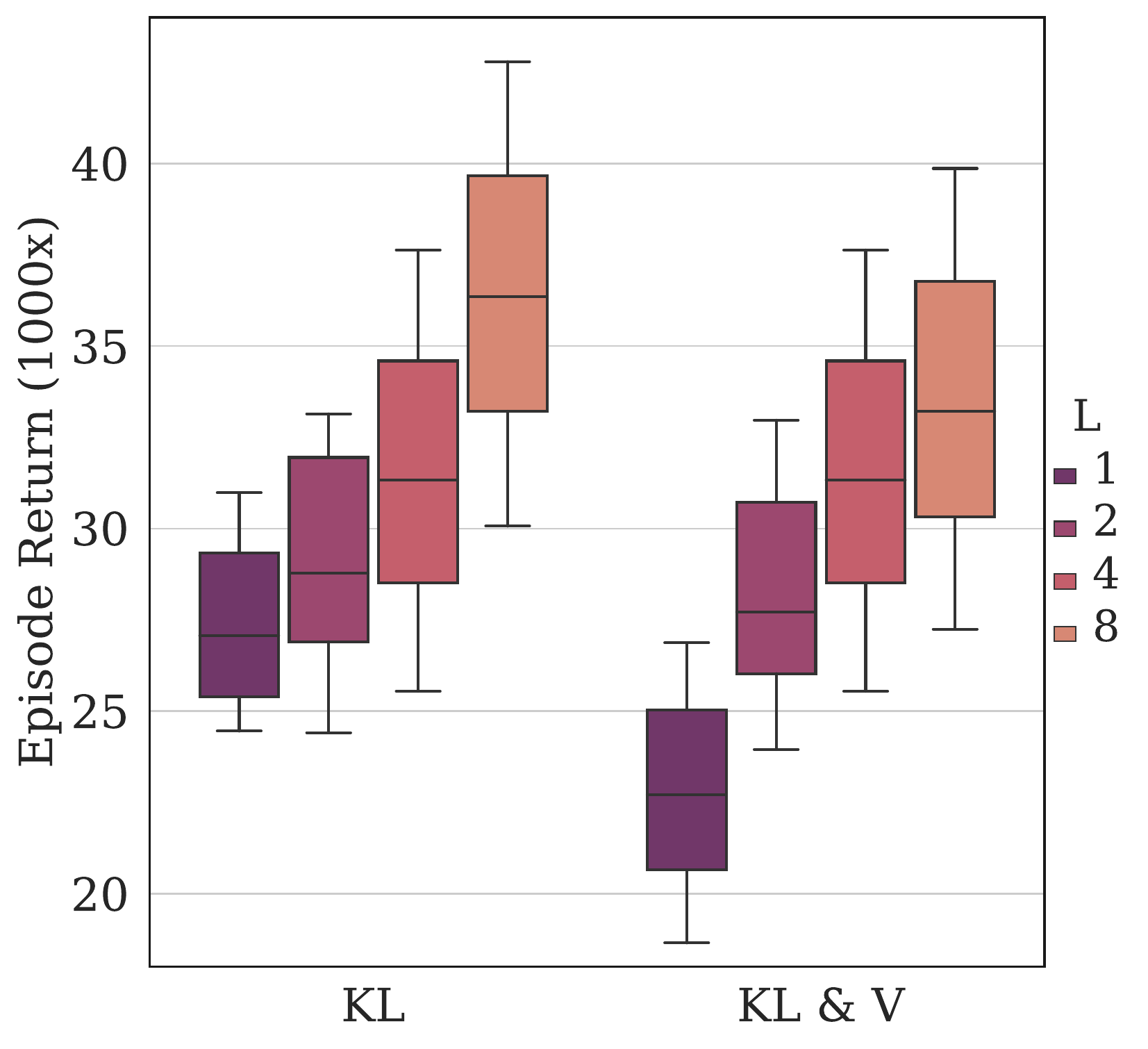}}
  \end{subfigure}
  \begin{subfigure}{0.32\linewidth}
    \centering
    \includegraphics[width=\linewidth]{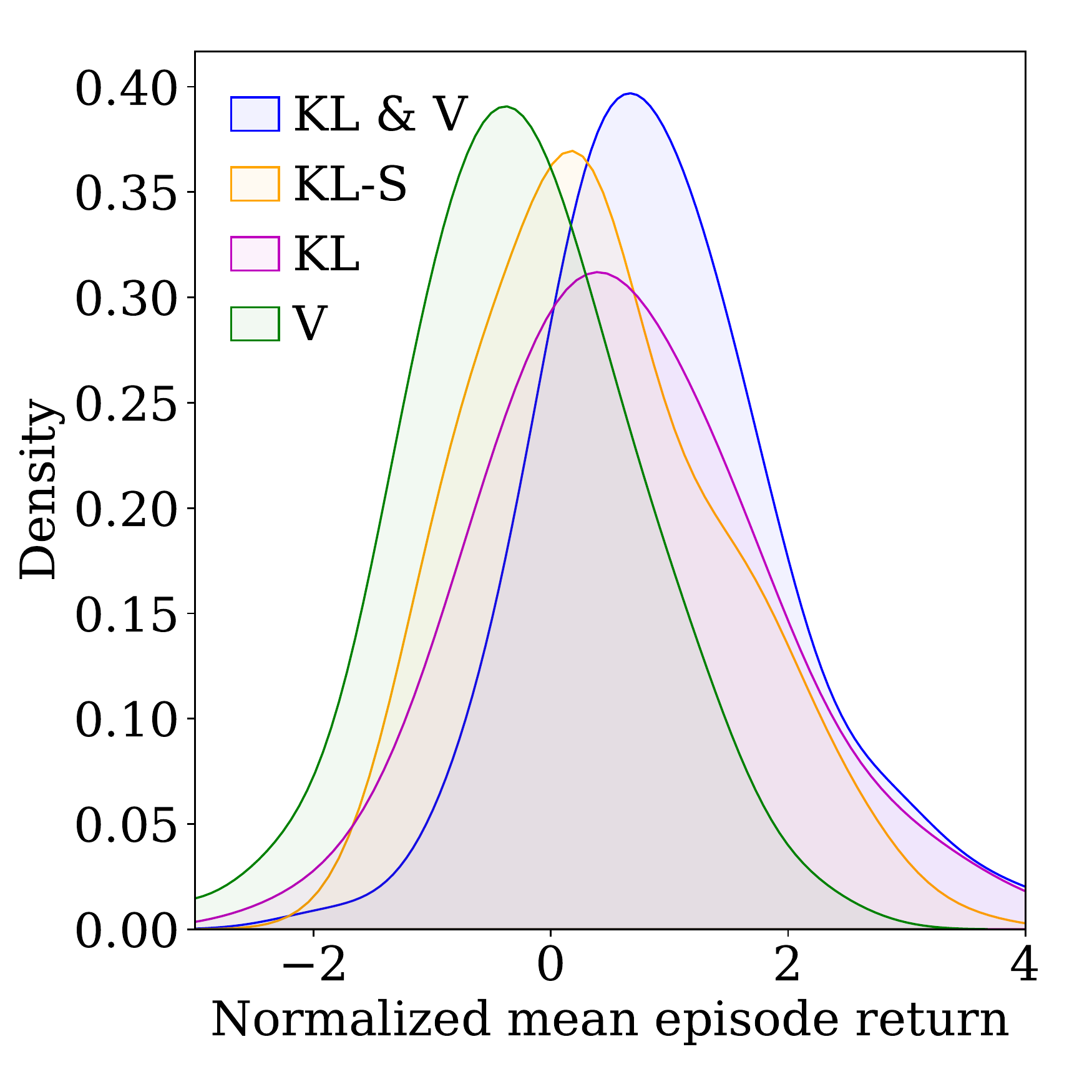}
  \end{subfigure}
  \caption{Ablations on Atari. Left: human normalized score decomposition of \tb w.r.t. optimizer (SGD, RMS), matching function (L2, KL, KL \& V), and bootstrap steps ($L$). \bmg with (SGD, L2, $L\!=\!1$) is equivalent to STACX. Center: episode return on Ms Pacman for different $L$. Right: distribution of episode returns over all 57 games, normalized per-game by mean and standard deviation. All results are reported between 190-200M frames over 3 independent seeds.}
  \label{fig:atari-hns-ablation}
\end{figure}

In \Cref{fig:atari-hns-ablation}, we turn to ablations. In the left-panel, we deconstruct \bmg into STACX (i.e., \mg) and compare performances. We find that roughly 45\% of the performance gains comes from curvature correction (given by using RMSProp in the target bootstrap). The matching function can further control curvature to obtain performance improvements, accounting for roughly 25\%. Finally, increasing $L$, thereby reducing myopia, accounts for about 30\% of the performance improvement. Comparing the cosine similarity between consequtive meta-gradients, we find that \bmg improves upon STACX by two orders of magnitude. Detailed ablations in \Cref{app:atari:gains}.

The center panel of \Cref{fig:atari-hns-ablation} provides a deep-dive in the effect of increasing the meta-learning horizon ($L>1$) in Ms Pacman. Performance is uniformly increasing in $L$, providing further support that \bmg can increase the effective meta-horizon without increasing the number of update steps to backpropagate through. A more in-depth analysis \Cref{app:atari:LvsK} reveals that $K$ is more sensitive to curvature and the quality of data. However, bootstrapping \emph{only} from the meta-learner for all $L$ steps can lead to degeneracy (\Cref{app:atari:replay}, \Cref{fig:atari:ablation:LR}). In terms of replay (\Cref{app:atari:replay}), while standard \mg degrades with more replay, \bmg benefits from more replay in the target bootstrap.

The right panel of \Cref{fig:atari-hns-ablation} studies the effect of the matching function. Overall, joint policy and value matching exhibits best performance. In contrast to recent work \citep{Tomar:2020mdpo,Hessel:2021muesli}, we do not find that reversing the KL-direction is beneficial. Using only value-matching results in worse performance, as it does not optimise for efficient policy improvements. Finally, we conduct detailed analysis of scalability in \Cref{app:atari:compute}. While \bmg is 20\% slower for $K=1,L=1$ due to the target bootstrap, it is 200\% faster when \mg uses $K=4$ and \bmg uses $K=1,L=3$. 

\section{Multi-Task Few-Shot Learning}\label{sec:fs}

Multi-task meta-learning introduces an expectation over task objectives. \bmg is applied by computing task-specific bootstrap targets, with the meta-gradient being the expectation over task-specific matching losses. For a general multi-task formulation, see \Cref{app:fs}; here we focus on the few-shot classification paradigm. Let $f_{\cD}: \cX \to \rR$ denote the negative log-likelihood loss on some data $\cD$. A \emph{task} is defined as a pair of datasets $(\cD_{\tau}, \cD_{\tau}^\prime)$, where $\cD_{\tau}$ is a training set and $\cD_{\tau}^\prime$ is a validation set. In the $M$-shot-$N$-way setting, each task has $N$ classes and $\cD_{\tau}$ contains $M$ observations per class.

The goal of this experiment is to study how the \bmg objective behaves in the multi-task setting. For this purpose, we focus on the canonical MAML setup \citep{Finn:2017maml}, which meta-learns an initialisation $\bx^{(0)}_\tau = \bw$ for SGD that is shared across a task distribution $p(\tau)$. Adaptation is defined by $\bx^{(k)}_\tau = \bx^{(k-1)}_\tau + \alpha \nabla f_{\cD_{\tau}}(\bx^{(k-1)}_\tau)$, with $\alpha \in \rR_{+}$ fixed. The meta-objective is the validation loss in expectation over the task distribution: $\mathbb{E}[{f_{\cD_{\tau}^\prime}(\bx^{(K)}_\tau(\bw))}]$. Several works have extended this setup by altering the update rule ($\varphi$) \citep{Lee:2018mtnets,Zintgraf:2019cavia,Park:2019mc,Flennerhag:2020wg}. As our focus is on the meta-objective, we focus on comparisons with MAML.

\begin{figure}[t!]
  \centering
  \begin{subfigure}{0.32\linewidth}
    \centering
    \includegraphics[width=\linewidth]{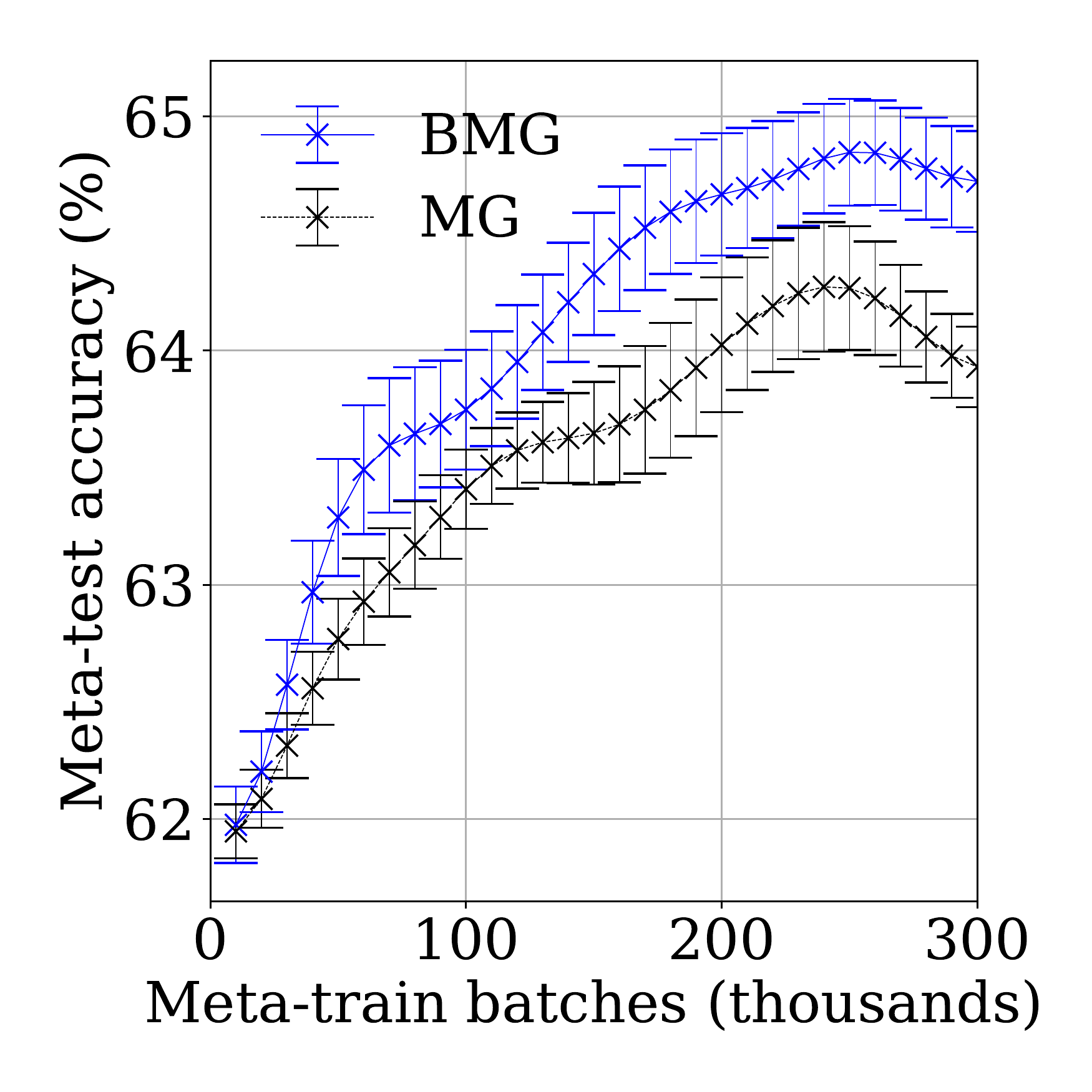}
  \end{subfigure}
  \begin{subfigure}{0.32\linewidth}
    \centering
    \includegraphics[width=\linewidth]{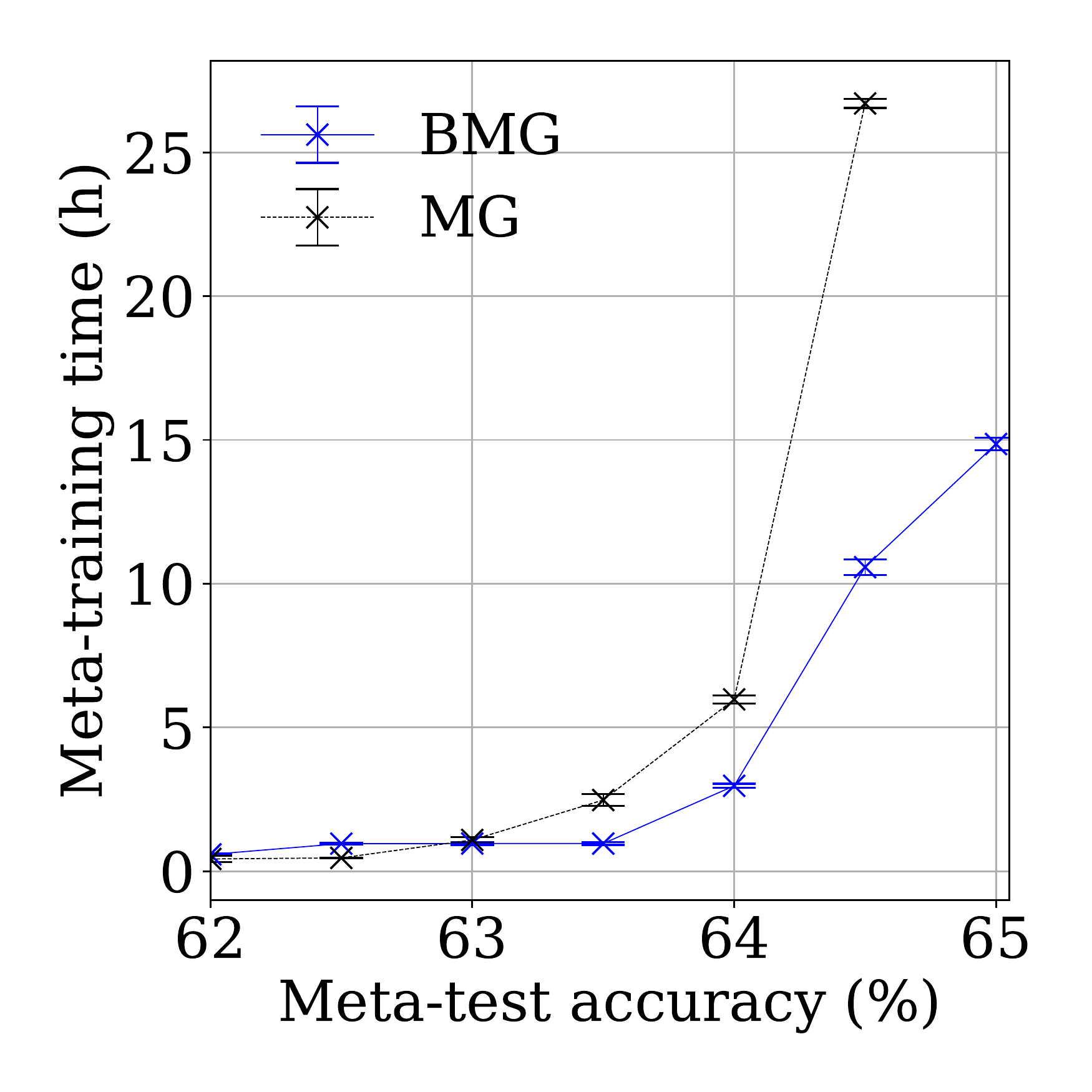}
  \end{subfigure}
  \begin{subfigure}{0.32\linewidth}
    \centering
    \includegraphics[width=\linewidth]{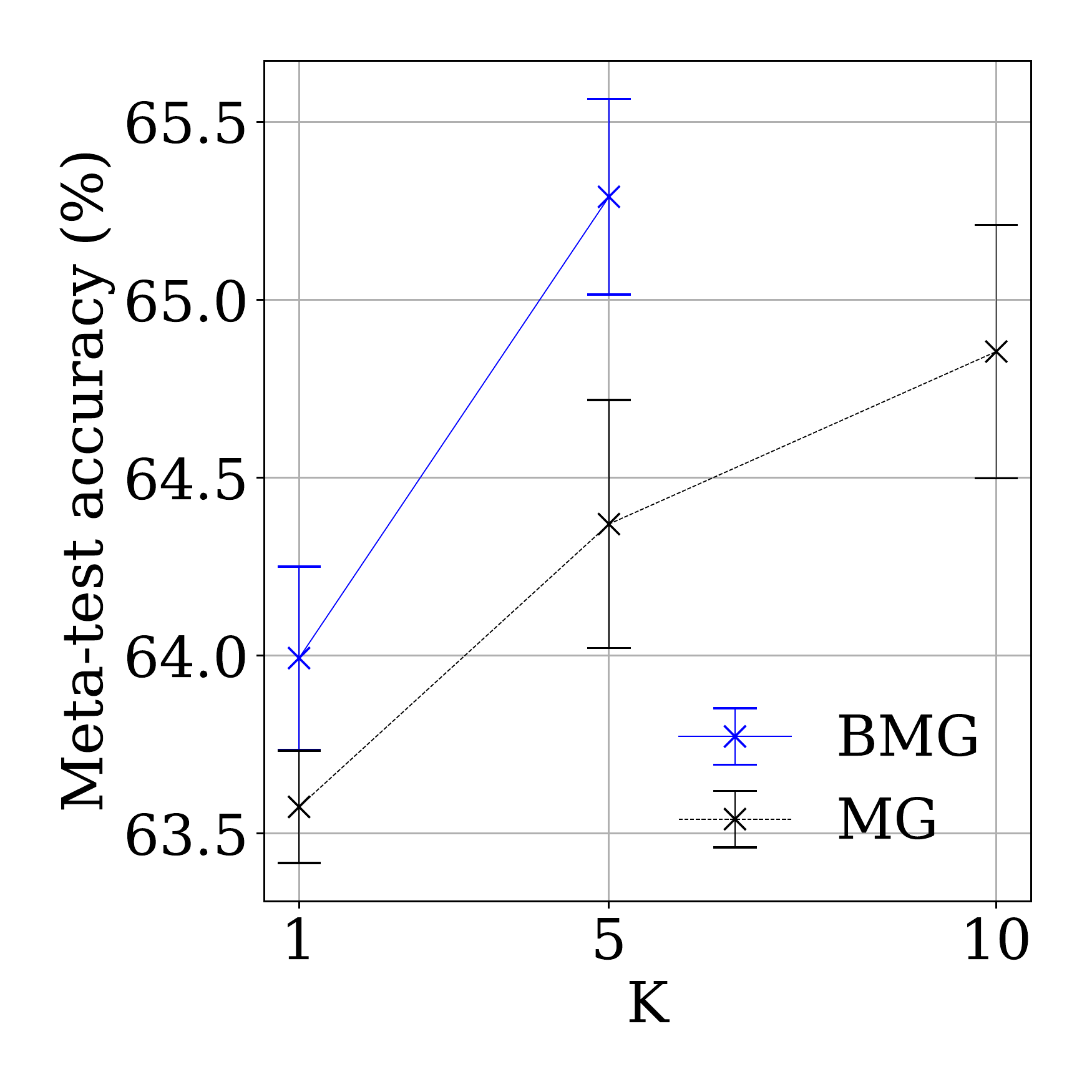}
  \end{subfigure}
  \caption{MiniImagenet 5-way-5-shot meta-test performance. Left: performance as a function of meta-training batches. Center: performance as a function of wall-clock time. Right: best reported performance under each $K$. Error bars depict standard deviation across 3 seeds.}
  \label{fig:fs-main}
\end{figure}

\textbf{\bmg} For each task, a target $\tilde{\bx}_\tau$ is bootstrapped by taking $L$ SGD steps from $\bx^{(K)}_\tau$ using validation data. The \bmg objective is the expected distance, $\mathbb{E}[\mu(\tilde{\bx}_\tau, \bx^{(K)}_\tau)]$. The KL-divergence as matching function has an interesting connection to \mg. The target $\tilde{\bx}_\tau$ can be seen as an ``expert'' on task $\tau$ so that \bmg is a form of distillation \citep{Hinton:2015dis}. The log-likelihood loss used by \mg is also a KL divergence, but w.r.t. a ``cold'' expert that places all mass on the true label. Raising the temperature in the target can allow \bmg to transfer more information \citep{Hinton:1987fa}.

\textbf{Setup} We use the MiniImagenet benchmark \citep{Vinyals:2016uj} and study two forms of efficiency: for \emph{data efficiency}, we compare meta-test performance as function of the number of meta-training batches; for \emph{computational efficiency}, we compare meta-test performance as a function of training time. To reflect what each method would achieve for a given computational budget, we report meta-test performance for the hyper-parameter configuration with best meta-validation performance. For \mg, we tune the meta-learning rate $\beta \in \{10^{-3}, 10^{-4}\}$, $K\in\{1, 5, 10\}$, and options to use first-order approximations (\citep[FOMAML;][]{Finn:2017maml} or \citep[ANIL;][]{Raghu:2020ANIL}). For \bmg, we tune $\beta \in \{10^{-3}, 10^{-4}\}$, $K\in\{1,5\}$, as well as $L\in\{1,5,10\}$, and the direction of the KL.

The left panel of \Cref{fig:fs-main} presents results on data efficiency. For few meta-updates, \mg and \bmg are on par. For 50 000 meta-updates and beyond, \bmg achieves strictly superior performance, with the performance delta increasing over meta-updates. 
The central panel presents results on computational efficiency; we plot the time required to reach a given meta-test performance. This describes the relationship between performance and computational complexity. We find \bmg exhibits better scaling properties, reaching the best performance of \mg in approximately half the time.  
Finally, in the right panel, we study the effect of varying $K$. \bmg achieves higher performance for both $K=1$ and $K=5$. We allow \mg to also use $K=10$, but this did not yield any significant gains. We conduct an analysis of the impact \bmg has on curvature and meta-gradient variance in \Cref{app:fs:analysis}. To summarise, we find that \bmg significantly improves upon the \mg meta-objective, both in terms of data efficiency, computational efficiency, and final performance.

\section{Conclusion}\label{sec:conclusion}

In this paper, we have put forth the notion that efficient meta-learning does not require the meta-objective to be expressed directly in terms of the learner's objective. Instead, we present an alternative approach that relies on having the meta-learner match a desired target.
Here, we bootstrap from the meta-learned update rule itself to produce future targets. While using the meta-learned update rule as the bootstrap allows for an open-ended meta-learning process, some grounding is necessary. 
As an instance of this approach, we study bootstrapped meta-gradients, which can guarantee performance improvements under appropriate choices of targets and matching functions that can be larger than those of standard meta-gradients. Empirically, we observe substantial improvements on Atari and achieve a new state-of-the-art, while obtaining significant efficiency gains in a multi-task meta-learning setting. We explore new possibilities afforded by the target-matching nature of the algorithm and demonstrate that it can learn to explore in an $\epsilon$-greedy $Q$-learning agent.

\clearpage

\subsubsection*{Acknowledgements}

The authors would like to thank Guillaume Desjardins, Junhyuk Oh, Louisa Zintgraf, Razvan Pascanu, and Nando de Freitas for insightful feedback on earlier versions of this paper. The author are also grateful for useful feedback from anonymous reviewers, that helped improve the paper and its results. This work was funded by DeepMind.

\bibliographystyle{iclr2022_conference}
\bibliography{refs}

\clearpage

\appendix
  \vbox{%
    \hsize\textwidth
    \linewidth\hsize
    \hrule
    \vskip 0.05in
    \centering
    {\LARGE\bf Bootstrapped Meta-Learning: Appendix \par}
    \vskip 0.1in
    \hrule
  }

\section*{Contents}

\begin{itemize}[leftmargin=*]
\item[] \Cref{app:proofs}: proofs accompanying \Cref{sec:analysis}.
\item[] \Cref{app:twocolor}: non-stationary Grid-World (\Cref{sec:empirics:twocolor}).
\item[] \Cref{app:atari}: ALE Atari (\Cref{sec:empirics:atari}).    
\item[] \Cref{app:fs}: Multi-task meta-learning, Few-Shot Learning on MiniImagenet (\Cref{sec:fs}).    
\end{itemize}

\section{Proofs}\label{app:proofs}
\setcounter{lem}{0}
\setcounter{defn}{0}
\setcounter{thm}{0}
\setcounter{cor}{0}

This section provides complete proofs for the results in \Cref{sec:analysis}. Throughout, we assume that $(\bx^{(0)}, \bh^{(0)}, \bw)$ is given and write $\bx \coloneqq \bx^{(0)}, \, \bh \coloneqq \bh^{(0)}$. We assume that $\bh$ evolves according to some process that maps a history $H^{(k)} \coloneqq (\bx^{(0)}, \bh^{(0)}, \ldots, \bx^{(k-1)}, \bh^{(k-1)}, \bx^{(k)})$ into a new learner state $\bh^{(k)}$, including any sampling of data (c.f. \Cref{sec:btm}). Recall that we restrict attention to the noiseless setting, and hence updates are considered in expectation. We define the \emph{map} $\bx^{(K)}(\bw)$ by
\begin{align*}
\bx^{(1)} &= \bx^{(0)} + \, \varphi\big(\bx^{(0)}, \bh^{(0)}, \bw\big)\\
\bx^{(2)} &= \bx^{(1)} + \, \varphi\big(\bx^{(1)}, \bh^{(1)}, \bw\big)\\
\vphantom{\bx^{(2)}} &\,\,\,\vdots \\
\bx^{(K)} &= \bx^{(K-1)} + \, \varphi\big(\bx^{(K-1)}, \bh^{(K-1)}, \bw\big).
\end{align*}

The derivative $\frac{\partial}{\partial \bw} \bx^{(K)}(\bw)$ differentiates through each step of this process \citep{Hochreiter:2001le}. As previously stated, we assume $f$ is Lipschitz and that $\bx^{(K)}$ is Lipschitz w.r.t. $\bw$. We are now in a position to prove results from the main text. We re-state them for convenience.

\begin{lem}[\mg Descent]
Let $\bw^\prime$ be given by \Cref{eq:mg}. For $\beta$ sufficiently small, $f\big(\bx^{(K)}(\bw^\prime)\big) - f\big(\bx^{(K)}(\bw) \big) = -\beta {\| \nabla_x f(\bx^{(K)}) \|}_{G^T}^2 + o(\beta^2) < 0$.
\end{lem}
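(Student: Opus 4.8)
The plan is to recognise the \mg update of \Cref{eq:mg} as a single gradient-descent step on the composite objective $F(\bw) \coloneqq f\big(\bx^{(K)}(\bw)\big)$ and to read off the first-order change in $F$ along that step. First I would compute $\nabla_w F$. Differentiating through the unrolled recursion that defines $\bx^{(K)}(\bw)$ and using that $D^T = \tfrac{\partial}{\partial \bw}\bx^{(K)}(\bw)$ is the Jacobian of this map, the chain rule gives $\nabla_w F(\bw) = \big(\tfrac{\partial}{\partial \bw}\bx^{(K)}\big)^{\!T} \nabla_x f(\bx^{(K)}) = D\,\nabla_x f(\bx^{(K)})$, which is precisely the vector appearing in \Cref{eq:mg}. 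Hence the update is $\bw^\prime = \bw - \beta\,\nabla_w F(\bw)$, i.e.\ $\bw^\prime - \bw = -\beta\,\nabla_w F(\bw)$.

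Next I would Taylor-expand $F$ about $\bw$ along this descent direction. Writing the expansion to first order with a second-order-controlled remainder yields
\[
F(\bw^\prime) - F(\bw) = \left\langle \nabla_w F(\bw),\, \bw^\prime - \bw \right\rangle + O(\beta^2) = -\beta\, \|\nabla_w F(\bw)\|_2^2 + O(\beta^2).
\]
The final step is to identify the leading term with the $G^T$-norm. Since $G^T = D^T D$, we have $\|\nabla_w F(\bw)\|_2^2 = \langle D\,\nabla_x f,\, D\,\nabla_x f \rangle = \langle \nabla_x f,\, D^T D\,\nabla_x f \rangle = \|\nabla_x f(\bx^{(K)})\|_{G^T}^2$, which delivers the claimed expansion. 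For $\beta$ small enough the linear term dominates the remainder, so the difference is strictly negative whenever $\|\nabla_x f(\bx^{(K)})\|_{G^T}\neq 0$, i.e.\ whenever $\nabla_x f(\bx^{(K)})$ has a nonzero component outside the kernel of $D$.

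The genuinely routine core is the one-step descent argument; the only point requiring care is the regularity needed to control the remainder at order $\beta^2$. A first-order expansion from mere differentiability of $F$ gives only an $o(\beta)$ remainder, which is too weak. I would therefore invoke the stated smoothness assumptions---$f$ Lipschitz (indeed $C^2$) and the unrolled map $\bw \mapsto \bx^{(K)}(\bw)$ suitably regular---so that $F$ has a locally Lipschitz gradient; this upgrades the remainder to $O(\beta^2)$ and makes the threshold on ``$\beta$ sufficiently small'' explicit in terms of $\|\nabla_x f(\bx^{(K)})\|_{G^T}$ and a local bound on the Hessian of $F$. I expect this remainder bookkeeping to be the main (though still mild) obstacle; everything else follows from the chain rule and the definition $G^T = D^T D$.
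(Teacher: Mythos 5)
Your proposal is correct and follows essentially the same route as the paper: a chain-rule computation of $\nabla_w f(\bx^{(K)}(\bw)) = D\,\nabla_x f(\bx^{(K)})$, a first-order Taylor expansion of the composite map along the update $\bw' - \bw = -\beta D \bg$, and the identification $\langle D\bg, D\bg\rangle = \|\bg\|_{G^T}^2$ via $G^T = D^T D$. Your added care about the remainder order (needing a locally Lipschitz gradient to get $O(\beta^2)$ rather than $o(\beta)$) and about the degenerate case $G^T\nabla_x f(\bx^{(K)}) = \0$ is if anything slightly more precise than the paper's own write-up.
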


\begin{proof}
Define $\bg \coloneqq \nabla_x f(\bx^{(K)}(\bw))$. The meta-gradient at $(\bx, \bh, \bw)$ is given by $\nabla_w f(\bx^{(K)}(\bw)) = D \bg$. Under \Cref{eq:mg}, we find $\bw^\prime = \bw - \beta D \bg$. By first-order Taylor Series Expansion of $f$ around $(\bx, \bh, \bw^\prime)$ with respect to $\bw$:
\begin{equation*}
\begin{aligned}
f\big(\bx^{(K)}(\bw^\prime) \big)
&=  f\big(\bx^{(K)}(\bw) \big) + \inp{D \bg}{\bw' - \bw} + o(\beta^2 {\| \bg \|}^2_{G^T}) \\
&=  f\big(\bx^{(K)}(\bw) \big) - \beta  \inp{D \bg}{D \bg} + o(\beta^2 {\| \bg \|}^2_{G^T}) \\
&=  f\big(\bx^{(K)}(\bw) \big) - \beta  {\| \bg \|}^2_{G^T}  + o(\beta^2 {\| \bg \|}^2_{G^T} ),
\end{aligned}
\end{equation*}
with ${\| \bg \|}_{G^T}^2 \geq 0$ by virtue of positive semi-definiteness of $G$. Hence, for $\beta^2$ small the residual vanishes and the conclusion follows.
\end{proof}

\begin{thm}[\bmg Descent]
Let $\tilde{\bw}$ be given by \Cref{eq:btm} for some \tb $\xi$. The \bmg update satisfies
\begin{equation*}
f\big(\bx^{(K)}(\tilde{\bw}) \big) - f\big(\bx^{(K)}(\bw) \big) = \frac{\beta}{\alpha} \left(\mu(\tilde{\bx}, \bx^{(K)} - \alpha G^T \bg) - \mu(\tilde{\bx}, \bx^{(K)}) \right) + o(\beta(\alpha + \beta)).
\end{equation*}
For $(\alpha, \beta)$ sufficiently small, there exists infinitely many $\xi$ for which $f\big(\bx^{(K)}(\tilde{\bw})\big) - f\big(\bx^{(K)}(\bw)\big) < 0$. In particular, $\xi(\bx^{(K)}) = \bx^{(K)} - \alpha G^T \bg$ yields improvements
\begin{equation*}
f\big(\bx^{(K)}(\tilde{\bw}) \big) - f\big(\bx^{(K)}(\bw) \big) = - \frac{\beta}{\alpha} \mu(\tilde{\bx}, \bx^{(K)}) + o(\beta(\alpha + \beta)) < 0.
\end{equation*}
This is not an optimal rate; there exists infinitely many {\tb}s that yield greater improvements.
\end{thm}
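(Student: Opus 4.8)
The plan is to follow the same first-order strategy as \Cref{lem:mg}, the only new ingredients being the chain rule through $\mu$ and the symmetry of $G^T$. Write $\bm \coloneqq \nabla_2 \mu(\tilde{\bx}, \bx^{(K)})$ for the gradient of $\mu$ in its second argument. Applying the chain rule to the meta-objective in \Cref{eq:btm} gives $\nabla_{\bw}\,\mu(\tilde{\bx}, \bx^{(K)}(\bw)) = D\bm$, exactly as $\nabla_{\bw} f(\bx^{(K)}(\bw)) = D\bg$ with $\bg \coloneqq \nabla_x f(\bx^{(K)})$. Hence the \bmg step is $\tilde{\bw} = \bw - \beta D\bm$. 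Because the target $\tilde{\bx}$ is held fixed (we do not backpropagate through it), $\bm$ is a constant vector throughout the expansion that follows.

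First I would Taylor-expand $f\circ\bx^{(K)}$ at $\bw$ along the step $\tilde{\bw}-\bw = -\beta D\bm$, obtaining
\begin{equation*}
f\big(\bx^{(K)}(\tilde{\bw})\big) - f\big(\bx^{(K)}(\bw)\big) = -\beta\,\inp{D\bg}{D\bm} + (\mathrm{rem}_f) = -\beta\,\inp{\bg}{G^T\bm} + (\mathrm{rem}_f),
\end{equation*}
where I used $D^T D = G^T$. Separately, a first-order expansion of $\mu(\tilde{\bx},\cdot)$ at $\bx^{(K)}$ in the direction $-\alpha G^T\bg$ gives $\mu(\tilde{\bx}, \bx^{(K)}-\alpha G^T\bg) - \mu(\tilde{\bx}, \bx^{(K)}) = -\alpha\,\inp{\bm}{G^T\bg} + (\mathrm{rem}_\mu)$, so that $\tfrac{\beta}{\alpha}$ times this equals $-\beta\,\inp{\bm}{G^T\bg} + \tfrac{\beta}{\alpha}(\mathrm{rem}_\mu)$. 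The crucial observation is that $G^T = D^T D$ is symmetric, whence $\inp{\bg}{G^T\bm} = \inp{\bm}{G^T\bg}$, so the two leading terms coincide; subtracting the two expansions yields the claimed identity with residual $(\mathrm{rem}_f) - \tfrac{\beta}{\alpha}(\mathrm{rem}_\mu)$. The main obstacle is purely the bookkeeping of this residual: using the stated Lipschitz/differentiability assumptions one controls $\mathrm{rem}_f$ by the squared step and $\mathrm{rem}_\mu$ by $\alpha^2$, so that $\tfrac{\beta}{\alpha}\mathrm{rem}_\mu$ contributes at order $\alpha\beta$ and the assembled residual is higher order in $(\alpha,\beta)$, matching the $o(\beta(\alpha+\beta))$ term in the statement.

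With the identity in hand, the three consequences follow without further expansion. For the distinguished bootstrap $\xi^\alpha_G(\bx^{(K)}) = \bx^{(K)} - \alpha G^T\bg$ the target is $\tilde{\bx} = \bx^{(K)} - \alpha G^T\bg$, so the first slot of the leading term collapses: $\mu(\tilde{\bx}, \bx^{(K)}-\alpha G^T\bg) = \mu(\tilde{\bx}, \tilde{\bx}) = 0$, since $0$ is the minimum of $\mu$ and is attained on the diagonal. Substituting gives exactly $-\tfrac{\beta}{\alpha}\mu(\tilde{\bx}, \bx^{(K)}) + o(\beta(\alpha+\beta))$, which is strictly negative whenever $G^T\bg \neq \0$, so that $\tilde{\bx}\neq\bx^{(K)}$ and $\mu(\tilde{\bx},\bx^{(K)})>0$. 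For the ``infinitely many $\xi$'' claim I would take the one-parameter family $\tilde{\bx}_t = \bx^{(K)} - t\,G^T\bg$ with $t>0$: the scalar map $s \mapsto \mu(\tilde{\bx}_t, \bx^{(K)} - s\,G^T\bg)$ is convex (a convex $\mu$ composed with an affine map) with minimiser $s=t$ and value $0$ there, hence nonincreasing on $[0,t]$, so $\mu(\tilde{\bx}_t, \bx^{(K)}-\alpha G^T\bg) \le \mu(\tilde{\bx}_t, \bx^{(K)})$ — strictly for $G^T\bg\neq\0$ — making the leading term negative for every such $t\ge\alpha$.

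Finally, to see that $\xi^\alpha_G$ is not optimal I would push the target further along the same ray. Along $\tilde{\bx}_t$ the learning signal $\mu(\tilde{\bx}_t, \bx^{(K)})$ grows faster than the penalty $\mu(\tilde{\bx}_t, \bx^{(K)}-\alpha G^T\bg)$, so the improvement magnitude $\mu(\tilde{\bx}_t, \bx^{(K)}) - \mu(\tilde{\bx}_t, \bx^{(K)}-\alpha G^T\bg)$ strictly exceeds its value $\mu(\tilde{\bx}_\alpha, \bx^{(K)})$ at $t=\alpha$ for all $t>\alpha$. The Euclidean case $\mu = \|\cdot\|_2^2$ makes this transparent: the magnitude equals $\|G^T\bg\|_2^2\,\alpha(2t-\alpha)$, which is strictly increasing in $t$. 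Infinitely many such $t$ therefore yield greater improvement than $\xi^\alpha_G$, establishing non-optimality.
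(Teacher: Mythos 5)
Your derivation of the central identity is essentially the paper's own: you compute the bootstrapped meta-gradient $D\bm$ by the chain rule, Taylor-expand $f\circ\bx^{(K)}$ at $\bw$ to obtain the leading term $-\beta\inp{\bm}{G^T\bg}$, and separately expand $\mu(\tilde{\bx},\cdot)$ in the direction $-\alpha G^T\bg$ to rewrite that inner product as $\tfrac{\beta}{\alpha}$ times a difference of matching losses; the paper's $\bu$ is your $\bm$, and the residual bookkeeping matches the stated $o(\beta(\alpha+\beta))$. The distinguished bootstrap $\xi(\bx^{(K)})=\bx^{(K)}-\alpha G^T\bg$ is handled identically via $\mu(\tilde{\bx},\tilde{\bx})=0$. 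You diverge on the two secondary claims. For ``infinitely many $\xi$ give descent,'' the paper perturbs the distinguished target within a neighbourhood and invokes continuity of $\mu$, whereas you exhibit the explicit ray $\tilde{\bx}_t=\bx^{(K)}-t\,G^T\bg$, $t\ge\alpha$, and use convexity of $\mu$ in its second argument; your version is clean and equally valid. For non-optimality, the paper shows the distinguished target is not a stationary point of the right-hand side (its $\tilde{\bx}$-gradient is nonzero for $\beta$ small) and takes a descent step on the target, while you assert that the net improvement $\mu(\tilde{\bx}_t,\bx^{(K)})-\mu(\tilde{\bx}_t,\bx^{(K)}-\alpha G^T\bg)$ is strictly increasing in $t$. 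That monotonicity is immediate when $\mu(\ba,\bb)=g(\bb-\ba)$ for convex $g$ (and you verify the Euclidean case explicitly), but for a general convex $\mu$ it is a claim about the dependence on the \emph{first} argument, which convexity in the second argument does not control; the paper's first-order-conditions argument sidesteps this. So: same core proof, a nicer existence argument, and a non-optimality argument that is complete only for translation-invariant matching functions.
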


\begin{proof}
The bootstrapped meta-gradient at $(\bx, \bh, \bw)$ is given by
\begin{equation*}
\nabla_w \mu\Big(\tilde{\bx}, \, \bx^{(K)}(\bw)\Big) = D \bu, \quad \text{where} \quad \bu \coloneqq {\nabla_z \mu\big(\tilde{\bx}, \, \bz \big) \Big|}_{\bz = \bx^{(K)}}.
\end{equation*}

Under \Cref{eq:btm}, we find $\tilde{\bw} = \bw - \beta D \bu$. Define $\bg \coloneqq \nabla_x f(\bx^{(K)})$. By first-order Taylor Series Expansion of $f$ around $(\bx, \bh, \tilde{\bw})$ with respect to $\bw$:

\begin{align}
f\big(\bx^{(K)}(\tilde{\bw}) \big)
&= f\big(\bx^{(K)}(\bw) \big) + \inp{D \bg}{\tilde{\bw} - \bw} + o(\beta^ 2\| D \bu \|_2^2) \nonumber \\ 
&= f\big(\bx^{(K)}(\bw) \big) - \beta  \inp{D \bg}{D \bu} + o(\beta^ 2\| D \bu \|_2^2)  \nonumber \\
&=  f\big(\bx^{(K)}(\bw) \big) - \beta  \inp{\bu}{G^T \bg} + o(\beta^ 2\| \bu \|_{G^T}^2). \label{eq:bmg-descent-ts}
\end{align}

To bound the inner product, expand $\mu(\tilde{\bx}, \, \cdot)$ around a point $\bx^{(K)} + \bd$, where $\bd \in \rR^{n_x}$, w.r.t. $\bx^{(K)}$:
\begin{equation*}
\mu(\tilde{\bx}, \bx^{(K)} + \bd) = \mu(\tilde{\bx}, \bx^{(K)}) + \left\langle \bu, \, \bd \right\rangle  + o(\| \bd \|_2^2).
\end{equation*}
Thus, choose $\bd = - \alpha G^T \bg$, for some $\alpha \in \rR_{+}$ and rearrange to get
\begin{equation*}
- \beta \inp{\bu}{G^T \bg} =
\frac{\beta}{\alpha} \left( \mu(\tilde{\bx}, \bx^{(K)}  - \alpha G^T \bg) -  \mu(\tilde{\bx}, \bx^{(K)}) \right) + o(\alpha \beta \| \bg \|_{G^T}^2).
\end{equation*}
Substitute into \Cref{eq:bmg-descent-ts} to obtain
\begin{align}\label{eq:bmg-descent-fn}
f\big(\bx^{(K)}(\tilde{\bw}) \big) - f\big(\bx^{(K)}(\bw) \big)
&= \frac{\beta}{\alpha} \left(\mu(\tilde{\bx}, \bx^{(K)} - \alpha G^T \bg) - \mu(\tilde{\bx}, \bx^{(K)}) \right) \\
&+ o(\alpha \beta \| \bg \|_{G^T}^2 + \beta^2 \| \bu \|_{G^T}^2). \nonumber
\end{align}
Thus, the \bmg update comes out as the difference between to distances. The first distance is a distortion terms that measures how well the target aligns to the tangent vector $-G^T \bg$, which is the direction of steepest descent in the immediate vicinity of $\bx^{(K)}$ (c.f. \Cref{lem:mg}). The second term measures learning; greater distance carry more signal for meta-learning. The two combined captures the inherent trade-off in \bmg; moving the target further away increases distortions from curvature, but may also increase the learning signal. Finally, the residual captures distortions due to curvature. 

\textit{Existence.} To show that there always exists a target that guarantees a descent direction, choose $\tilde{\bx} = \bx^{(K)} - \alpha G^T \bg$. This eliminates the first distance in \Cref{eq:bmg-descent-fn} as the target is perfectly aligned the direction of steepest descent and we obtain
\begin{equation*}
f\big(\bx^{(K)}(\tilde{\bw}) \big) - f\big(\bx^{(K)}(\bw) \big)
= - \frac{\beta}{\alpha} \mu(\tilde{\bx}, \bx^{(K)}) + o(\beta (\alpha + \beta)).
\end{equation*}
The residual vanishes exponentially fast as $\alpha$ and $\beta$ go to $0$. Hence, there is some $(\bar{\alpha}, \bar{\beta}) \in \rR^{2}_{+}$ such that for any $(\alpha, \beta) \in (0, \bar{\alpha}) \times (0, \bar{\beta})$, $f\big(\bx^{(K)}(\tilde{\bw}) \big) - f\big(\bx^{(K)}(\bw) \big) < 0$. For any such choice of $(\alpha, \beta)$, by virtue of differentiability in $\mu$ there exists some neighborhood $N$ around $\bx^{(K)} - \alpha G^T \bg$ for which any $\tilde{\bx} \in N$ satisfy $f\big(\bx^{(K)}(\tilde{\bw}) \big) - f\big(\bx^{(K)}(\bw) \big) < 0$.

\textit{Efficiency.} We are to show that, given $(\alpha, \beta)$, the set of optimal targets does not include $\tilde{\bx} = \bx^{(K)} - \alpha G^T \bg$. To show this, it is sufficient to demonstrate that show that this is not a local minimum of the right hand-side in \Cref{eq:bmg-descent-fn}. Indeed,
\begin{align*}
&\nabla_{\tilde{x}} \left.\left( \frac{\beta}{\alpha} \left(\mu(\tilde{\bx}, \bx^{(K)} - \alpha G^T \bg) - \mu(\tilde{\bx}, \bx^{(K)}) \right) + o(\alpha \beta \| \bg \|_{G^T}^2 + \beta^2 \| \bu \|_{G^T}^2) \right)\right|_{\tilde{\bx} = \bx^{(K)} - \alpha G^T \bg}\\
&= -\frac{\beta}{\alpha} \nabla_{\tilde{x}} \left. \mu(\tilde{\bx}, \bx^{(K)})\right|_{\tilde{\bx} = \bx^{(K)} - \alpha G^T \bg} + \beta^2 \bo \neq \0,
\end{align*}
where $\beta^2 \bo$ is the gradient of the residual ($\| \bu \|_2^2$ depends on $\tilde{\bx}$) w.r.t. $\tilde{\bx} =  \bx^{(K)} - \alpha G^T \bg$. To complete the proof, let $\tilde{\bu}$ denote the above gradient. Construct an alternative target $\tilde{\bx}^\prime = \tilde{\bx} - \eta \tilde{\bu}$ for some $\eta \in \rR_{+}$. By standard gradient descent argument, there is some $\bar{\eta}$ such that any $\eta \in (0, \bar{\eta})$ yields an alternate target $\tilde{\bx}^\prime$ that improves over $\tilde{\bx}$.
\end{proof}

We now prove that, controlling for scale, \bmg can yield larger performance gains than \mg. Recall that $\xi^\alpha_G(\bx^{(K)}) = \bx^{(K)} - \alpha G^T \nabla f\bx^{(K)}$. Consider $\xi^r_G$, with  $r \coloneqq \| \nabla f(\bx^{(K)}) \|_2 / \| G^T \nabla f(\bx^{(K)}) \|_2$.

\begin{cor}
Let $\mu = \| \cdot \|_2^2 $ and $\tilde{\bx} = \xi^r_G(\bx^{(K)})$. Let $\bw^\prime$ be given by \Cref{eq:mg} and $\tilde{\bw}$ be given by \Cref{eq:btm}. For $\beta$ sufficiently small, 
$f\big(\bx^{(K)}(\tilde{\bw}) \big) \leq f\big(\bx^{(K)}(\bw^\prime)\big)$, with strict inequality if $G G^T \neq G^T$.
\end{cor}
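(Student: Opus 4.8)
The plan is to expand both updates to first order in $\beta$ about the common anchor $f(\bx^{(K)}(\bw))$ and then reduce the whole comparison to a single Cauchy--Schwarz inequality. Throughout I write $\bg \coloneqq \nabla_x f(\bx^{(K)})$, so that $\|\bg\|_{G^T}^2 = \inp{\bg}{G^T\bg}$ and $r = \|\bg\|_2/\|G^T\bg\|_2$ (with $G^T\bg \neq \0$ assumed, so that $r$ is well defined).

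First I would read the \mg improvement straight off \Cref{lem:mg}, namely $f(\bx^{(K)}(\bw^\prime)) - f(\bx^{(K)}(\bw)) = -\beta\,\inp{\bg}{G^T\bg} + o(\beta^2)$. For \bmg I would apply \Cref{thm:btm} with $\mu = \|\cdot\|_2^2$ and the target $\tilde{\bx} = \xi_G^r(\bx^{(K)}) = \bx^{(K)} - r\,G^T\bg$. This is exactly the distinguished bootstrap $\xi_G^\alpha$ of \Cref{thm:btm} evaluated at $\alpha = r$, so the theorem's ``in particular'' identity applies and yields $f(\bx^{(K)}(\tilde{\bw})) - f(\bx^{(K)}(\bw)) = -\tfrac{\beta}{r}\,\mu(\tilde{\bx},\bx^{(K)}) + o(\beta(r+\beta))$. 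Since $\mu(\tilde{\bx},\bx^{(K)}) = \|r\,G^T\bg\|_2^2 = r^2\|G^T\bg\|_2^2$, substituting the definition of $r$ collapses the \bmg improvement to $-\beta\,\|\bg\|_2\,\|G^T\bg\|_2 + o(\beta)$.

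Subtracting the two expansions, so that the baseline $f(\bx^{(K)}(\bw))$ cancels, leaves
\[
f(\bx^{(K)}(\tilde{\bw})) - f(\bx^{(K)}(\bw^\prime)) = \beta\big(\inp{\bg}{G^T\bg} - \|\bg\|_2\|G^T\bg\|_2\big) + o(\beta).
\]
The decisive step is Cauchy--Schwarz applied to the pair $\bg$ and $G^T\bg$, i.e. $\inp{\bg}{G^T\bg}\le\|\bg\|_2\|G^T\bg\|_2$, which makes the bracket nonpositive; taking $\beta$ small enough that this $\beta$-order term dominates the residual gives $f(\bx^{(K)}(\tilde{\bw}))\le f(\bx^{(K)}(\bw^\prime))$. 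Equality in Cauchy--Schwarz forces $\bg$ and $G^T\bg$ to be collinear, so strictness holds whenever they are not.

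The step I expect to cost the most care is reconciling this collinearity criterion with the stated matrix condition $GG^T\neq G^T$, together with keeping the asymptotics honest. Because $G^T = D^TD$ is symmetric and positive semi-definite, I would diagonalise it and observe that $\bg$ is collinear with $G^T\bg$ precisely when $\bg$ lies in a single eigenspace of $G^T$, whereas $GG^T\neq G^T$ only asserts that $G^T$ has an eigenvalue outside $\{0,1\}$; these are not literally equivalent (for instance $G^T = 2I$ is non-idempotent yet makes every $\bg$ an eigenvector), so the real work is to show that under the intended reading of the hypothesis the gradient cannot sit in one eigenspace and the bracket is strictly negative. A secondary obstacle is uniformity of the remainder: the target fixes the scale at $\alpha = r$ rather than letting it vanish, so I must verify that the $o(\beta)$ terms inherited from \Cref{lem:mg} and \Cref{thm:btm} are genuinely dominated by the shared $\beta$-order term before concluding, which is exactly what ``$\beta$ sufficiently small'' is encoding.
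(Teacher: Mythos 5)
Your proof follows the paper's argument essentially step for step: both expand the two updates to first order via \Cref{lem:mg} and \Cref{thm:btm} (the latter instantiated at $\alpha = r$), subtract to cancel the common anchor $f(\bx^{(K)}(\bw))$, and reduce the comparison to the inequality $\inp{\bg}{G^T\bg} \leq \|\bg\|_2\,\|G^T\bg\|_2$. The paper obtains that inequality by maximising $\inp{G^T\bg}{\bv}$ over the unit ball with a Lagrangian, which is nothing but a derivation of Cauchy--Schwarz, so your direct invocation is the identical step in lighter clothing. Your concern about the strictness clause is well placed, but it points at a gap in the paper rather than in your argument: the paper's own proof only establishes that equality requires $\bg$ and $G^T\bg$ to be positively collinear, and then asserts the condition $GG^T \neq G^T$ without connecting it to that equality case; your $G^T = 2I$ example shows the asserted condition is not sufficient for strictness as literally stated, so the honest conclusion is strict inequality whenever $\bg$ is not an eigenvector of $G^T$ (equivalently, not proportional to $G^T\bg$), with the paper's idempotency condition being a loose sufficient-sounding surrogate.
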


\begin{proof}
Let $\bg \coloneqq \nabla_x f\big(\bx^{(K)}\big)$. By \Cref{lem:mg}, $f\big(\bx^{(K)}(\bw^\prime) \big) - f\big(\bx^{(K)}(\bw) \big) = - \beta \inp{G^T \bg}{\bg} + O(\beta^2)$. From \Cref{thm:btm}, with $\mu = \| \cdot \|_2^2$, $f\big(\bx^{(K)}(\tilde{\bw}) \big) - f\big(\bx^{(K)}(\bw) \big) = - r \inp{G^T \bg}{G^T \bg} + O(\beta(\alpha + \beta))$. For $\beta$ sufficiently small, the inner products dominate and we have 
\begin{equation*}
f\big(\bx^{(K)}(\tilde{\bw}) \big) - 
f\big(\bx^{(K)}(\bw^\prime) \big) \approx
-\beta \left(r \inp{G^T \bg}{G^T \bg} - \inp{G^T \bg}{\bg}\right).
\end{equation*}

To determine the sign of the expression in parenthesis, consider the problem 
\begin{equation*}
\max_{\hphantom{x}{\bv} \in \rR^{n_x}} \, \inp{G^T \bg}{\bv} \qquad \text{s.t.} \quad \| \bv \|_2 \leq 1.
\end{equation*}
Form the Lagrangian $\cL(\bv, \lambda) \coloneqq \inp{G^T \bg}{\bv} - \lambda (\| \bv \|_2 - 1)$. Solve for first-order conditions:
\begin{equation*}
G^T \bg - \lambda \frac{\bv^*}{\| \bv^* \|_2} = 0 \implies \bv^* = \frac{\| \bv^* \|_2}{\lambda} G^T \bg.
\end{equation*}
If $\lambda = 0$, then we must have $\|\bv^*\|_2 0$, which clearly is not an optimal solution. Complementary slackness then implies $\| \bv^* \|_2 = 1$, which gives $\lambda = \| \bv^* \|_2 \|G^T \bg\|_2$ and hence $\bv^* =  G^T \bg / \| G^T \bg \|_2$. By virtue of being the maximiser, $\bv^*$ attains a higher function value than any other $\bv$ with $\| \bv \|_2 \leq 1$, in particular $\bv = \bg / \| \bg \|_2 $. Evaluating the objective at these two points gives

\begin{equation*}
\frac{\inp{G^T \bg}{G^T \bg}}{\| G^T \bg \|_2} \geq \frac{\inp{G^T \bg}{\bg}}{\| \bg \|_2} \implies 
r \inp{G^T \bg}{G^T \bg} \geq \inp{G^T \bg}{\bg},
\end{equation*}

where we use that $r = \| \bg \|_2 / \| G^T \bg \|_2$ by definition. Thus $f\big(\bx^{(K)}(\tilde{\bw}) \big) \leq f\big(\bx^{(K)}(\bw^\prime) \big)$, with strict inequality if $G G^T \neq G^T$ and $G^T \bg \neq \0$.
\end{proof}

\section{Non-Stationary non-episodic reinforcement learning}\label{app:twocolor}

\begin{wrapfigure}{r}{0.15\linewidth}
\centering
\vspace*{-12pt}
\includegraphics[width=\linewidth]{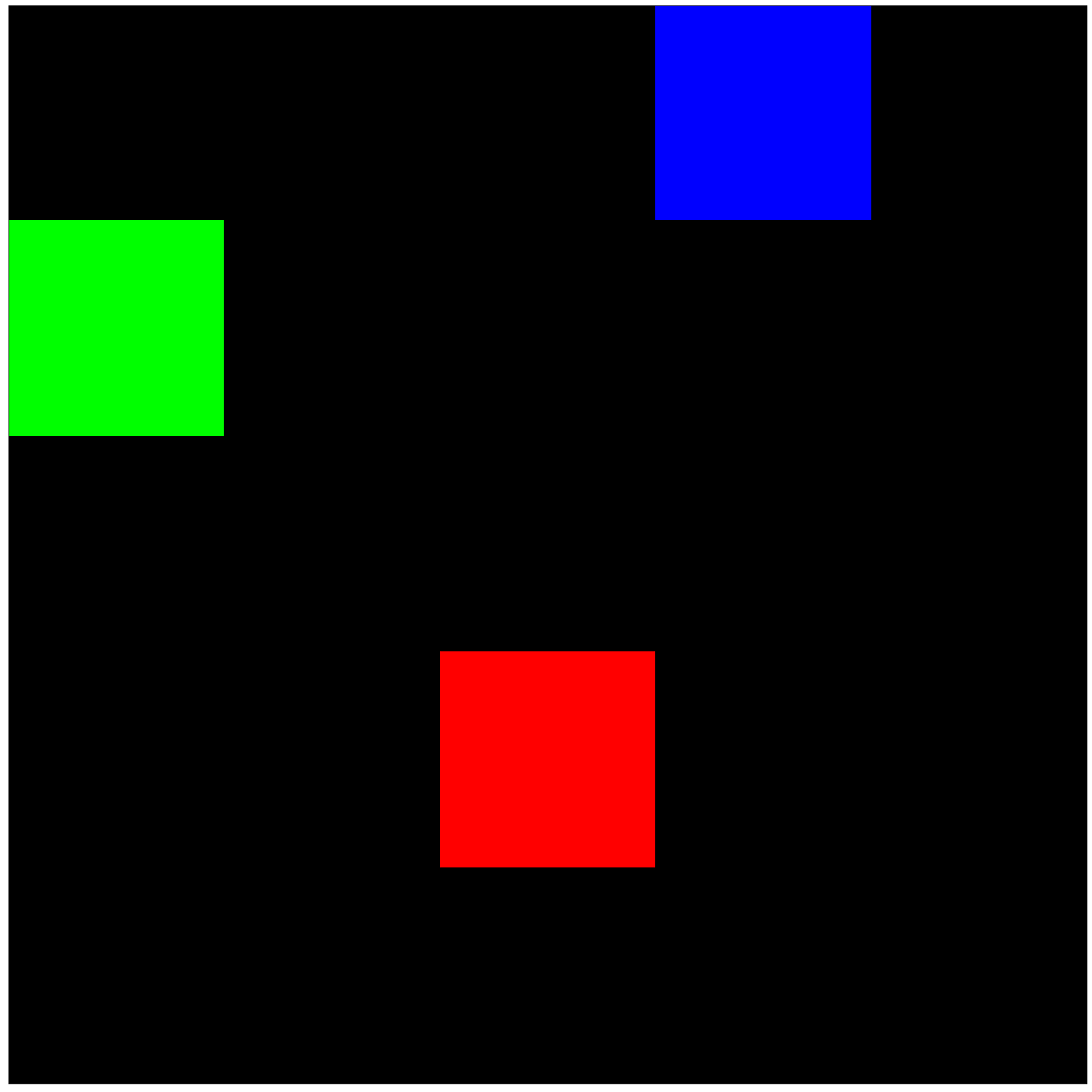}
\caption{\\Two-colors\\Grid-world. The agent's goal is to collect either blue or red squared by navigating the green square.}\label{fig:gridworld}
\vspace*{-12pt}
\end{wrapfigure}

\subsection{Setup}\label{app:twocolor:setup}

This experiment is designed to provide a controlled setting to delineate the differences between standard meta-gradients and bootstrapped meta-gradients. The environment is a $5\times5$ grid world with two objects; a blue and a red square (\Cref{fig:gridworld}). Thus, we refer to this environment as the \emph{two-colors} domain. At each step, the agent (green) can take an action to move either up, down, left, or right and observes the position of each square and itself. If the agent reaches a coloured square, it obtains a reward of either $+1$ or $-1$ while the colour is randomly moved to an unoccupied location. Every 100 000 steps, the reward for each object flips. For all other transitions, the agent obtains a reward of $-0.04$. Observations are constructed by concatenating one-hot encodings of the each $x$- and $y$-coordinate of the two colours and the agent's position, with a total dimension of $2 \times 3 \times 5 = 30$ (two coordinates for each of three objects, with each one-hot vector being $5$-dimensional). 

The two-colors domain is designed such that the central component determining how well a memory-less agent adapts is its exploration. Our agents can only regulate exploration through policy entropy. Thus, to converge on optimal task behaviour, the agent must reduce policy entropy. Once the task switches, the agent encounters what is effectively a novel task (due to it being memory-less). To rapidly adapt the agent must first increase entropy in the policy to cover the state-space. Once the agent observe rewarding behaviour, it must then reduce entropy to converge on task-optimal behaviour. 

All experiments run on the CPU of a single machine. The agent interacts with the environment and update its parameters synchronously in a single stream of experience. A \emph{step} is thus comprised of the following operations, in order: (1) given observation, agent takes action, (2) if applicable, agent update its parameters, (3) environment transitions based on action and return new observation. The parameter update step is implemented differently depending on the agent, described below.

\subsection{Actor-Critic Experiments}  \label{app:twocolors:ac}

\begin{algorithm}[t!]
  \caption{$N$-step RL actor loop}\label{alg:actor-loop}
  \begin{algorithmic}
    \Require $N$                                                                        \Comment{Rollout length.}
    \Require $\bx \in \rR^{n_x}$                                                        \Comment{Policy parameters.}
    \Require $\bs$                                                                      \Comment{Environment state.}
    \State $\cB \leftarrow (\bs)$                                                       \Comment{Initialise rollout.}                                    
    \For{$t=1, 2, \ldots, N$}
      \State $\ba \sim \pi_{\bx}(\bs)$                                                  \Comment{Sample action.}
      \State $\bs, r \leftarrow \operatorname{env}(\bs, \ba)$                           \Comment{Take a step in environment.}
      \State $\cB \leftarrow \cB \cup (\ba, r, \bs)$                                    \Comment{Add to rollout.}
    \EndFor
    \State \textbf{return} $\bs$, $\cB$
  \end{algorithmic}
\end{algorithm}

\begin{algorithm}[t!]
  \caption{$K$-step online learning loop}\label{alg:inner-loop}
  \begin{algorithmic}
    \Require $N, K$                                                                     \Comment{Rollout length, meta-update length.}
    \Require $\bx \in \rR^{n_x}$, $\bz \in \rR^{n_z}$, $\bw \in \rR^{n_w}$               \Comment{Policy, value function, and meta parameters.}
    \Require $\bs$                                                                      \Comment{Environment state.}
    \For{$k=1, 2, \ldots, K$}
      \State $\bs$, $\cB \leftarrow \operatorname{ActorLoop}(\bx, \bs, N)$              \Comment{\Cref{alg:actor-loop}.}
      \State $(\bx, \bz) \leftarrow \varphi((\bx, \bz), \cB, \bw)$                      \Comment{Inner update step.}
    \EndFor
    \State \textbf{return} $\bs$, $\bx$, $\bz$, $\cB$
  \end{algorithmic}
\end{algorithm}

\begin{algorithm}[t!]
  \caption{Online RL with BMG}\label{alg:online-BMG}
  \begin{algorithmic}
    \Require $N, K, L$                                                                  \Comment{Rollout length, meta-update length, bootstrap length.}
    \Require $\bx \in \rR^{n_x}$, $\bz \in \rR^{n_z}$, $\bw \in \rR^{n_w}$               \Comment{Policy, value function, and meta parameters.}
    \Require $\bs$                                                                      \Comment{Environment state.}
      \State $\bu \leftarrow (\bx, \bz)$
    \While{True}
      \State $\bs, \bu^{(K)}, \_ \leftarrow \operatorname{InnerLoop}(\bu, \bw, \bs, N, K)$ \Comment{$K$-step inner loop, \Cref{alg:inner-loop}.}
      \State $\bs, \bu^{(K+L-1)}, \cB \leftarrow \operatorname{InnerLoop}(\bu^{(K)}, \bw, \bs, N, L-1)$ \Comment{$L-1$ bootstrap, \Cref{alg:inner-loop}.}      
      \State $\tilde{\bu} \leftarrow \bu^{(K+L-1)} - \alpha \nabla_u \ell(\bu^{(K+L-1)}, \cB)$ \Comment{Gradient step on objective $\ell$.}
      \State $\bw \leftarrow \bw - \beta \nabla_w \mu(\tilde{\bu}, \bu^{(K)}(\bw))$            \Comment{BMG outer step.}
      \State $\bu \leftarrow \bu^{(K+L-1)}$                                             \Comment{Continue from most resent parameters.}
    \EndWhile
  \end{algorithmic}
\end{algorithm}

\paragraph{Agent} 
The first agent we evaluate is a simple actor-critic which implements a softmax policy ($\pi_{\bx}$) and a critic ($v_{\bz}$) using separate feed-forward MLPs. Agent parameter updates are done according to the actor-critic loss in \Cref{eq:actor-critic} with the on-policy n-step return target. For a given parameterisation of the agent, we interact with the environment for $N=16$ steps, collecting all observations, rewards, and actions into a rollout (\Cref{alg:actor-loop}). When the rollout is full, the agent update its parameters under the actor-critic loss with SGD as the optimiser (\Cref{alg:inner-loop}). To isolate the effect of meta-learning, all hyper-parameters except the entropy regularization weight ($\epsilon = \epsilon_{\text{EN}}$) are fixed (\Cref{tab:twocolors:hypers}); for each agent, we sweep for the learning rate that yields highest cumulative reward within a 10 million step budget. For the non-adaptive baseline, we additionally sweep for the best regularization weight.

\paragraph{Meta-learning}
To meta-learn the entropy regularization weight, we introduce a small MLP with meta-parameters $\bw$ that ingests a statistic $\bt$ of the learning process---the average reward over each of the 10 most recent rollouts---and predicts the entropy rate $\epsilon_{\bw}(\bt) \in \rR_{+}$ to use in the agent's parameter update of $\bx$. To compute meta-updates, for a given horizon $T=K$ or $T=K+(L-1)$, we fix $\bw$ and make $T$ agent parameter updates to obtain a sequence $(\tau_{1}, \bx^{(1)}, \bz^{(1)}, \ldots, \tau_{T}, \bx^{(T)}, \bz^{(T)})$. 

\paragraph{\mg} is optimised by averaging each policy and entropy loss encountered in the sequence, i.e. the meta-objective is given by $\frac{1}{T}\sum_{t=1}^T \ell^t_{\text{PG}}(\bx^{(t)}(\bw)) + \epsilon_{\text{meta}} \ell^{t}_{\text{EN}}(\bx^{(t)}(\bw))$, where $\epsilon_{\text{meta}} \in \{0, 0.1\}$ is a fixed hyper-parameter and $\ell^t$ implies that the objective is computed under $\tau_t$.

\paragraph{\bmg} is optimised by computing the matching loss $\mu_{\tau_T}(\tilde{\bx}, \bx^{(K)}(\bw))$, where $\tilde{\bx}$ is given by $\tilde{\bx} = \bx^{(T)} - \beta \nabla_x (\ell^{T}_{\text{PG}}(\bx^{(T)}) + \epsilon_{\text{meta}} \ell^{T}_{\text{EN}}(\bx^{(T)}))$. That is to say, the \tb ``unrolls'' the meta-learner for $L-1$ steps, starting from $(\bx^{(K)}, \bz^{(K)})$, and takes a final policy-gradient step ($\epsilon_{\text{meta}} = 0$ unless otherwise noted). Thus, in this setting, our \tb exploits that the first $(L-1)$ steps have already been taken by the agent during the course of learning (\Cref{alg:online-BMG}). Moreover, the final $L$th step only differs in the entropy regularization weight, and can therefore be implemented without an extra gradient computation. As such, the meta-update under \bmg exhibit no great computational overhead to the \mg update. In practice, we observe no significant difference in wall-clock speed for a given $K$.

\begin{figure}[t]
    \centering
    \begin{subfigure}{0.32\linewidth}
      \centering
      \includegraphics[width=\linewidth]{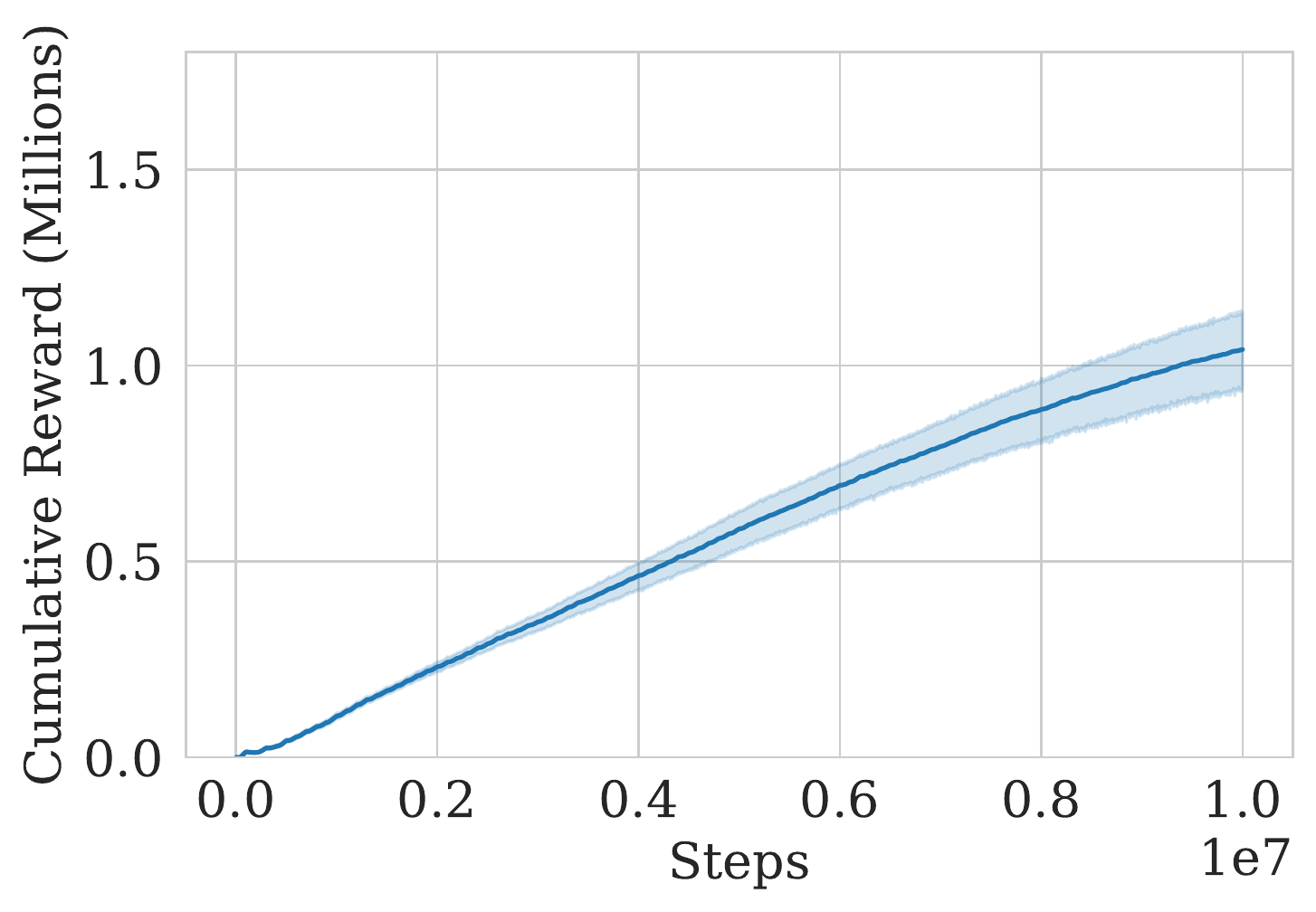}
    \subcaption{Fixed entropy-regularization}
    \end{subfigure}
    \begin{subfigure}{0.32\linewidth}
      \centering
      \includegraphics[width=\linewidth]{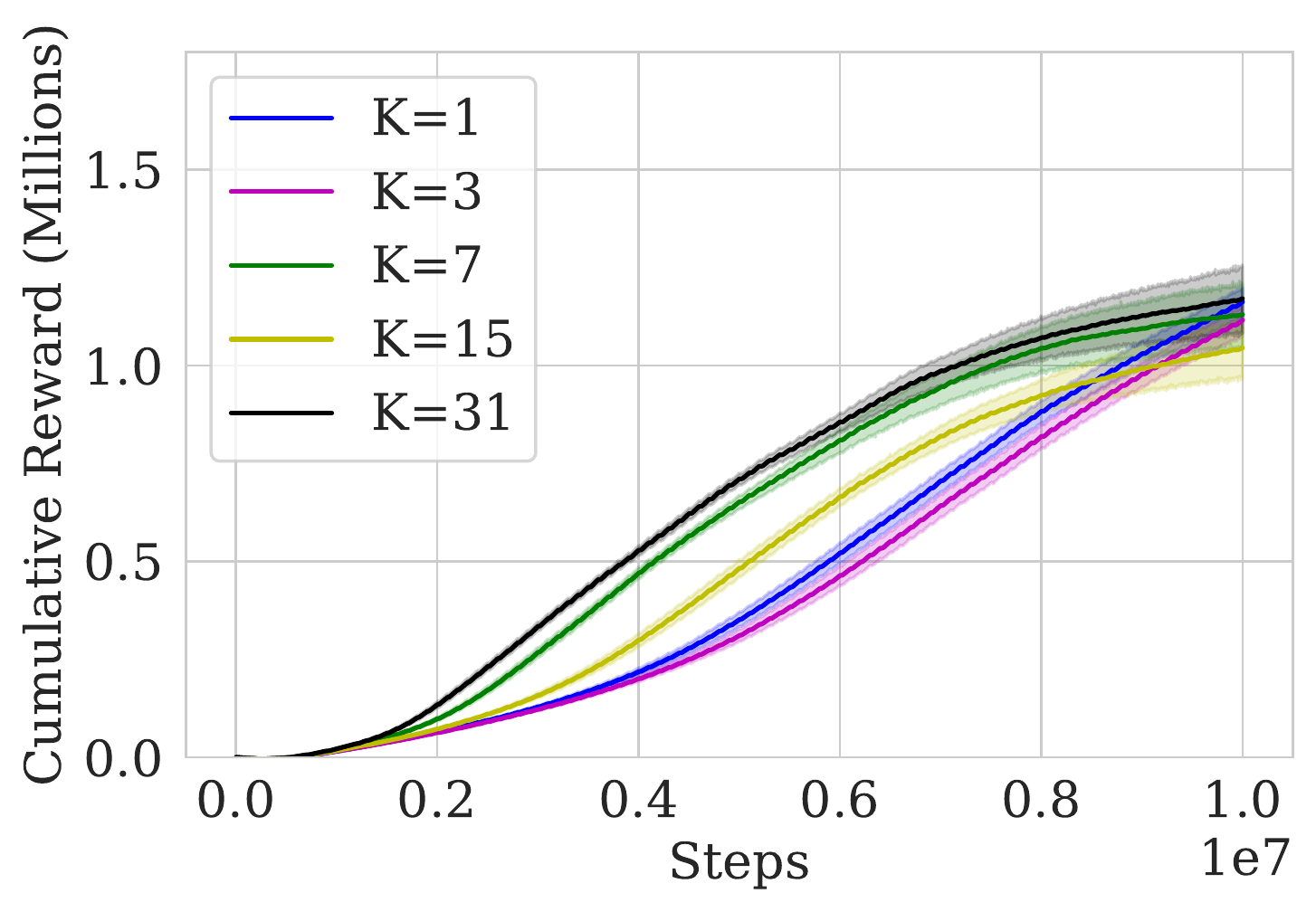}
      \subcaption{Meta-gradients}
    \end{subfigure}
    \begin{subfigure}{0.32\linewidth}
      \centering
      \includegraphics[width=\linewidth]{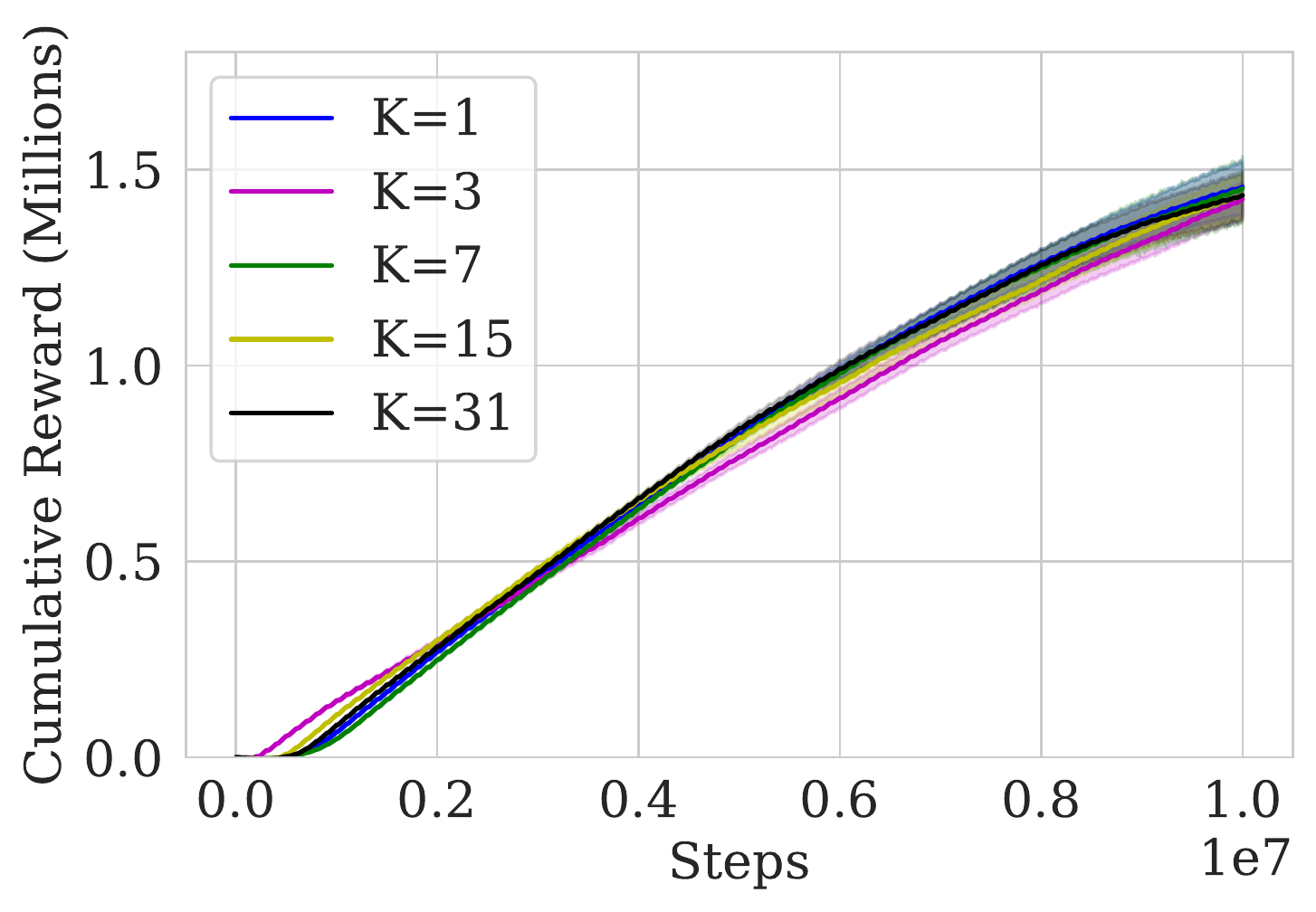}
      \subcaption{Meta-gradients + regularization}
    \end{subfigure}
    \begin{subfigure}{\linewidth}
      \centering
      \includegraphics[width=\linewidth]{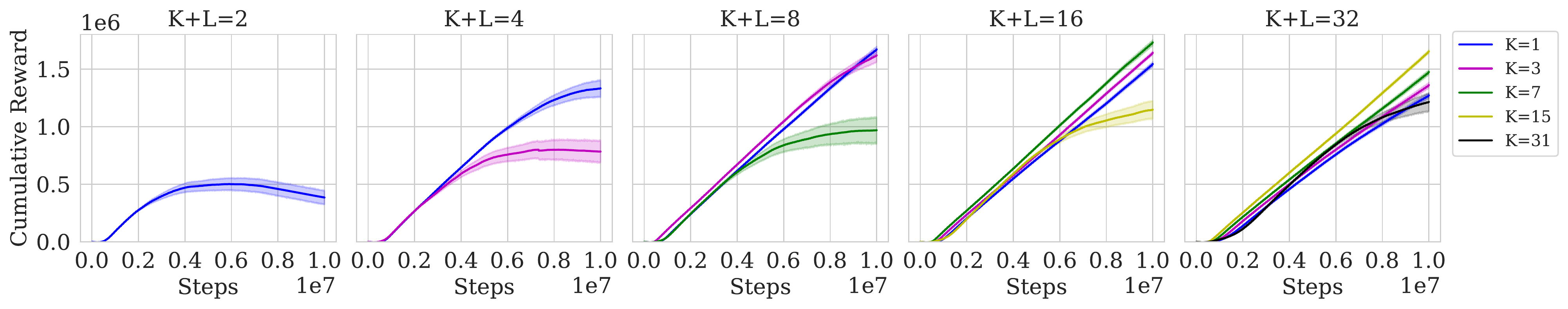}
      \subcaption{Bootstrapped meta-gradients}
    \end{subfigure}
    \caption{Total rewards on two-colors with actor-critics. Shading: standard deviation over 50 seeds.}
    \label{fig:twocolor-abs}
\end{figure}

\begin{figure}[b!]\centering
\begin{subfigure}{0.4\linewidth}
      \centering
      \includegraphics[width=\linewidth]{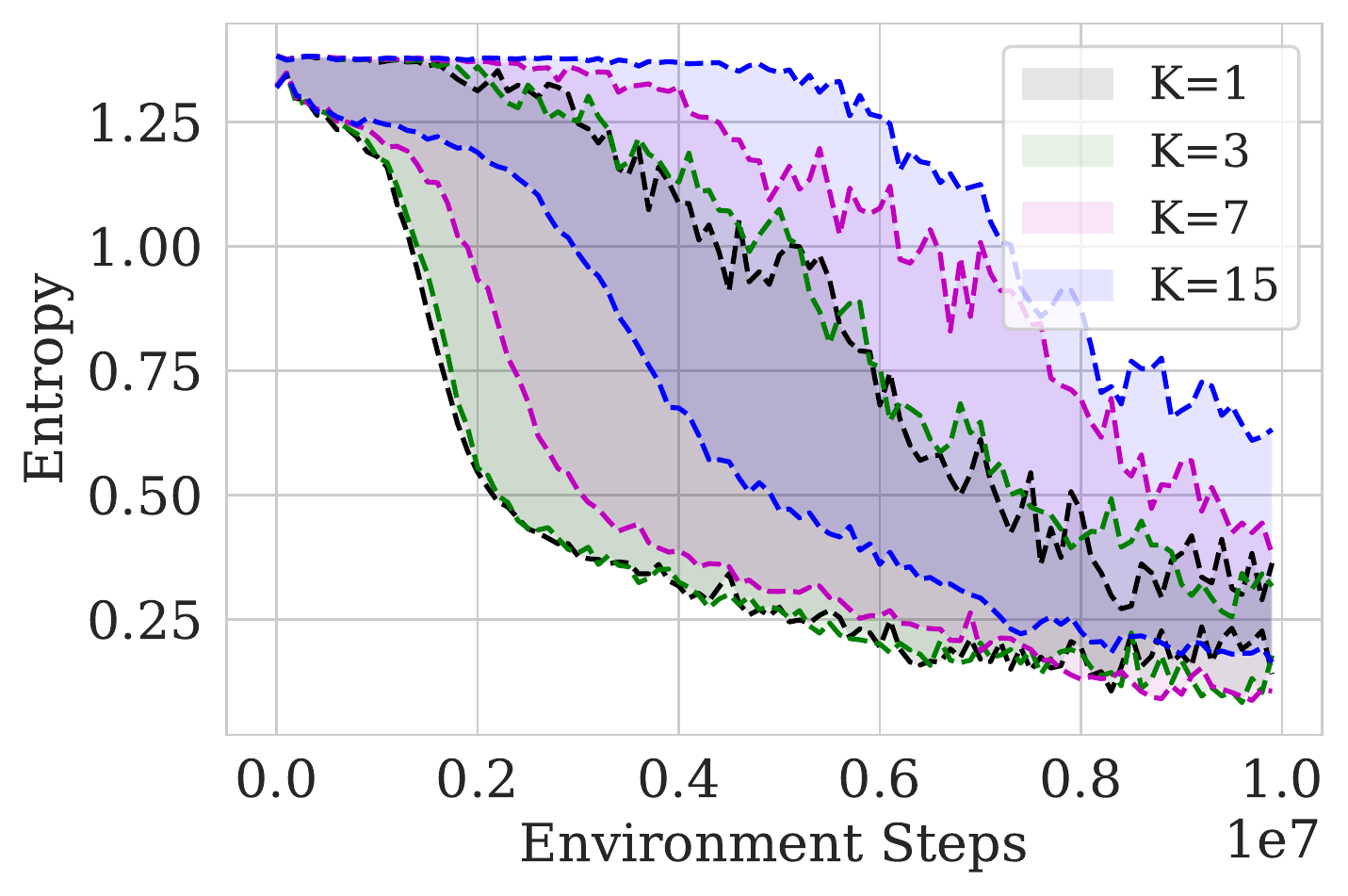}
    \end{subfigure}
    \begin{subfigure}{0.4\linewidth}
      \centering
      \includegraphics[width=\linewidth]{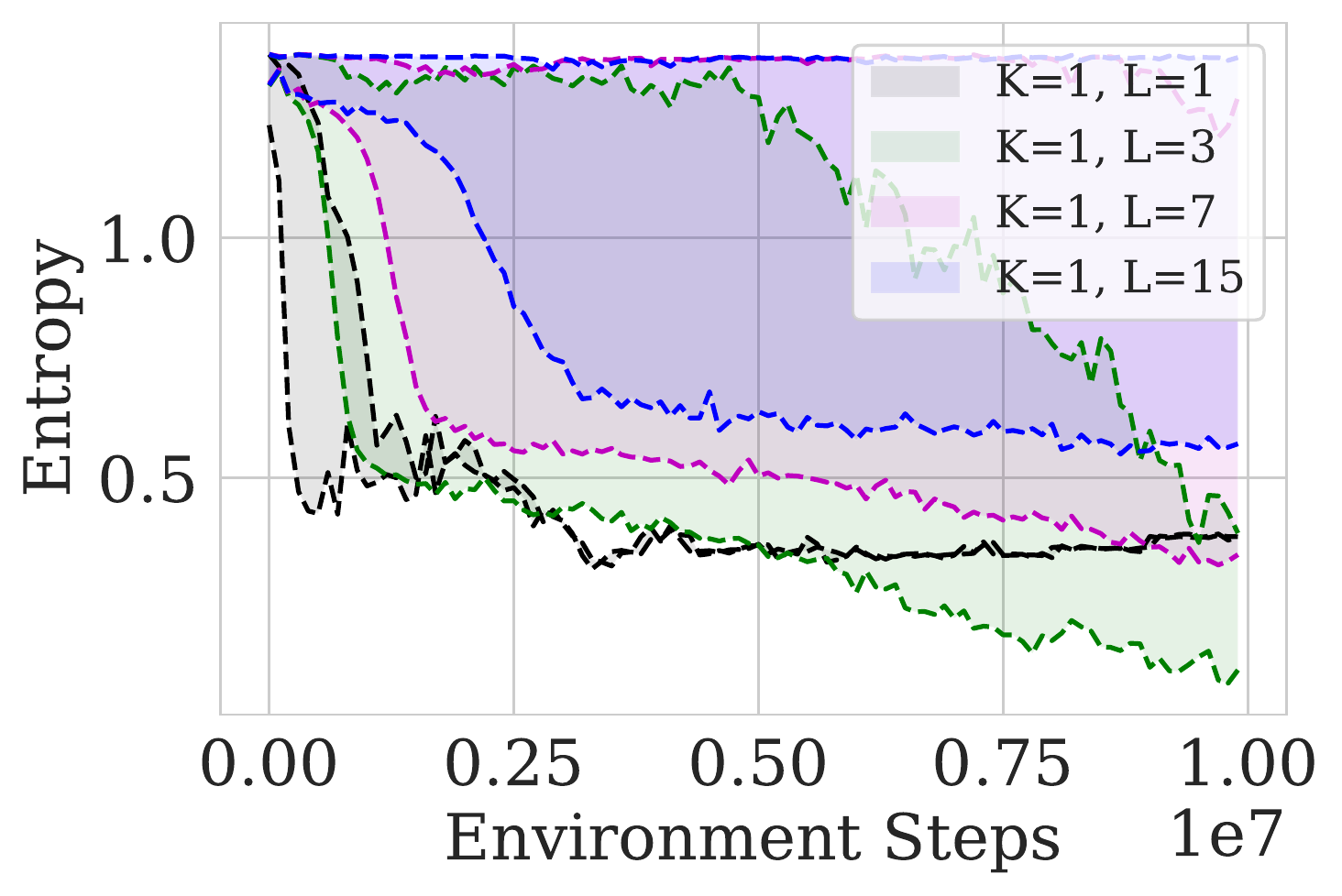}
    \end{subfigure}
    \caption{Range of the entropy of a softmax-policy over time (2-colors). Each shaded area shows the difference between the entropy 3333 steps after the agent observes a new entropy and the entropy after training on the reward-function for 100000 steps. Meta-gradients without explicit entropy-regularization (left) reduce entropy over time while Bootstrapped meta-gradients (right) maintain entropy with a large enough meta-learning horizon. Averaged across 50 seeds.}
    \label{fig:twocolor-entropy-schedule}
\end{figure}

\paragraph{Main experiment: detailed results} The purpose of our main experiment \Cref{sec:empirics:twocolor} is to (a) test whether larger meta-learning horizons---particularly by increasing $L$---can mitigate the short-horizon bias, and (b) test whether the agent can learn an exploration schedule without explicit domain knowledge in the meta-objective (in the form of entropy regularization). As reported in \Cref{sec:empirics:twocolor}, we find the answer to be affirmative in both cases. To shed further light on these findings, \Cref{fig:twocolor-abs} reports cumulative reward curves for our main experiment in \Cref{sec:empirics:twocolor}. We note that \mg tends to collapse for any $K$ unless the meta-objective is explicitly regularized via $\epsilon_{\text{meta}}$. To characterise why \mg fail for $\epsilon_{\text{meta}}=0$, \Cref{fig:twocolor-entropy-schedule} portrays the policy entropy range under either \mg or \bmg. \mg is clearly overly myopic by continually shrinking the entropy range, ultimately resulting in a non-adaptive policy.  

\begin{figure}[t!]\centering
    \begin{subfigure}{0.32\linewidth}
      \centering
      \includegraphics[width=\linewidth]{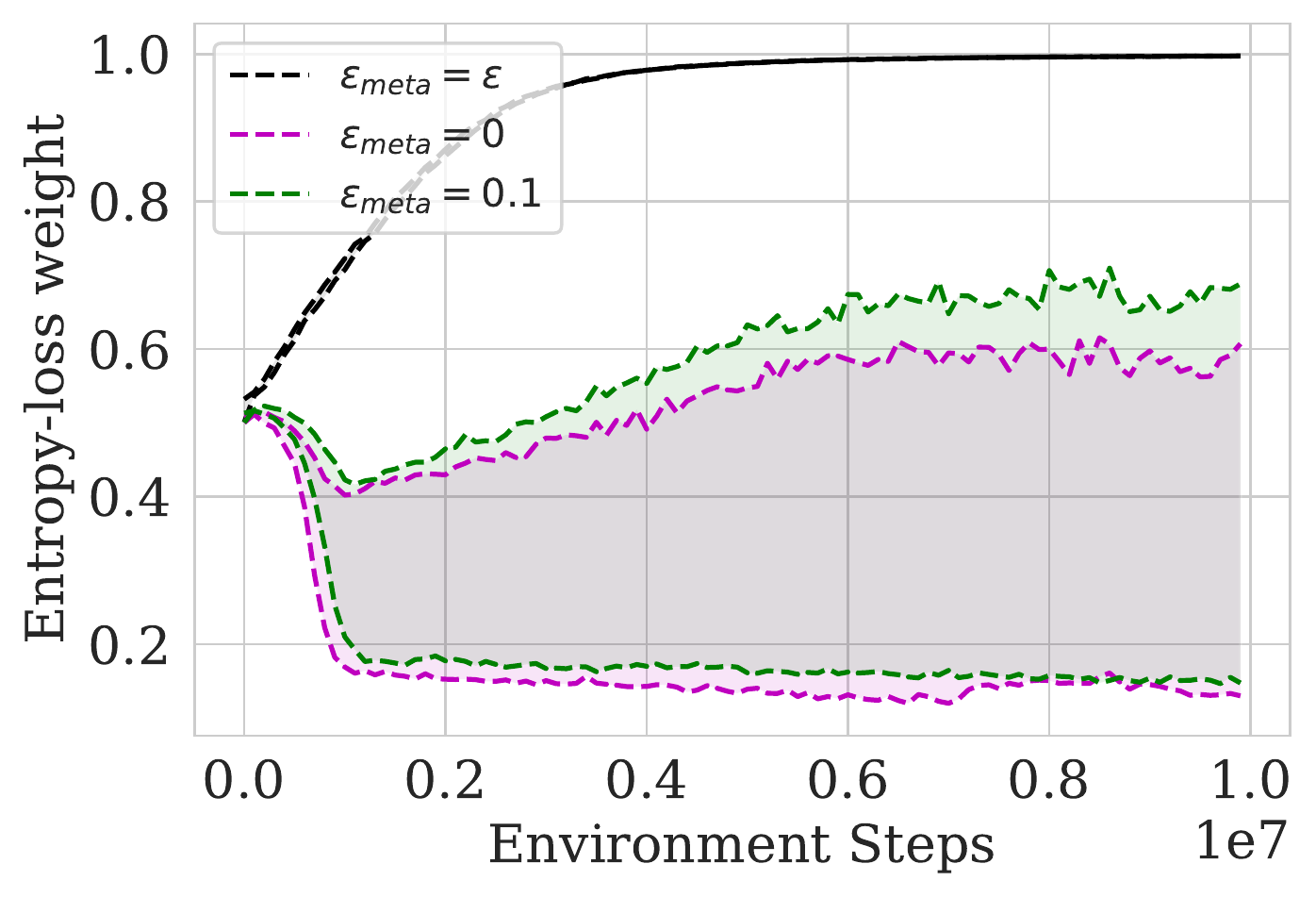}
    \end{subfigure}
    \begin{subfigure}{0.32\linewidth}
      \centering
      \includegraphics[width=\linewidth]{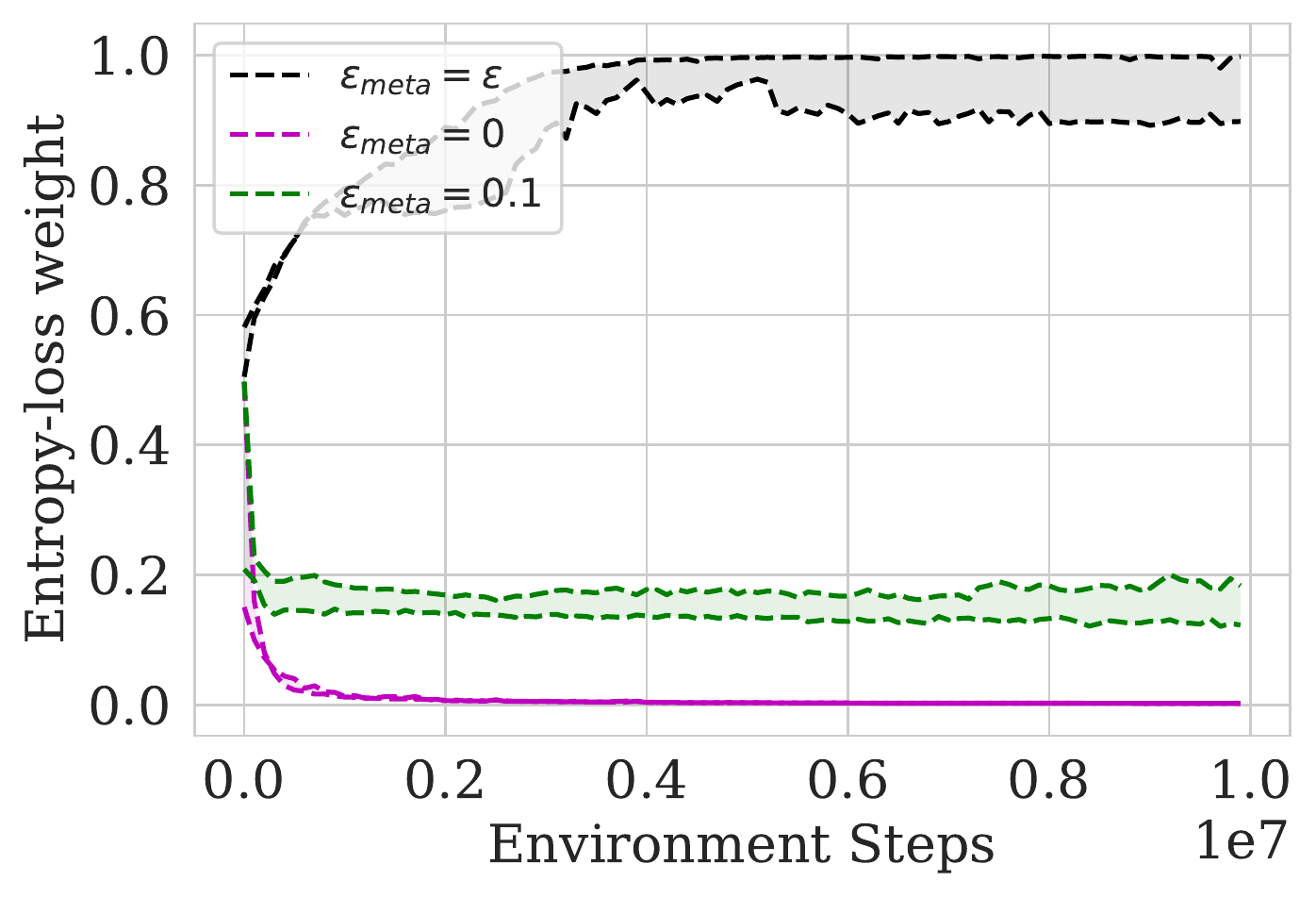}
    \end{subfigure}
    \begin{subfigure}{0.32\linewidth}
      \centering
      \includegraphics[width=\linewidth]{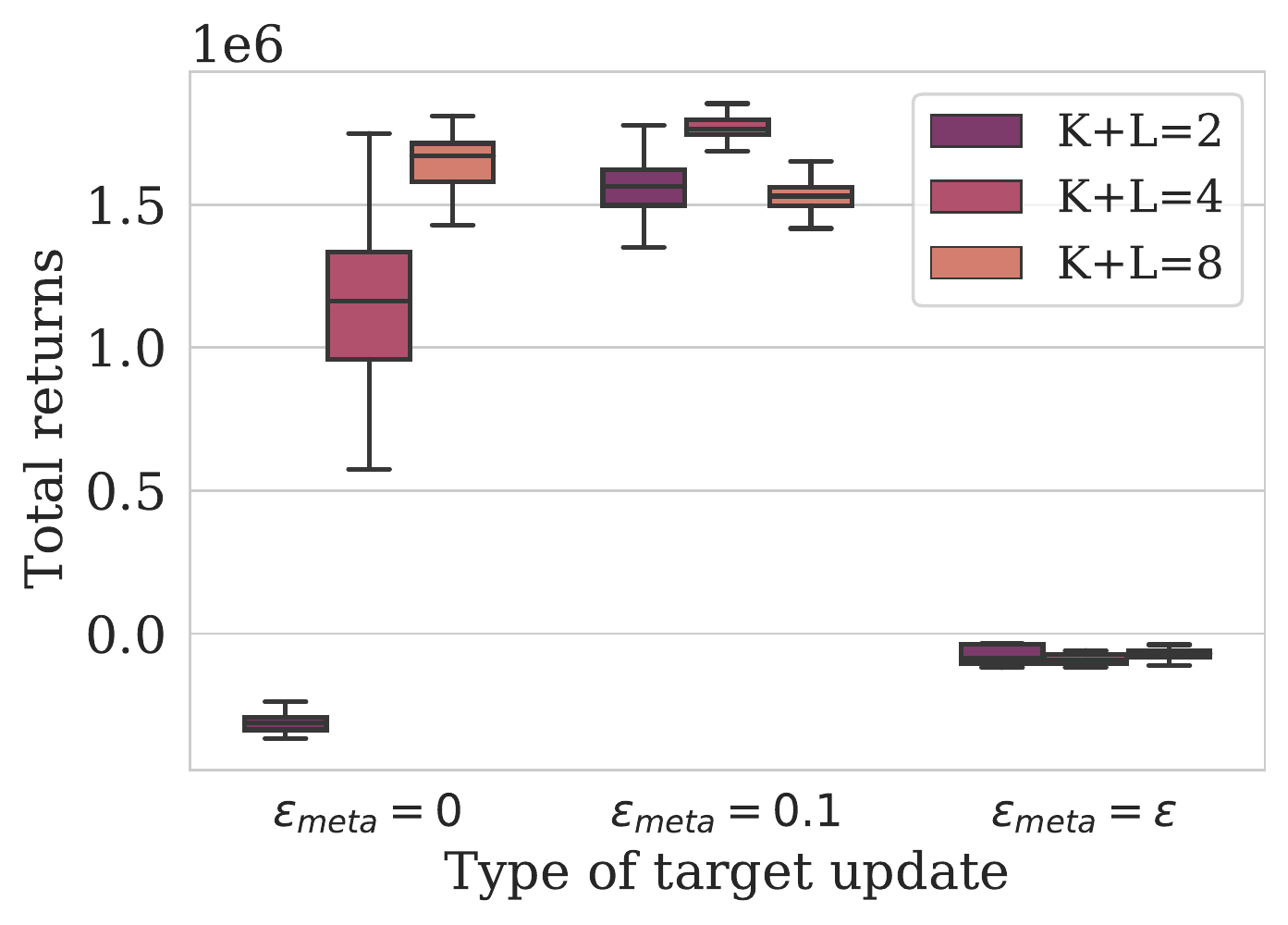}
    \end{subfigure}
    \caption{Ablations for actor-critic agent with \bmg. Each shaded area shows the range of entropy regularization weights generated by the meta-learner. The range is computed as the difference between $\epsilon$ at the beginning and end of each reward-cycle. Left: entropy regularization weight range when $K=1$ and $L=7$. Center: entropy regularization weight range when $K=1$ and $L=1$. Right: For $K=1$ effect of increasing $L$ with or without meta-entropy regularization. Result aggregated over 50 seeds.} 
    \label{fig:twocolor:ablation:meta-reg}
\end{figure}

\begin{wrapfigure}{r}{0.4\linewidth}
    \centering
    \vspace*{-12pt}
    \includegraphics[width=\linewidth]{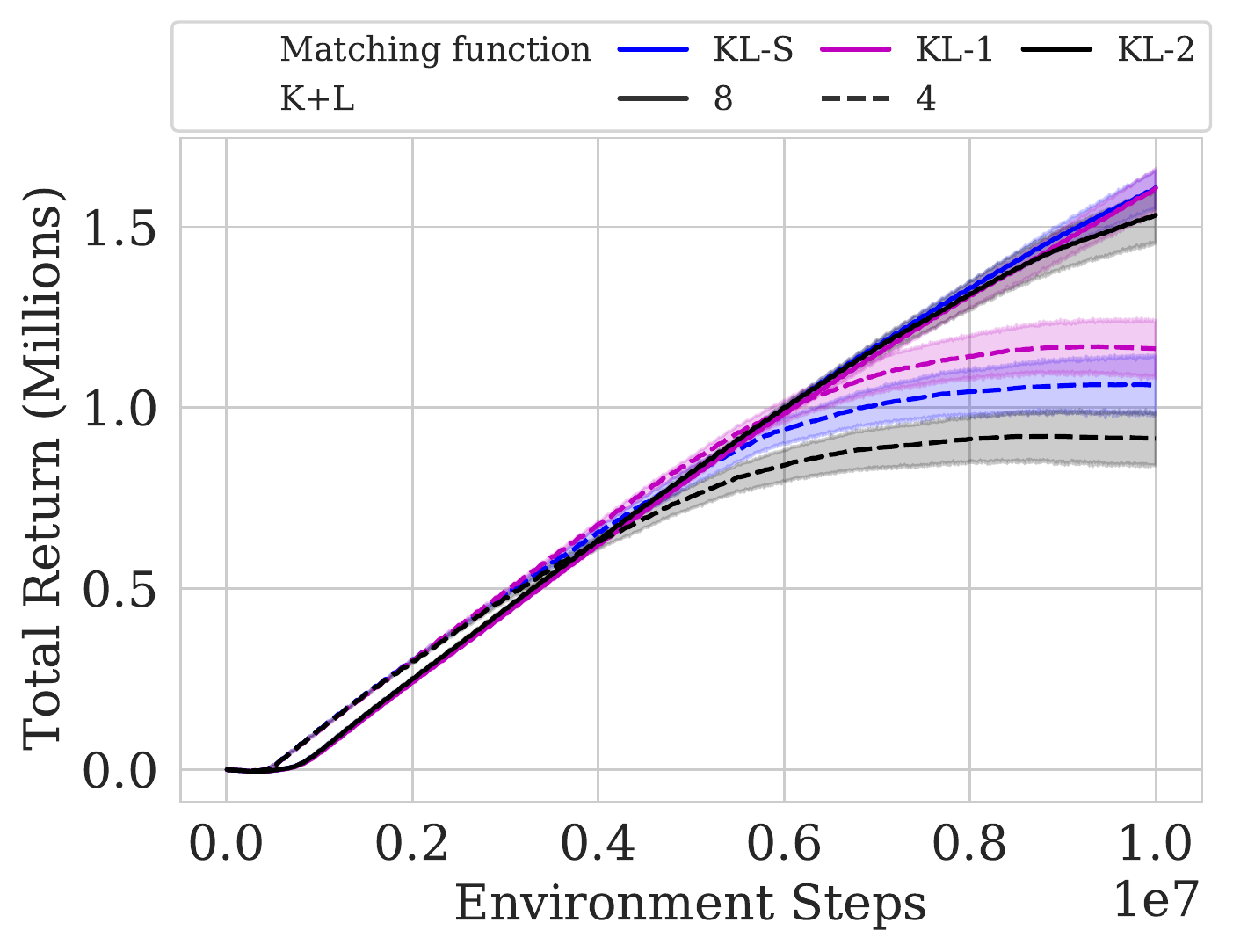}
    \caption{Total reward on two-colors with an actor-critic agent and different matching functions for \bmg. Shading: standard deviation over 50 seeds.}
    \label{fig:twocolor:ablation:matching}
    \vspace*{-12pt}
\end{wrapfigure}

\paragraph{Ablation: meta-regularization} To fully control for the role of meta-regularization, we conduct further experiments by comparing \bmg with and without entropy regularization (i.e. $\epsilon_{\text{meta}}$) in the $L$th target update step. \Cref{fig:twocolor:ablation:meta-reg} demonstrates that \bmg indeed suffers from myopia when $L=1$, resulting in a collapse of the entropy regularization weight range. However, increasing the meta-learning horizon by setting $L=7$ obtains a wide entropy regularization weight range. While adding meta-regularization does expand the range somewhat, the difference in total return is not statistically significant (right panel, \Cref{fig:twocolor:ablation:meta-reg}).

\paragraph{Ablation: target bootstrap} Our main \tb takes $L-1$ steps under the meta-learned update rule, i.e. the meta-learned entropy regularization weight schedule, and an $L$th policy-gradient step without entropy regularization. In this ablation, we very that taking a final step under a \emph{different} update rule is indeed critical. \Cref{fig:twocolor:ablation:meta-reg} shows that, for $K=1$ and $L\in\{1, 7\}$, using the meta-learned update rule for all target update steps leads to a positive feedback loop that results in maximal entropy regularization, leading to a catastrophic loss of performance (right panel, \Cref{fig:twocolor:ablation:meta-reg}). 

\paragraph{Ablation: matching function} Finally, we control for different choices of matching function. \Cref{fig:twocolor:ablation:matching} contrasts the mode-covering version, KL-1, with the mode-seeking version, KL-2, as well as the symmetric KL. We observe that, in this experiment, this choice is not as  significant as in other experiments. However, as in Atari, we find a the mode-covering version to perform slightly better. 

\subsection{Q-learning Experiments} \label{app:twocolors:q}

\paragraph{Agent} In this experiment, we test Peng's $Q(\lambda)$ \citep{Peng:1994qlambda} agent with $\varepsilon$-greedy exploration. The agent implements a feed-forward MLP to represent a Q-function $q_{\bx}$ that is optimised online. Thus, agent parameter update steps do not use batching but is done online (i.e. on each step). To avoid instability, we use a momentum term that maintains an Exponentially Moving Average (EMA) over the agent parameter gradient. In this experiment we fix all hyper-parameters of the update rule (\Cref{tab:twocolors:hypers}) and instead focuses on meta-learned $\varepsilon$-greedy exploration.

\begin{figure}[t!]
    \centering
    \begin{subfigure}{0.32\linewidth}
    \centering
    \includegraphics[width=\linewidth]{q_epsilon_plot2.pdf}
    \end{subfigure}
    \begin{subfigure}{0.32\linewidth}
    \centering
    \includegraphics[width=\linewidth]{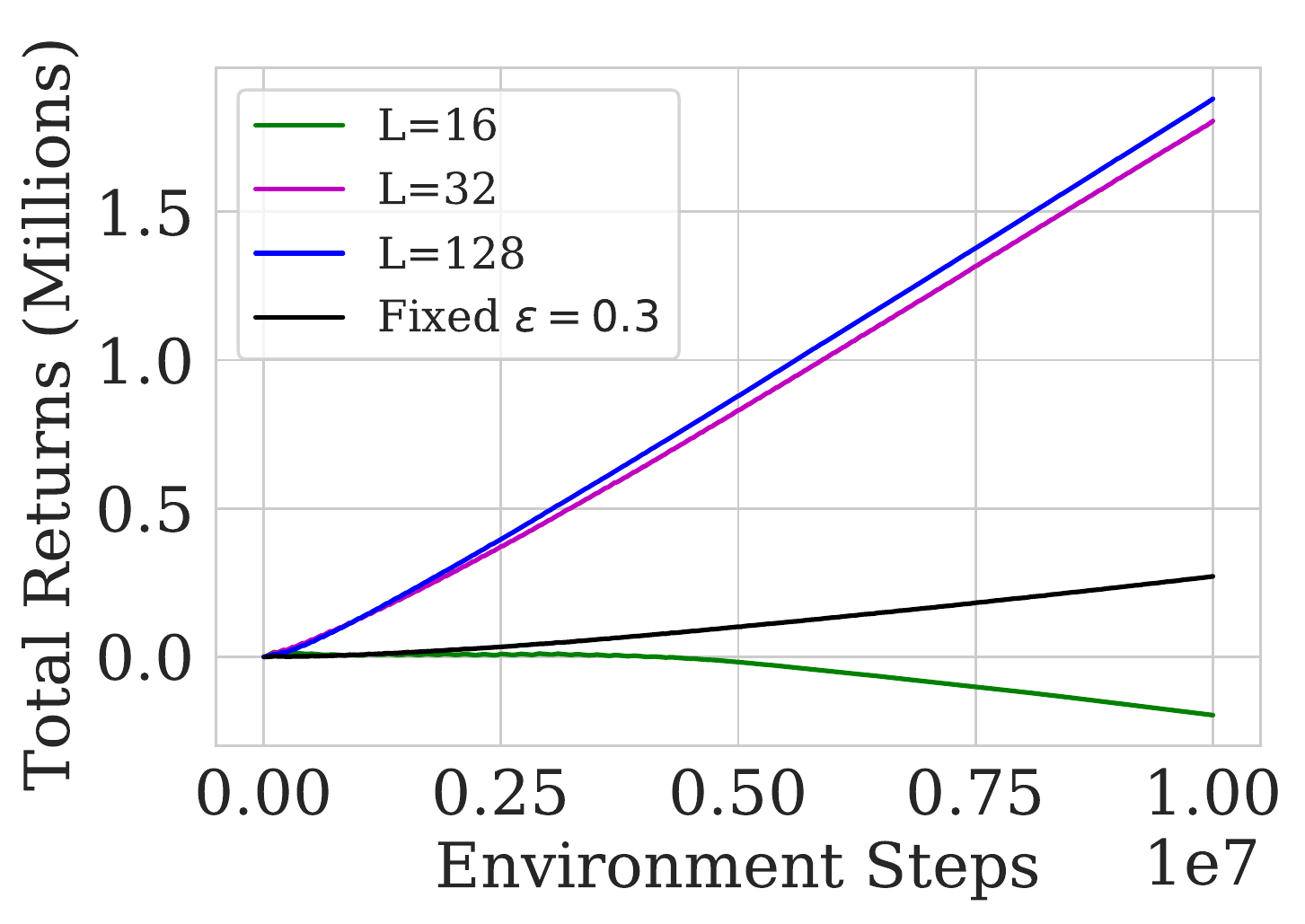}
    \end{subfigure}
    \begin{subfigure}{0.32\linewidth}
    \centering
    \includegraphics[width=\linewidth]{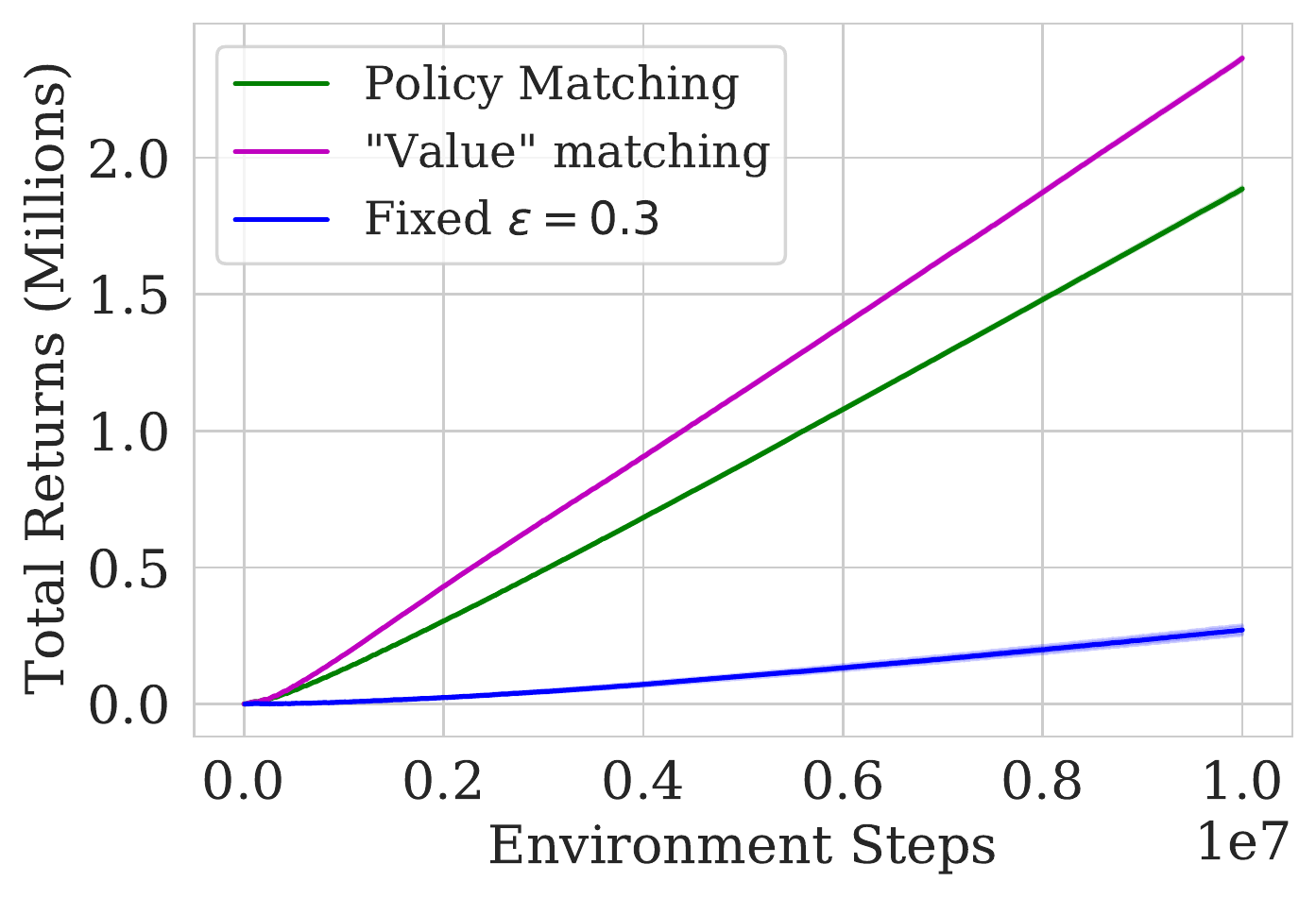}
    \end{subfigure}
    \caption{Results on two-colors under a $Q(\lambda)$ agent with meta-learned $\varepsilon$-greedy exploration under \bmg. Averaged over 50 seeds.}\label{fig:twocolor:ablation:qlambda}
\end{figure}

\paragraph{\bmg} We implement \bmg in a similar fashion to the actor-critic case. The meta-learner is represented by a smaller MLP $\varepsilon_{\bw}(\cdot)$ with meta-parameters $\bw$ that ingests the last 50 rewards, denoted by $\bt$, and outputs the $\varepsilon$ to use on the current time-step. That is to say, given meta-parameters $\bw$, the agent's policy is defined by
\begin{equation*}
\pi_{\bx}(\ba \g \bs_t, \bt_t, \bw) = \begin{cases}
1-\varepsilon_{\bw}(\bt_t) + \frac{\varepsilon_{\bw}(\bt_t)}{|A|} \quad &\text{if} \quad \ba = \argmax_{\bb} q_{\bx}(\bs_t, \bb) \\[1em]
\frac{\varepsilon_{\bw}(\bt_t)}{|A|} \quad &\text{else}.
\end{cases}
\end{equation*}

\paragraph{Policy-matching} This policy can be seen as a stochastic policy which takes the Q-maximizing action with probability $1-\varepsilon$ and otherwise picks an action uniformly at random. The level of entropy in this policy is regulated by the meta-learner. We define a \tb by defining a target policy under $q_{\tilde{\bx}}$, where $\tilde{\bx}$ is given by taking $L$ update steps. Since there are no meta-parameters in the update rule, all $L$ steps use the same update rule. However, we define the target policy as the greedy policy

\begin{equation*}
\pi_{\tilde{\bx}}(\ba \g \bs_t) = \begin{cases}
1 \quad &\text{if} \quad \ba = \argmax_{\bb} q_{\tilde{\bx}}(\bs_t, \bb) \\[1em]
0\quad &\text{else}.
\end{cases}
\end{equation*}

The resulting \bmg update is simple: minimize the KL-divergence $\mu^\pi(\tilde{\bx}, \bx) \coloneqq \KL{\pi_{\tilde{\bx}}}{\pi_{\bx}}$ by adjusting the entropy in $\pi_{\bx}$ through $\varepsilon_{\bw}$. Thus, policy-matching under this target encourages the meta-learner to match a greedy policy-improvement operation on a target $q_{\tilde{\bx}}$ that has been trained for a further $L$ steps. More specifically, if $\argmax_{\bb} q_{\tilde{\bx}}(\bs, \bb) = \argmax_{\bb} q_{\bx}(\bs, \bb)$, so that the greedy policy improvement matches the target, then the matching loss is minimised by setting $\varepsilon=0$. If greedy policy improvement does not correspond, so that acting greedily w.r.t. $q_{\bx}$ does not match the target, then the matching loss is minimised by increasing entropy, i.e. increasing $\varepsilon$. The meta-objective is defined in terms of $\bx$ as it does not require differentiation through the update-rule.

\paragraph{'Value'-matching} A disadvantage of policy matching is that it provides a sparse learning signal: $\varepsilon$ is increased when the target-policy differs from the current policy and decreased otherwise. The magnitude of the change depends solely on the current value of $\varepsilon$. It is therefore desirable to evaluate alternative matching functions that provide a richer signal. Inspired by value-matching for actor-critic agents, we construct a form of 'value' matching by taking the expectation over $q_{\bx}$ under the induced stochastic policy, $u_{\bx}(\bs) \coloneqq \sum_{\ba \in \cA} \pi_{\bx}(\ba \g \bs) q_{\bx}(\bs, \ba)$. The resulting matching objective is given by
\begin{equation*}
\mu^u(\tilde{\bx}, \bx) = \E{{\left(u_{\tilde{\bx}}(\bs) - u_{\bx}(\bs; \bt, \bw)\right)}^2}.
\end{equation*}
While the objective is structurally similar to value-matching, $u$ does not correspond to well-defined value-function since $q_{\bx}$ is not an estimate of the action-value of $\pi_{\bx}$.

\paragraph{Detailed results} \Cref{fig:twocolor:ablation:qlambda} shows the learned $\varepsilon$-schedules for different meta-learning horizons: if $L$ is large enough, the agent is able to increase exploration when the task switches and quickly recovers a near-optimal policy for the current cycle. \Cref{fig:twocolor:ablation:qlambda} further shows that a richer matching function, in this case in the form of 'value' matching, can yield improved performance.

\begin{table}[p]
    \centering
    \caption{Two-colors hyper-parameters}
    \begin{tabular}{l l}
        \toprule
        & \\  & \\\midrule
        \textbf{Actor-critic} 
        & \\\midrule\midrule
        Inner Learner & \\\midrule
        Optimiser & SGD \\
        Learning rate & 0.1 \\
        Batch size & 16 (losses are averaged) \\
        $\gamma$ & 0.99 \\
        $\mu$ & $\operatorname{KL}(\pi_{\tilde{x}} || \pi_{x'})$ \\
        MLP hidden layers ($v$, $\pi$) & 2 \\
        MLP feature size ($v$, $\pi$) & 256\\
        Activation Function & ReLU \\ & \\
        Meta-learner & \\\midrule 
        Optimiser & Adam\\
        $\epsilon$ (Adam) & $10^{-4}$ \\
        $\beta_1, \beta_2$ & 0.9, 0.999 \\
        Learning rate candidates & $\{3\cdot 10^{-6}, 10^{-5}, 3\cdot 10^{-5}$\newline$, 10^{-4}, 3\cdot10^{-4}\}$\\
        MLP hidden layers ($\epsilon$) & 1 \\
        MLP feature size ($\epsilon$) & 32 \\
        Activation Function & ReLU \\
        Output Activation & Sigmoid \\
         & \\  & \\\midrule
        \textbf{$Q(\lambda)$} 
        & \\\midrule\midrule
        Inner Learner & \\\midrule
        Optimiser & Adam\\
        Learning Rate & $3\cdot 10^{-5}$ \\
        $\epsilon$ (Adam) & $10^{-4}$ \\
        $\beta_1, \beta_2$ & 0.9, 0.999 \\
        Gradient EMA & 0.9 \\
        $\lambda$ & 0.7\\
        $\gamma$ & 0.99 \\
        MLP hidden layers (Q) & 2 \\
        MLP feature size (Q) & 256\\
        Activation Function & ReLU \\ & \\
        Meta-learner & \\\midrule 
        Learning Rate & $10^{-4}$ \\
        $\epsilon$ (Adam) & $10^{-4}$ \\
        $\beta_1, \beta_2$ & 0.9, 0.999 \\
        Gradient EMA & 0.9 \\
        MLP hidden layers ($\epsilon$) & 1 \\
        MLP feature size ($\epsilon$) & 32 \\
        Activation Function & ReLU \\
        Output Activation & Sigmoid \\
        \bottomrule
    \end{tabular}
    \label{tab:twocolors:hypers}
\end{table}

\clearpage
\section{Atari}\label{app:atari}

\paragraph{Setup} Hyper-parameters are reported in \Cref{tab:atari:hypers}. We follow the original IMPALA \citep{Espeholt:2018impala} setup, but do not down-sample or gray-scale frames from the environment. Following previous works \citep{Xu:2018metagradient,Zahavy:2020se}, we treat each game level as a separate learning problem; the agent is randomly initialized at the start of each learning run and meta-learning is conducted online during learning on a single task, see \Cref{alg:distributed-BMG}. We evaluate final performance between 190-200 million frames. All experiments are conducted with $3$ independent runs under different seeds. Each of the 57 levels in the Atari suite is a unique environment with distinct visuals and game mechanics. Exploiting this independence, statistical tests of aggregate performance relies on a total sample size per agent of $3 \times 57=171$.

\paragraph{Agent} We use a standard feed-forward agent that received a stack of the 4 most recent frames \citep{Mnih:2013pl} and outputs a softmax action probability along with a value prediction. The agent is implemented as a deep neural network; we use the IMPALA network architecture without LSTMs, with larger convolution kernels to compensate for more a complex input space, and with a larger conv-to-linear projection. We add experience replay (as per \citep{Schmitt:2020laser}) to allow multiple steps on the target. All agents use the same number of online samples; unless otherwise stated, they also use the same number of replay samples. We ablate the role of replay data in \Cref{app:atari:replay}.

\paragraph{STACX} The IMPALA agent introduces specific form of importance sampling in the actor critic update and while STACX largely rely on the same importance sampling mechanism, it differs slightly to facilitate the meta-gradient flow. The actor-critic update in STACX is defined by \Cref{eq:actor-critic} with the following definitions of $\rho$ and $G$. Let $\bar{\rho} \geq \bar{c} \in \rR_{+}$ be given and let $\nu: \cS \times \cA \to [0, 1]$ represent the \emph{behaviour policy} that generated the rollout. Given $\pi_{\bx}$ and $v_{\bar{\bz}}$, define the Leaky V-Trace target by
\begin{align*}
\eta_t &\coloneqq \pi_{\bx}(\ba_t \g \bs_t) \, / \, \nu(\ba_t \g \bs_t) \\
\rho_t &\coloneqq \alpha_{\rho} \min\{\eta_t, \bar{\rho}\} + (1-\alpha_{\rho}) \eta_t \\
c_i &\coloneqq \lambda \left(\alpha_c \min\{\eta_i, \bar{c} \} + (1-\alpha_{c}) \eta_i \right) \\
\delta_t &\coloneqq \rho_t \left(\gamma v_{\bar{\bz}}(\bs_{t+1}) + r_{t+1} - v_{\bar{\bz}}(\bs_{t})\right) \\
G^{(n)}_t &= v_{\bar{\bz}}(\bs_t) + \sum_{i=0}^{(n-1)} \gamma^{i} \left(\prod_{j=0}^{i-1} c_{t+j}\right) \delta_{t+i},
\end{align*}
with $\alpha_{\rho} \geq \alpha_{c}$. Note that---assuming $\bar{c} \geq 1$ and $\lambda=1$---in the on-policy setting this reduces to the n-step return since $\eta_t =1$, so $\rho_t=c_t=1$. The original v-trace target sets $\alpha_{\rho}=\alpha_{c}=1$.

STACX defines the main ``task'' as a tuple $(\pi^{0}, v^{0}, f(\cdot, \bw_0))$, consisting of a policy, critic, and an actor-critic objective (\Cref{eq:actor-critic}) under Leaky V-trace correction with meta-parameters $\bw_0$. Auxiliary tasks are analogously defined tuples $(\pi^i, v^i, f(\cdot, \bw_i))$, $i \geq 1$. All policies and critics share the same feature extractor but differ in a separate MLP for each $\pi^i$ and $v^i$. The objectives differ in their hyper-parameters, with all hyper-parameters being meta-learned. Auxiliary policies are not used for acting; only the main policy $\pi^{0}$ interacts with the environment. The objective used to update the agent's parameters is the sum of all tasks (each task is weighted through $\epsilon_{\text{PG}}, \epsilon_{\text{EN}}, \epsilon_{\text{TD}}$). The objective used for the \mg update is the original IMPALA objective under fixed hyper-parameters $\bp$ (see Meta-Optimisation in \Cref{tab:atari:hypers}). Updates to agent parameters and meta-parameters happen simultaneously on rollouts $\tau$. Concretely, let $\bm$ denote parameters of the feature extractor, with $(\bx_i, \bz_i)$ denoting parameters of task $i$'s policy MLP and critic MLP. Let $\bu_i \coloneqq (\bm, \bx_i, \bz_i)$ denote parameters of $(\pi^i, v^i)$, with $\bu \coloneqq (\bm, \bx_0, \bz_0, \ldots \bx_n, \bz_n)$. Let $\bw = (\bw_0, \ldots, \bw_n)$ and denote by $\bh$ auxiliary vectors of the optimiser. Given (a batch of) rollout(s) $\tau$, the STACX update is given by 
\begin{align*}
\big( \bu^{(1)}, \bh^{(1)}_u \big) &= \operatorname{RMSProp}\left(\bu, \bh_u, \bg_u\right)  &&&  \bg_u &= \nabla_{u} \sum_{i=1}^n f_\tau\big(\bu_i; \bw_i\big)&&\\
\big(\bw^{(1)}, \bh^{(1)}_{w}\big) &= \operatorname{Adam}\left(\bw, \bh_w, \bg_w \right) &&& \bg_w &=  \nabla_w f_\tau\big(\bu^{(1)}_0(\bw); \bp\big).&&
\end{align*}

\paragraph{\bmg} We use the same setup, architecture, and hyper-parameters for \bmg as for STACX unless otherwise noted; the central difference is the computation of $\bg_w$. For $L=1$, we compute the bootstrapped meta-gradient under $\mu_\tau$ on data $\tau$ by
\begin{equation*}
\bg_w = \nabla_w \mu_\tau\left(\tilde{\bu}_0, \bu^{(1)}_0(\bw)\right), \quad \text{where} \quad \big(\tilde{\bu}_0, \_\big) = \operatorname{RMSProp}\left(\bu^{(1)}_0, \bh^{(1)}_u, \nabla_u f_\tau\big(\bu^{(1)}_0; \bp \big)\right).
\end{equation*}
Note that the target uses the same gradient $\nabla_u f(\bu^{(1)}_0; \bp)$ as the outer objective in STACX; hence, \bmg does not use additional gradient information or additional data for $L=1$. The \emph{only} extra computation is the element-wise update required to compute $\tilde{\bu}_0$ and the computation of the matching loss. We discuss computational considerations in \Cref{app:atari:compute}. For $L>1$, we take $L-1$ step under the meta-learned objective with different replay data in each update. To write this explicitly, let $\tau$ be the rollout data as above. Let $\tilde{\tau}^{(l)}$ denote a separate sample of only replay data used in the $l$th target update step. For $L>1$, the \tb is described by the process 
\begin{align*}
\big( \tilde{\bu}^{(1)}_0, \tilde{\bh}_{u}^{(1)}\big) &= \operatorname{RMSProp}\left(\bu^{(1)}_0, \bh_u^{(1)}, \bg_u^{(1)} \right), &&& \bg_u^{(1)} &=  \nabla_u \sum_{i=1}^n f_{ \tilde{\tau}^{(1)}}\big(\bu^{(1)}_i; \bw_i \big) &&\\
\big( \tilde{\bu}^{(2)}_0, \tilde{\bh}_{u}^{(2)}\big) &= \operatorname{RMSProp}\left(\tilde{\bu}^{(1)}_0, \tilde{\bh}_{u}^{(1)}, \tilde{\bg}^{(1)}_u\right), &&& \tilde{\bg}_u^{(1)} &=  \nabla_u \sum_{i=1}^n f_{\tilde{\tau}^{(2)}}\big(\tilde{\bu}^{(1)}_i; \bw_i \big) \\
&\,\,\,\vdots&&&&&\\
\big(\tilde{\bu}_0, \_ \big) &= \operatorname{RMSProp}\left(\tilde{\bu}^{(L-1)}_0, \tilde{\bh}_{u}^{(L-1)}, \tilde{\bg}^{(L-1)}_u \right), &&& \tilde{\bg}_{u}^{(L-1)} &= \nabla_u f_{\tau}\big(\tilde{\bu}^{(L-1)}_0, \bp \big).&&
\end{align*}
Targets and corresponding momentum vectors are discarded upon computing the meta-gradient. This \tb corresponds to following the meta-learned update rule for $L-1$ steps, with a final step under the IMPALA objective. We show in \Cref{app:atari:LvsK} that this final step is crucial to stabilise meta-learning. For pseudo-code, see \Cref{alg:distributed-BMG}.

Matching functions are defined in terms of the rollout $\tau$ and with targets defined in terms of the main task $\bu_0$. Concretely, we define the following objectives:
\begin{align*}
\mu^{\pi}_\tau\left(\tilde{\bu}_0, \bu^{(1)}_0(\bw) \right) &= \KL{\pi_{\tilde{\bu}_0}}{\pi_{\bu^{(1)}_0(\bw)}}, \\[1em]
\mu^{v}_\tau\left(\tilde{\bu}_0, \bu^{(1)}_0(\bw) \right) &= \E{{\left(v_{\tilde{\bu}_0} - v_{\bu^{(1)}_0(\bw)}\right)}^2}, \\[1em]
\mu^{\pi + v}_\tau\left(\tilde{\bu}_0, \bu^{(1)}_0(\bw) \right) &= \mu^\pi_\tau\left(\tilde{\bu}_0, \bu^{(1)}_0(\bw) \right) + \lambda \mu^v_\tau\left(\tilde{\bu}_0, \bu^{(1)}_0(\bw) \right), \qquad \lambda=0.25, \\[1em]
\mu^{L2}\left(\tilde{\bu}_0, \bu^{(1)}_0(\bw) \right) &= {\left\| \tilde{\bu}_0 - \bu^{(1)}_0(\bw) \right\|}_2.
\end{align*}

\clearpage

\begin{algorithm}[t!]
  \caption{Distributed $N$-step RL actor loop}\label{alg:dist-actor-loop}
  \begin{algorithmic}
    \Require $N$                                                                        \Comment{Rollout length.}
    \Require $\cR$                                                                      \Comment{Centralised replay server.}
    \Require $d$                                                                        \Comment{Initial state method.}
    \Require $c$                                                                        \Comment{Parameter sync method.}
    \While{True}
      \If{$|\cB| = N$}
        \State $\cR \leftarrow \cR \cup \cB$                                            \Comment{Send rollout to replay.} 
        \State $\bx \leftarrow c()$                                                     \Comment{Sync parameters from learner.}
        \State $\bs \leftarrow d(\bs)$                                                    \Comment{Optional state reset.}                                    
        \State $\cB \leftarrow (\bs)$                                                     \Comment{Initialise rollout.}                                    
      \EndIf
      \State $\ba \sim \pi_{\bx}(\bs)$                                                  \Comment{Sample action.}
      \State $\bs, r \leftarrow \operatorname{env}(\bs, \ba)$                           \Comment{Take a step in environment.}
      \State $\cB \leftarrow \cB \cup (\ba, r, \bs)$                                    \Comment{Add to rollout.}
    \EndWhile
  \end{algorithmic}
\end{algorithm}

\begin{algorithm}[t!]
  \caption{$K$-step distributed learning loop}\label{alg:inner-loop-dist}
  \begin{algorithmic}
    \Require $\cB_1, \cB_2, \ldots, \cB_K$                                              \Comment{$K$ $N$-step rollouts.}
    \Require $\bx \in \rR^{n_x}$, $\bz \in \rR^{n_z}$, $\bw \in \rR^{n_w}$               \Comment{Policy, value function, and meta parameters.}
    \For{$k=1, 2, \ldots, K$}
      \State $(\bx, \bz) \leftarrow \varphi((\bx, \bz), \cB_k, \bw)$                      \Comment{Inner update step.}
    \EndFor
    \State \textbf{return} $\bx$, $\bz$
  \end{algorithmic}
\end{algorithm}

\begin{algorithm}[t!]
  \caption{Distributed RL with BMG}\label{alg:distributed-BMG}
  \begin{algorithmic}
    \Require $N, K, L, M$                                                               \Comment{Rollout length, meta-update length, bootstrap length, parallel actors.}
    \Require $\bx \in \rR^{n_x}$, $\bz \in \rR^{n_z}$, $\bw \in \rR^{n_w}$               \Comment{Policy, value function, and meta parameters.}
    \State $\bu \leftarrow (\bx, \bz)$
    \State Initialise $\cR$ replay buffer                                               \Comment{Collects $N$-step trajectories $\cB$ from actors.}
    \State Initialise $M$ asynchronous actors                                           \Comment{Run concurrently, \Cref{alg:dist-actor-loop}.}
    \While{True}
      \State $\{\cB^{(k)}\}_{k=1}^{K+L} \sim \cR$                                           \Comment{Sample $K$ rollouts from replay.}
      \State $\bu^{(K)} \leftarrow \operatorname{InnerLoop}(\bu, \bw, \{\cB^{(k)}\}_{k=1}^K)$ \Comment{$K$-step inner loop, \Cref{alg:inner-loop-dist}.}
      \State $\bu^{(K+L-1)} \leftarrow \operatorname{InnerLoop}(\bu^{(K)}, \bw, \{\cB^{(l)}\}_{l=K}^{L-1})$ \Comment{$L-1$-step bootstrap, \Cref{alg:inner-loop-dist}.}
      \State $\tilde{\bu} \leftarrow \bu^{(K+L-1)} - \alpha \nabla_u \ell(\bu^{(K+L-1)}, \cB^{(K+L)})$ \Comment{Gradient step on objective $\ell$.}
      \State $\bw \leftarrow \bw - \beta \nabla_w \mu(\tilde{\bu}, \bu^{(K)}(\bw))$         \Comment{BMG outer step.}
      \State $\bu \leftarrow \bu^{K}$                                                       \Comment{Optional: continue from $K+L-1$ update.}
      \State Send parameters $\bx$ from learner to actors.
    \EndWhile
  \end{algorithmic}
\end{algorithm}

\begin{table}[t!]
    \centering
    \caption{Atari hyper-parameters}
    \begin{tabular}{l l}
        \toprule
                                                        & \\
        ALE \citep{Bellemare:2013ar}                    & \\\midrule
        Frame dimensions (H, W, D)                      & 160, 210, 3 \\
        Frame pooling                                   & None \\
        Frame grayscaling                               & None \\
        Num. stacked frames                             & 4 \\
        Num. action repeats                             & 4 \\
        Sticky actions \citep{Machado:2018re}           & False \\
        Reward clipping                                 & $[-1, 1]$ \\
        $\gamma=0$ loss of life                         & True \\ 
        Max episode length                              & 108 000 frames \\
        Initial noop actions                            & 30 \\         
        & \\
        IMPALA Network \citep{Espeholt:2018impala}      & \\\midrule
        Convolutional layers                            & 4 \\
        Channel depths                                  & $64, 128, 128, 64$ \\
        Kernel size                                     & 3 \\
        Kernel stride                                   & 1 \\
        Pool size                                       & 3 \\
        Pool stride                                     & 2 \\
        Padding                                         & 'SAME'\\
        Residual blocks per layer                       & 2 \\
        Conv-to-linear feature size                     & $512$ \\
        & \\
        STACX \citep{Zahavy:2020se}                     & \\\midrule
        Auxiliary tasks                                 & 2\\
        MLP hidden layers                               & 2 \\
        MLP feature size                                & $256$ \\
        Max entropy loss value                          & 0.9\\
        & \\
        Optimisation                                    & \\\midrule
        Unroll length                                   & 20 \\
        Batch size                                      & 18 \\
        \hspace{1em} \emph{of which from replay}        & \emph{12} \\
        \hspace{1em} \emph{of which is online data}     & \emph{6} \\
        Replay buffer size                              & 10 000\\
        LASER \citep{Schmitt:2020laser} KL-threshold    & 2\\
        Optimiser                                       & RMSProp\\
        Initial learning rate                           & $10^{-4}$\\
        Learning rate decay interval                    & 200 000 frames\\
        Learning rate decay rate                        & Linear to 0\\
        Momentum decay                                  & 0.99\\
        Epsilon                                         & $10^{-4}$\\
        Gradient clipping, max norm                     & 0.3\\
        & \\
        Meta-Optimisation                               & \\\midrule
        $\gamma$, $\lambda$, $\bar{\rho}$, $\bar{c}$, $\alpha$  & $0.995$, $1$, $1$, $1$, $1$\\
        $\epsilon_{\text{PG}}$, $\epsilon_{\text{EN}}$, $\epsilon_{\text{TD}}$  & $1$, $0.01$, $0.25$\\
        Optimiser                                       & Adam\\
        Learning rate                                   & $10^{-3}$\\
        $\beta_1, \beta_2$                              & $0.9$, $0.999$\\
        Epsilon                                         & $10^{-4}$\\
        Gradient clipping, max norm                     & 0.3\\
        \bottomrule
    \end{tabular}
    \label{tab:atari:hypers}
\end{table}

\FloatBarrier

\subsection{\bmg Decomposition}\label{app:atari:gains}

\begin{wrapfigure}{r}{0.4\linewidth}
    \centering
    \vspace*{-12pt}
    \includegraphics[width=\linewidth]{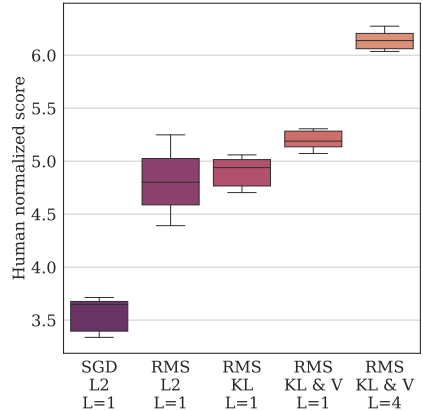}
    \caption{Atari \bmg decomposition. We report human normalized score (median, quantiles, $\frac{1}{2}$IQR) between 190-200M frames over all 57 games, with 3 independent runs for each configuration.}
    \label{fig:atari:ablation:decomp}
    \vspace*{-12pt}
\end{wrapfigure}

In this section, we decompose the \bmg agent to understand where observed gains come from. To do so, we begin by noting that---by virtue of \Cref{eq:equivalence}---STACX is a special case of \bmg under $\mu(\tilde{\bu}, \bu^{(1)}_0(\bw)) = \| \tilde{\bu} - \bu^{(1)}_0(\bw) \|_2^2$ with $\tilde{\bu} = \bu^{(1)}_0 - \frac12 \nabla_u f_\tau(\bu^{(1)}_0; \bp)$. That is to say, if the target is generated by a pure SGD step and the matching function is the squared L2 objective. We will refer to this configurations as \texttt{SGD, L2}. From this baseline---i.e. STACX---a minimal change is to retain the matching function but use RMSProp to generate the target. We refer t o this configuration as \texttt{RMS, L2}. From \Cref{cor:btm2}, we should suspect that correcting for curvature should improve performance. While RMSProp is not a representation of the metric $G$ in the analysis, it is nevertheless providing some form of curvature correction. The matching function can then be used for further corrections.

\Cref{fig:atari:ablation:decomp} shows that changing the target update rule from SGD to RMSProp, thereby correcting for curvature, yields a substantial gain. This supports our main claim that \bmg can control for curvature and thereby facilitate meta-optimisation. Using the squared Euclidean distance in parameter space (akin to \citep{Nichol:2018uo,Flennerhag:2019tl}) is surprisingly effective. However, it exhibits substantial volatility and is prone to crashing (c.f. \Cref{fig:atari:ablation:replay}); changing the matching function to policy KL-divergence stabilizes meta-optimisation. Pure policy-matching leaves the role of the critic---i.e. policy evaluation---implicit. Having an accurate value function approximation is important to obtain high-quality policy gradients. It is therefore unsurprising that adding value matching provides a statistically significant improvement. Finally, we find that \bmg can also mitigate myopia by extending the meta-learning horizon, in our \tb by unrolling the meta-learned update rule for $L-1$ steps. This is roughly as important as correcting for curvature, in terms of the relative performance gain.

\begin{table}[b!]
    \centering
    \caption{Meta-gradient cosine similarity and variance per-game at 50-150M frames over 3 seeds.}
    \label{tab:atari:gains}
\begin{tabular}{l|cccc}
\toprule
& KL & KL \& V & L2 & STACX \\
\midrule
&&&&\\
Kangaroo &&&&\\\midrule
Cosine similarity & 0.19 (0.02) & 0.11 (0.01) & 0.001 (1e-4) & 0.009 (0.01)\\
Meta-gradient variance & 0.05 (0.01) & 0.002 (1e-4) & 2.3e-9 (4e-9) & 6.4e-4 (7e-4)\\
Meta-gradient norm \/ variance & 49 & 68 & 47 & 44\\
&&&&\\
Ms Pacman &&&&\\\midrule
Cosine similarity & 0.11 (0.006) & 0.03 (0.006) & 0.002 (4e-4) & -0.005 (0.01)\\
Meta-gradient variance & 90 (12) & 0.8 (0.2) & 9.6e-7 (2e-8) & 0.9 (0.2) \\
Meta-gradient norm \/ variance & 2.1 & 7.9 & 4.2 & 2.1 \\
&&&&\\
Star Gunner &&&&\\\midrule
Cosine similarity & 0.13 (0.008) & 0.07 (0.001) & 0.003 (5e-4) & 0.002 (0.02) \\
Meta-gradient variance & 4.2 (1.1) & 1.5 (2.3) & 1.9e-7 (3e-7) & 0.06 (0.03) \\
Meta-gradient norm \/ variance & 6.1 & 6.6 & 11.7 & 6.5\\
\bottomrule
\end{tabular}
\end{table}

To further support these findings, we estimate the effect \bmg has on ill-conditioning and meta-gradient variance on three games where both STACX and \bmg exhibit stable learning (to avoid confounding factors of non-stationary dynamics): Kangaroo, Star Gunner, and Ms Pacman. While the Hessian of the meta-gradient is intractable, an immediate effect of ill-conditioning is gradient interference, which we can estimate through cosine similarity between consecutive meta-gradients. We estimate meta-gradient variance on a per-batch basis. \Cref{tab:atari:gains} presents mean statistics between 50M and 150M frames, with standard deviation over 3 seeds. \bmg achieves a meta-gradient cosine similarity that is generally 2 orders of magnitude larger than that of STACX. It also explicitly demonstrates that using the KL divergence as matching function results in better curvature relative to using the L2 distance. The variance of the meta-gradient is larger for BMG than for STACX (under KL). This is due to intrinsically different gradient magnitudes. To make comparisons, we report the gradient norm to gradient variance ratio, which roughly indicates signal to noise. We note that in this metric, \bmg tends to be on par with or lower than that of STACX.

\subsection{Effect of Replay}\label{app:atari:replay}

\begin{wrapfigure}{r}{0.4\linewidth}
    \centering
    \vspace*{-22pt}
    \includegraphics[width=\linewidth]{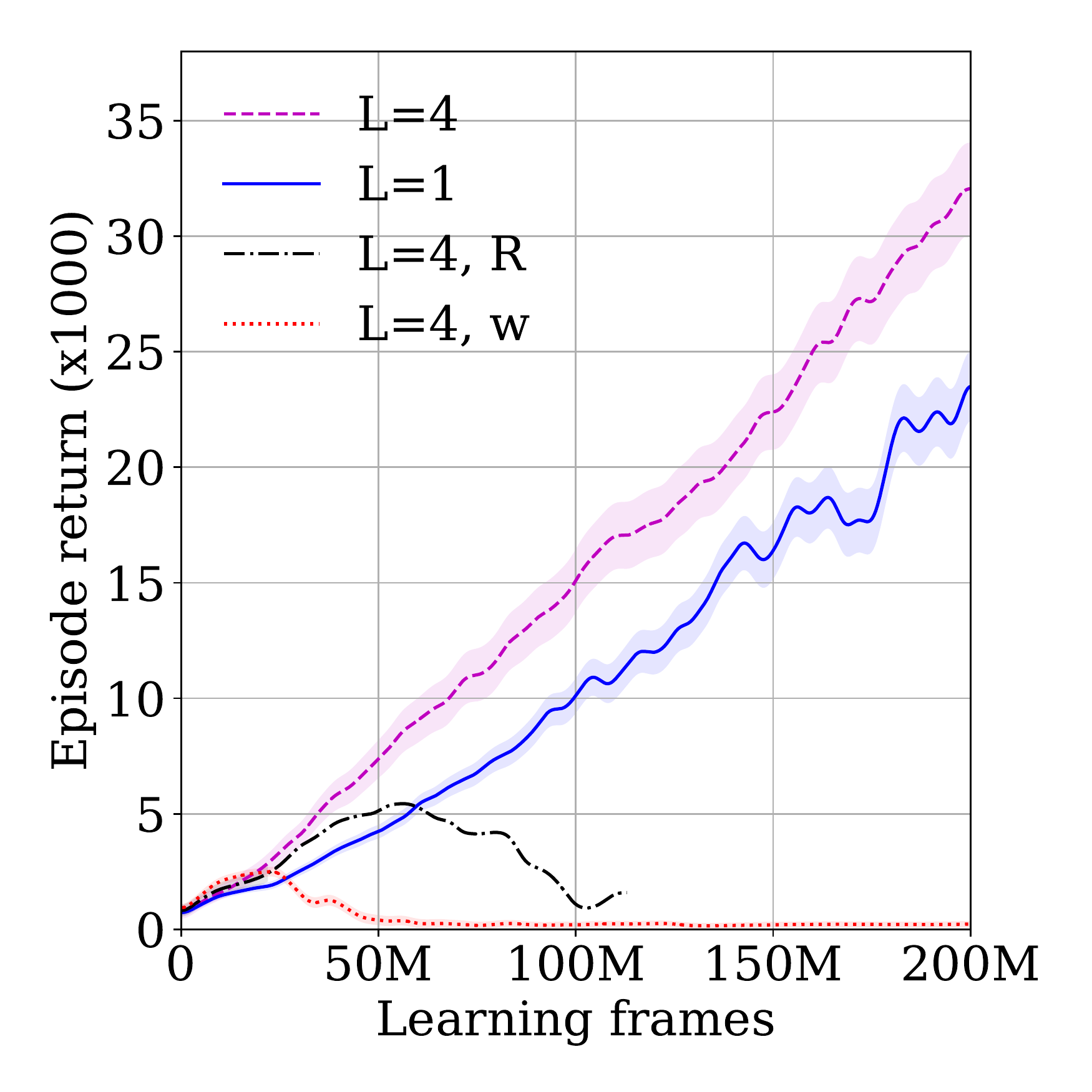}
    \caption{Atari, learning curves on MS Pacman for $\operatorname{KL} \& V$. $L=4,R$ computes the $L$the step on only replay data. $L=4,w$ uses the meta-learned objective for the $L$th step (with $L$th step computed on online and replay data, as per default). Shading depicts standard deviation across $3$ seeds.}
    \label{fig:atari:ablation:LR}
    \vspace*{-24pt}
\end{wrapfigure}

We find that extending the meta-learning horizon by taking more steps on the target leads to large performance improvements. To obtain these improvements, we find that it is critical to re-sample replay data for each step, as opposed to re-using the same data for each rollout. \Cref{fig:atari:ablation:LR} demonstrates this for $L=4$ on MsPacman. This can be explained by noting that reusing data allows the target to overfit to the current batch. By re-sampling replay data we obtain a more faithful simulation of what the meta-learned update rule would produce in $L-1$ steps. 

The amount of replay data is a confounding factor in the \emph{meta-objective}. We stress that the agent parameter update is \emph{always} the same in any experiment we run. That is to say, the additional use of replay data \emph{only} affects the computation of the meta-objective. To control for this additional data in the meta-objective, we consider a subset of games where we see large improvements from $L>1$. We run STACX and \bmg with $L=1$, but increase the amount of replay data used to compute the \emph{meta-objective} to match the total amount of replay data used in the meta-objective when $L=4$. This changes the online-to-replay ratio from $6:12$ to $6:48$ in the meta objective.

\Cref{fig:atari:ablation:replay} shows that the additional replay data is not responsible for the performance improvements we see for $L=4$. In fact, we find that increasing the amount of replay data in the meta-objective exacerbates off-policy issues and leads to \emph{reduced} performance. It is striking that \bmg can make use of this extra off-policy data. Recall that we use \emph{only} off-policy replay data to take the first $L-1$ steps on the target, and use the original online-to-replay ratio ($6:12$) in the $L$th step. In \Cref{fig:atari:ablation:LR}, we test the effect of using \emph{only} replay for all $L$ steps and find that having online data in the $L$th update step is critical. These results indicate that \bmg can make effective use of replay by simulating the effect of the meta-learned update rule on off-policy data and correct for potential bias using online data.

\begin{figure}[t!]
    \centering
    \includegraphics[width=\linewidth]{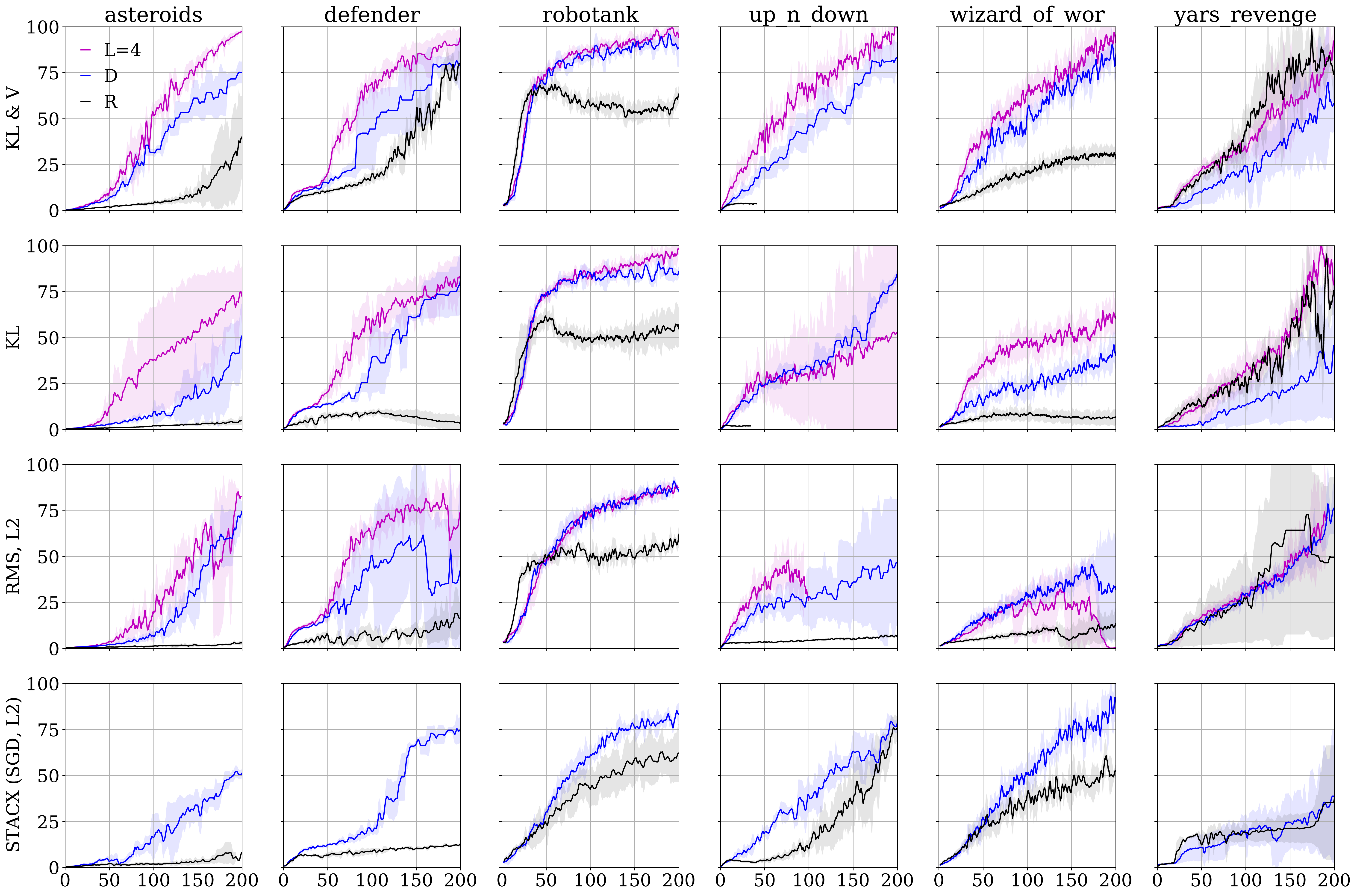}
    \caption{Atari experience replay ablation. We report episode returns, normalized to be in the range $[0, \text{max return}]$ for each game for ease of comparison. Shading depicts standard deviation across $3$ seeds. $D$ denotes default \bmg configuration for $L=1$, with $L=4$ analgously defined. $R$ denotes $L=1$, but with additional replay in the meta-objective to match the amount of replay used in $L=4$.}
    \label{fig:atari:ablation:replay}
\end{figure}

\subsection{L vs K}\label{app:atari:LvsK}

Given that increasing $L$ yields substantial gains in performance, it is interesting to compare against increasing $K$, the number of agent parameter updates to backpropagate through. For fair comparison, we use an identical setup as for $L>1$, in the sense that we use new replay data for each of the initial $K-1$ steps, while we use the default rollout $\tau$ for the $K$th step. Hence, the data characteristics for $K>1$ are identical to those of $L>1$. 

However, an important difference arise because each update step takes $K$ steps on the agent's parameters. This means that---withing the 200 million frames budget, $K>1$ has a computational advantage as it is able to do more updates to the agent's parameters. With that said, these additional $K-1$ updates use replay data only. 

\Cref{fig:atari:ablation:kvsl} demonstrates that increasing $K$ is fundamentally different from increasing $L$. We generally observe a loss of performance, again due to interference from replay. This suggests that target bootstrapping allows a fundamentally different way of extending the meta-learning horizon. In particular, these results suggests that meta-bootstrapping allows us to use relatively poor-quality (as evidence by $K>1$) approximations to long-term consequences of the meta-learned update rule without impairing the agent's actual parameter update. Finally, there are substantial computational gains from increasing the meta-learning horizon via $L$ rather than $K$ (\Cref{fig:atari:ablation:compute}).

\begin{figure}[t!]
    \centering
    \includegraphics[width=\linewidth]{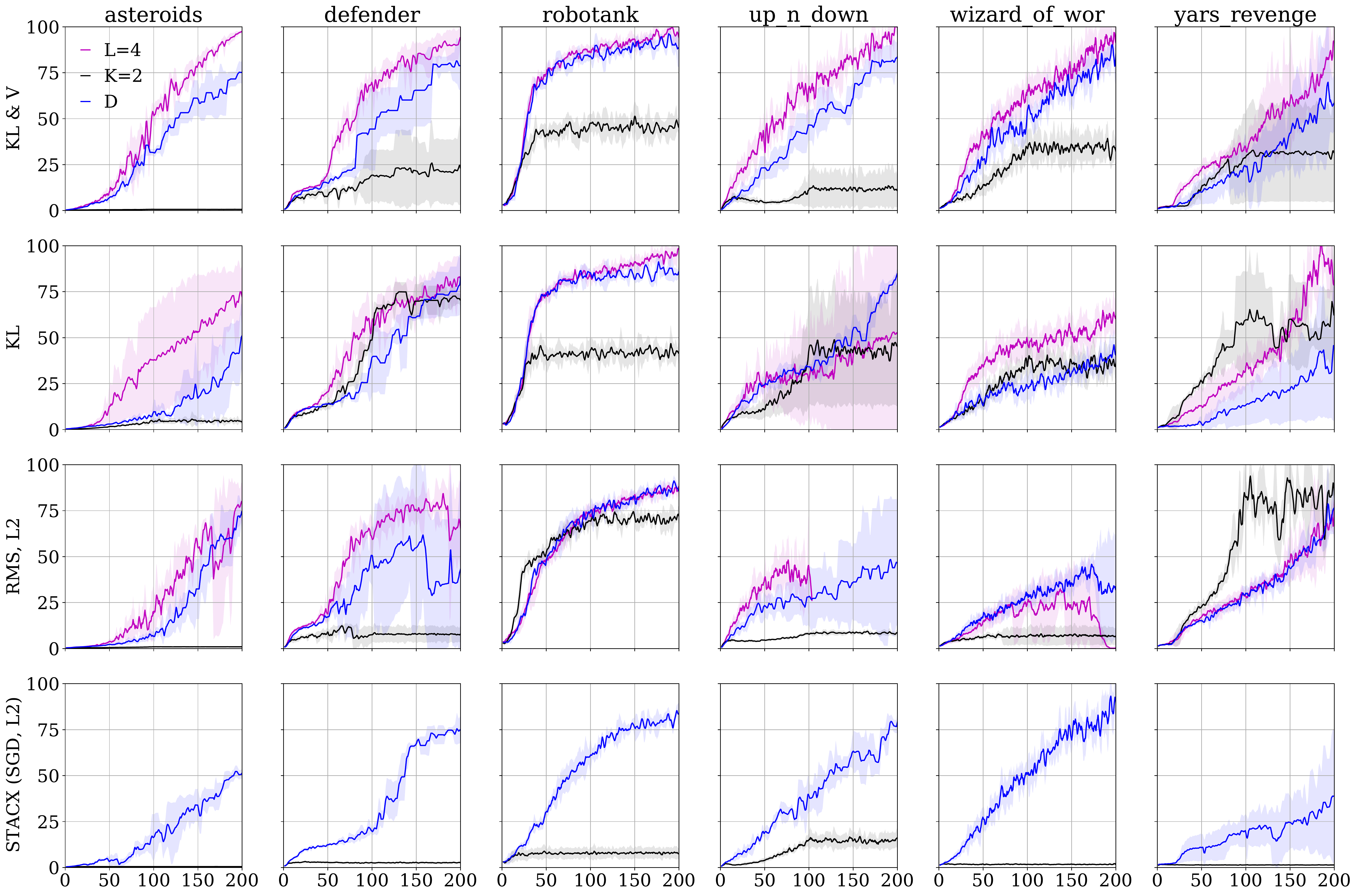}
    \caption{Atari $K$ vs $L$ ablation. We report episode returns, normalized to be in the range $[0, \text{max return}]$ for each game for ease of comparison. Shading depicts standard deviation across $3$ seeds. $D$ denotes default \bmg configuration for $L=1$, with $L=4$ analogously defined. $K=2$ denotes $L=1$, but $K=2$ steps on agent parameters.}
    \label{fig:atari:ablation:kvsl}
\end{figure}

\subsection{Computational characteristics}\label{app:atari:compute}

\begin{figure}[b!]
    \centering
    \includegraphics[width=\linewidth]{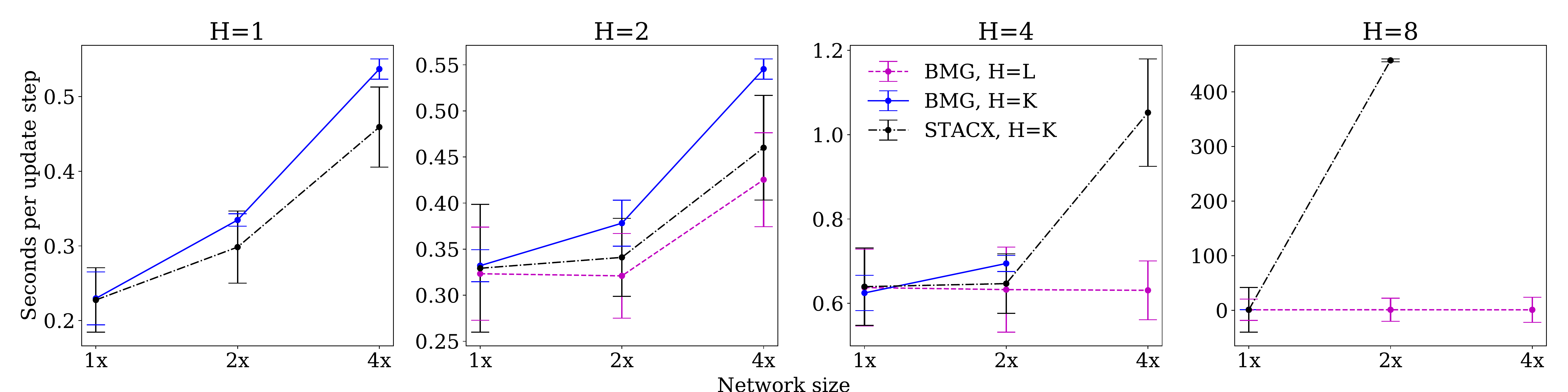}
    \caption{Atari: Computational characteristics as a function of network size (see \Cref{app:atari:compute}) and meta-learning horizon $H$. When $H=K$, we vary the number of update steps to backpropagate through (with $L=1$ for \bmg). When $H=L$, we vary the number of target update steps (with $K=1$). Measurements are taken over the first 20 million learning frames on the game Pong.}
    \label{fig:atari:ablation:compute}
\end{figure}

IMPALA's distributed setup is implemented on a single machine with 56 CPU cores and 8 TPU \citep{Jouppi:2017datacenter} cores. 2 TPU cores are used to act in 48 environments asynchronously in parallel, sending rollouts to a replay buffer that a centralized learner use to update agent parameters and meta-parameters. Gradient computations are distributed along the batch dimension across the remaining 6 TPU cores. All Atari experiments use this setup; training for 200 millions frames takes 24 hours.

\Cref{fig:atari:ablation:compute} describes the computational properties of STACX and \bmg as a function of the number of agent parameters and the meta-learning horizon, $H$. For STACX, the meta-learning horizon is defined by the number of update steps to backpropagate through, $K$. For \bmg, we test one version which holds $L=1$ fixed and varies $K$, as in for STACX, and one version which holds $K=1$ ficed and varies $L$. To control for network size, we vary the number of channels in the convolutions of the network. We use a base of channels per layer, $x = (16, 32, 32, 16)$, that we multiply by a factor $1, 2, 4$. Thus we consider networks with kernel channels $1x = (16, 32, 32, 16)$, $2x = (32, 64, 64, 32)$, and $4x = (64, 128, 128, 64)$. Our main agent uses a network size (\Cref{tab:atari:hypers}) equal to $4 x$. We found that larger networks would not fit into memory when $K>1$.

First, consider the effect of increasing $K$ (with $L=1$ for \bmg). For the small network ($1x$), \bmg is roughly on par with STACX for all values of $K$ considered. However, \bmg exhibits poorer scaling in network size, owing to the additional update step required to compute the target bootstrap. For $4x$, our main network configuration, we find that \bmg is 20\% slower in terms of wall-clock time. Further, we find that neither STACX nor \bmg can fit the $4x$ network size in memory when $K=8$.

Second, consider the effect of increasing $L$ with \bmg (with $K=1$). For $1x$, we observe no difference in speed for any $H$. However, increasing $L$ exhibits a dramatic improvement in scaling for $H>2$---especially for larger networks. In fact, $L=4$ exhibits a factor $2$ speed-up compared to STACX for $H=4, 4x$ and is two orders of magnitude faster for $H=8, 2x$. 

\subsection{Additional results}\label{app:atari:pergame}

\Cref{fig:atari-all-games} presents per-game results learning curve for main configurations considered in this paper. \Cref{tab:atari-all-games} presents mean episode returns per game between 190-200 millions frames for all main configurations. Finally, we consider two variations of \bmg in the $L=1$ regime (\Cref{fig:atari:ablation:alt}); one version (NS) re-computes the agent update after updating meta-parameters in a form of trust-region method. The other version (DB) exploits that the target has a taken a further update step and uses the target as new agent parameters. While NS is largely on par, interestingly, DB fails completely.

\subsection{Data and hyper-parameter selection}

\begin{wrapfigure}{r}{0.4\linewidth}
    \centering
    \vspace*{-24pt}
    \includegraphics[width=\linewidth]{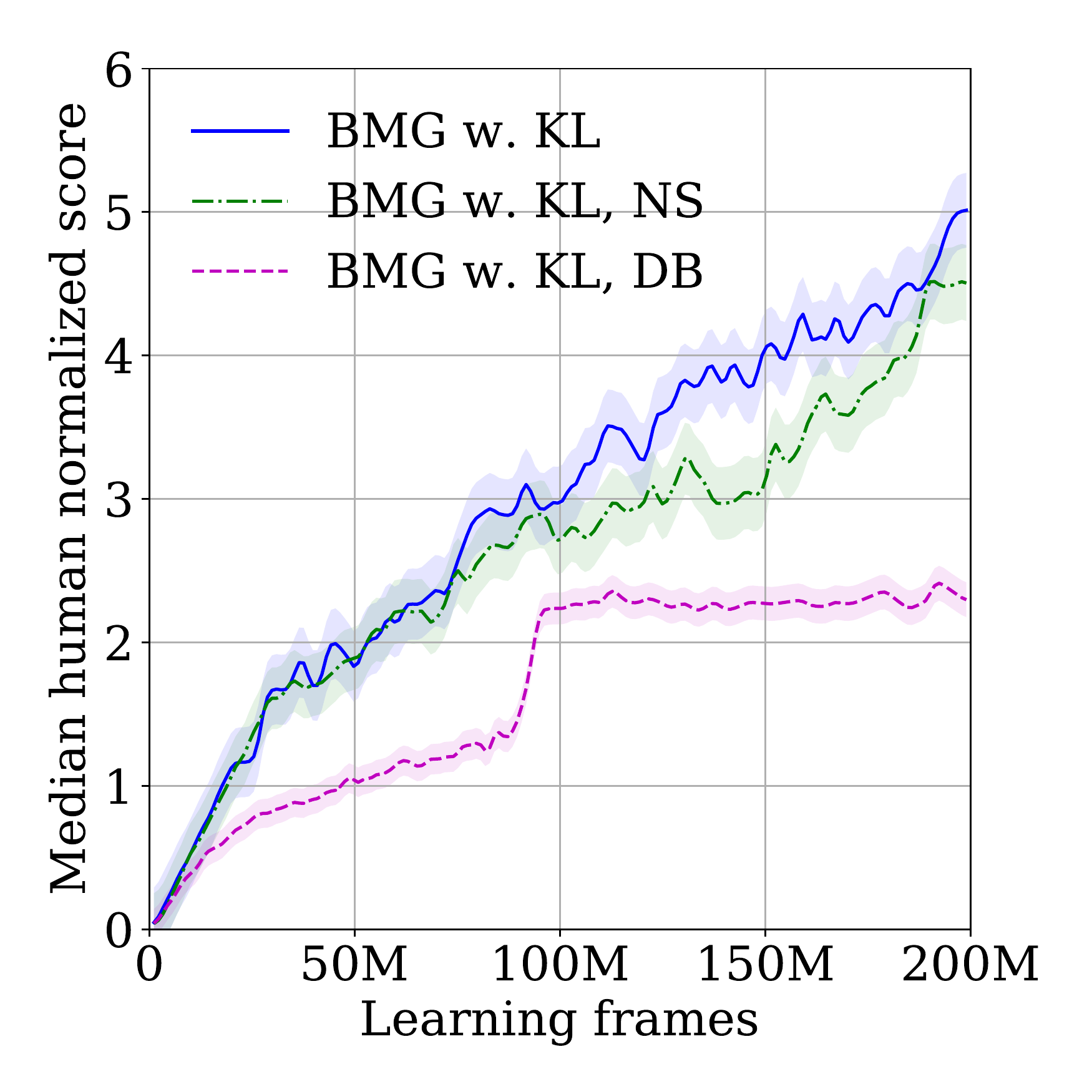}
    \caption{Atari \bmg, alternative meta-update strategies. NS re-computes the agent-update the meta-update, akin to a trust-region method. DB uses the bootstrap target as the next agent parameters. Shading depicts standard deviation across 3 seeds.}
    \label{fig:atari:ablation:alt}
    \vspace*{-24pt}
\end{wrapfigure}

We use the ALE Atari environment, publicly available at \url{https://github.com/mgbellemare/Arcade-Learning-Environment}, licensed under GNU GPL 2.0. Environment hyper-parameters were selected based on prior works \citep{Mnih:2013pl,Espeholt:2018impala,Zahavy:2020se,Schmitt:2020laser}. Network, optimisation and meta-optimisation hyper-parameters are based on the original STACX implementation and tuned for optimal performance. Our median human normalized score matches published results. For \bmg, we did not tune these hyper-parameters, except for $L>1$. In this case, we observed that unique replay data in the initial $L-1$ steps was necessary to yield any benefits. We observed a tendency to crash, and thus reduced the gradient clipping ratio from $.3$ to $.2$. For \bmg configurations that use both policy and value matching, we tuned the weight on value matching by a grid search over $\{0.25, 0.5, 0.75\}$ on Ms Pacman, Zaxxon, Wizard of Wor, and Seaquest, with $0.25$ performing best.

\section{Multi-task Meta-Learning}\label{app:fs}

\subsection{Problem Formulation}

Let $p(\tau)$ denote a given task distribution, where $\tau \in \rN$ indexes a task $f^\tau$. Each task is also associated with distinct learner states $\bh_\tau$ and task parameters $\bx_\tau$, but all task learners use the same meta-learned update rule defined by meta-parameters $\bw$. Hence, the meta-learner's problem is again to learn an update rule, but now in expectation over all learning problems. The \mg update (\Cref{eq:mg}) thus takes the form $\bw^\prime = \bw - \beta \nabla_w \mathbb{E}_\tau [f^\tau(\bx^{(K)}_\tau(\bw))]$, where the expectation is with respect to $(f^\tau, \bh_\tau, \bx_\tau)$ and $\bx^{(K)}_\tau(\bw)$ is the $K$-step update on task $\tau$ given $(f^\tau, \bh_\tau, \bx_\tau)$. Since $p(\tau)$ is independent of $\bw$, this update becomes $\bw^\prime = \bw - \beta  \mathbb{E}_\tau [\nabla_wf^\tau(\bx^{(K)}_\tau(\bw))]$, i.e. the single-task meta-gradient in \Cref{sec:btm} in expectation over the task distribution.

With that said, the expectation involves integrating over $(\bh_\tau, \bx_\tau)$. This distribution is defined differently depending on the problem setup. In few-shot learning, $\bx_\tau$ and $\bh_\tau$ are typically a shared initialisations \citep{Finn:2017maml,Nichol:2018uo,Flennerhag:2019tl} and $f^\tau$ differ in terms of the data \citep{Vinyals:2016uj}. However, it is possible to view the expectation as a prior distribution over task parameters \citep{Grant:2018vo,Flennerhag:2020wg}. In online multi-task learning, this expectation often reduces to an expectation over current task-learning states \citep{Rusu:2015policy,Denevi:2019online}.

The \bmg update is analogously defined. Given a \tb $\xi$, define the task-specific target $\tilde{\bx}_\tau$ given $\bx^{(K)}_\tau$ by $\xi(\bx^{(K)}_\tau)$. The \bmg meta-loss takes the form $\bw^\prime = \bw - \beta \nabla_w \mathbb{E}_\tau [\mu_\tau(\tilde{\bx}_\tau, \bx^{(K)}_\tau(\bw))]$, where $\mu_\tau$ is defined on data from task $\tau$. As with the \mg update, as the task distribution is independent of $\bw$, this simplifies to $\bw^\prime = \bw - \beta \mathbb{E}_\tau [ \nabla_w \mu_\tau(\tilde{\bx}_\tau, \bx^{(K)}_\tau(\bw))]$, where $\mu_\tau$ is the matching loss defined on task data from $\tau$. Hence, as with \mg, the multi-task \bmg update is an expectation over the single-task \bmg update in \Cref{sec:btm}. See \Cref{alg:fs-BMG} for a detailed description.

\subsection{Few-Shot MiniImagenet}

\paragraph{Setup} MiniImagenet \citep{Vinyals:2016uj,Ravi:2017op} is a sub-sample of the Imagenet dataset \citep{Deng:2009im}. Specifically, it is a subset of 100 classes sampled randomly from the 1000 classes in the ILSVRC-12 training set, with 600 images for each class. We follow the standard protocol \citep{Ravi:2017op} and split classes into a non-overlapping meta-training, meta-validation, and meta-tests sets with 64, 16, and 20 classes in each, respectively. The datasset is licenced under the MIT licence and the ILSVRC licence. The dataset can be obtained from \url{https://paperswithcode.com/dataset/miniimagenet-1}. $M$-shot-$N$-way classification tasks are sampled following standard protocol \citep{Vinyals:2016uj}. For each task, $M=5$ classes are randomly sampled from the train, validation, or test set, respectively. For each class, $K$ observations are randomly sampled without replacement. The task validation set is constructed similarly from a disjoint set of $L=5$ images per class. We follow the original MAML protocol for meta-training \citep{Finn:2017maml}, taking $K$ task adaptation steps during meta-training and $10$ adaptation steps during meta testing.

\begin{table}[b!]
    \centering
    \caption{Hyper-parameter sweep per computational budget.}\label{tab:fs:hypers1}
    \vspace{4pt}
    \begin{tabular}{l  c  c}
            & MAML                     &   \bmg \\\toprule
    $\beta$ & $\{0.0001, 0.001\}$     & $\{0.0001, 0.001\}$\\
    $K$     & $\{1, 5, 10\}$          & $\{1, 5\}$\\
    $L$     & ---                     & $\{1, 5, 10\}$\\
    $\mu$   & ---                     & $\{\KL{\tilde{\bx}}{\cdot}, \KL{\cdot}{\tilde{\bx}}\}$\\
    FOMAML  & $\{$ True, False $\}$  & ---\\
    ANIL    & $\{$ True, False $\}$  & ---\\\midrule
    Total   & 24  & 24 \\
    \bottomrule
    \end{tabular}
\end{table}

We study how the data-efficiency and computational efficiency of the \bmg meta-objective compares against that of the \mg meta-objective. To this end, for data efficiency, we report the meta-test set performance as we vary the number of meta-batches each algorithm is allow for meta-training. As more meta-batches mean more meta-tasks, this metric captures how well they leverage additional data. For computational efficiency, we instead report meta-test set performace as a function of total meta-training time. This metric captures computational trade-offs that arise in either method. 

For any computational budget in either regime (i.e. $N$ meta-batches or $T$ hours of training), we report meta-test set performance across 3 seeds for the hyper-configuration with best validation performance (\Cref{tab:fs:hypers1}). This reflects the typical protocol for selecting hyper-parameters, and what each method would attain under a given budget. For both methods, we sweep over the meta-learning rate $\beta$; for shorter training runs, a higher meta-learning is critical to quickly converge. This however lead to sub-optimal performance for larger meta-training budgets, where a smaller meta-learning rate can produce better results. The main determinant for computational cost is the number of steps to backpropagate through, $K$. For \bmg, we sweep over $K\in\{1, 5\}$. For \mg, we sweep over $K\in\{1, 5, 10\}$. We allow $K=10$ for MAML to ensure fair comparison, as \bmg can extend its effective meta-learning horizon through the target bootstrap; we sweep over $L\in\{1, 5, 10\}$. Note that the combination of $K$ and $L$ effectively lets \bmg interpolate between different computational trade-offs. Standard \mg does not have this property, but several first-order approximations have been proposed: we allow the \mg approach to switch from a full meta-gradient to either the FOMAML approximation \citep{Finn:2017maml} or the ANIL approximation \citep{Raghu:2020ANIL}.

\paragraph{Model, compute, and shared hyper-parameters} We use the standard convolutional model \citep{Vinyals:2016uj}, which is a 4-layer convolutional model followed by a final linear layer. Each convolutional layer is defined by a $3\times3$ kernel with $32$ channels, strides of $1$, with batch normalisation, a ReLU activation and $2 \times 2$ max-pooling. We use the same hyper-parameters of optimisation and meta-optimisation as in the original MAML implementation except as specified in \Cref{tab:fs:hypers1}. Each model is trained on a single machine and runs on a V100 NVIDIA GPU. 

\clearpage

\subsection{Analysis}\label{app:fs:analysis}

\begin{wraptable}{r}{0.55\linewidth}
    \centering
    \vspace*{10pt}
    \caption{Meta-training steps per second for MAML and \bmg on $5$-way-$5$-shot MiniImagenet. Standard deviation across 5 seeds in parenthesis.}\label{tab:fs:wall-clock}
    \vspace*{4pt}
    \begin{tabular}{l l l | l l}
    $K$ & $L$ & $H=K+L$ & MAML & BMG \\\toprule
    1 & 1 & 2 & 14.3 (0.4) & 12.4 (0.5) \\
      & 5 & 6 & - & 6.9 (0.3) \\
      & 10 & 11 & - & 4.4 (0.1) \\\midrule
    5 & 1 & 6 & 4.4 (0.06) & 4.2 (0.04) \\
      & 5 & 10 & - & 3.2 (0.03) \\
      & 10 & 15 & - & 2.5 (0.01) \\\midrule
    10 & 1 & 11 & 2.3 (0.01) & 2.2 (0.01) \\
      & 5 & 15 & - & 1.9 (0.01) \\
      & 10 & 20 & - & 1.7 (0.01) \\\midrule
    15 & - & 15 & 1.4 (0.01) & - \\
    20 & - & 20 & 1.1 (0.01) &  - \\\bottomrule
    \end{tabular}  
    \vspace*{-12pt}
\end{wraptable}

In terms of data-efficiency, \Cref{tab:fs-data-efficiency} reports best hyper-parameters for each data budget. For both \bmg and \mg, we note that small budgets rely on fewer steps to backpropagate through and a higher learning rate. \bmg tends to prefer a higher target bootstrap in this regime. \mg switches to backpropagation through $K>1$ sooner than \bmg, roughly around 70 000 meta-updates, while \bmg switches around 120 000 meta-updates. This accounts for why \bmg can achieve higher performance faster, as it can achieve similar performance without backpropagating through more than one update. It is worth noting that as \bmg is given larger training budgets, to prevent meta-overfitting, shorter target bootstraps generalize better. We find that other hyper-parameters are not important for overall performance.

In terms of computational efficiency, \Cref{tab:fs-data-efficiency} reports best hyper-parameters for each time budget. The pattern here follows a similar trend. \mg does btter under a lower learning rate already after $4$ hours, whereas \bmg switches after about 8 hours. This data highlights the dominant role $K$ plays in determining training time.

We compare wall-clock time per meta-training step for various values of $K$ and $L$ \Cref{tab:fs:wall-clock}.  In our main configuration, i.e. $K=5, L=10$, \bmg achieves a throughput of $2.5$ meta-training steps per second, compared to $4.4$ for MAML, making \bmg $50\%$ slower. In this setting, \bmg has an effective meta-learning horizon of $15$, whereas MAML has a horizon of $5$. For MAML to achieve an effective horizon of $15$, it's throughput would be reduced to $1.4$, instead making MAML $56\%$ slower than \bmg.

Finally, we conduct a similar analysis as on Atari (\Cref{app:atari:gains}) to study the effect \bmg has on ill-conditioning and meta-gradient variance. We estimate ill-conditioning through cosine similarity between consecutive meta-gradients, and meta-gradient variance on a per meta-batch basis. We report mean statistics for the 5-way-5-shot setup between 100 000 and 150 000 meta-gradient steps, with standard deviation over 5 independent seeds, in \Cref{tab:fs:gains}.

\begin{table}[t!]
    \centering
    \caption{Effect of \bmg on ill-conditioning and meta-gradient variance on $5$-way-$5$-shot MiniImagenet. Estimated meta-gradient cosine similarity ($\theta$) between consecutive gradients, meta-gradient variance ($\rV$), and meta-gradient norm to variance ratio ($\rho$). Standard deviation across 5 independent seeds.}\label{tab:fs:gains}
    \vspace{4pt}
    \begin{tabular}{l l | l l l | l l l}
        &  &  & MAML & & & BMG & \\
    $K$ & $L$ & $\theta$ & $\rV$ & $\rho$ & $\theta$ & $\rV$ & $\rho$\\\toprule
    1 & 1 & 0.17 (0.01) & 0.21 (0.01) & 0.02 (0.02) & 0.17 (0.01) & 0.0002 (5e-6) & 0.59 (0.03)\\
      & 5 &     &                   &             & 
    0.18 (0.01) & 0.001 (1e-5)  & 0.23 (0.01)\\
      & 10 &     &                   &             & 
    0.19 (0.01) & 0.0003 (2e-5)  & 0.36 (0.01)\\\midrule
    5 & 1 & 0.03 (0.01) & 0.07 (0.009) & 0.08 (0.03) & 0.03 (0.005) & 0.01 (9e-5) & 0.84 (0.03 \\
      & 5 &     &                   &             & 
    0.04 (0.005) & 0.001 (5e-5)  & 0.46 (0.02)\\
      & 10 &     &                   &             & 
    0.05 (0.004) & 0.003 (3e-5)  & 0.18 (0.02)\\
    \bottomrule
    \end{tabular}
\end{table}

Unsurprisingly, MAML and \bmg are similar in terms of curvature, as both can have a KL-divergence type of meta-objective. \bmg obtains greater cosine similarity as $L$ increases, suggesting that \bmg can transfer more information by having a higher temperature in its target. However, \bmg exhibits substantially lower meta-gradient variance, and the ratio of meta-gradient norm to variance is an order of magnitude larger.

\begin{algorithm}[b!]
  \caption{Supervised multi-task meta-learning with BMG}\label{alg:fs-BMG}
  \begin{algorithmic}
    \Require $K, L$                                        \Comment{meta-update length, bootstrap length}
    \Require $M, N, T$                                     \Comment{meta-batch size, inner batch size, meta-training steps.}
    \Require $\bx \in \rR^{n_x}$, $\bw \in \rR^{n_w}$      \Comment{model and meta parameters.}
    \For{$t=1,2,\ldots,T$}
      \State $\bg \leftarrow 0$                                       \Comment{Initialise meta-gradient.}
      \For{$i=1, 2, \ldots, M$}
        \State $\tau \sim p(\tau)$                                  \Comment{Sample task.}
        \State $\bx_\tau \leftarrow \bx$                      \Comment{For MAML, set $\bx = \bw$.}
        \For{$k=1, 2, \ldots, K$}
          \State $\bzeta_\tau \sim p_{\text{train}}(\bzeta \g \tau)$       \Comment{Sample batch of task training data.} 
          \State $\bx_\tau = \bx_\tau + \varphi(\bx_\tau, \bzeta_\tau, \bw)$  \Comment{Task adaptation.}
        \EndFor
        \State $\bx^{(K)} \leftarrow \bx$                                 \Comment{$K$-step adaptation.}
        \For{$l=1, 2, \ldots, L-1$}
          \State $\bzeta_\tau \sim p_{\text{test}}(\bzeta \g \tau)$       \Comment{Sample batch of task test data.} 
          \State $\bx_\tau = \bx_\tau + \varphi(\bx_\tau, \bzeta_\tau, \bw)$  \Comment{$L-1$ step bootstrap.}
        \EndFor
      \State $\bzeta_\tau \sim p_{\text{test}}(\bzeta \g \tau)$
      \If{final gradient step} \Comment{Assign target.}
        \State $\tilde{\bx}_\tau = \bx_\tau - \alpha \nabla_x \ell(\bx_\tau, \bzeta_\tau)$
      \Else 
        \State $\tilde{\bx}_\tau \leftarrow \bx_\tau + \varphi(\bx_\tau, \bzeta_\tau, \bw)$
      \EndIf
      \State $\bg \leftarrow \bg + \nabla_w \mu(\tilde{\bx}_\tau, \bx^{(K)}(\bw))$
      \EndFor
    \State $\bw \leftarrow \bw - \frac{\beta}{M} \bg$         \Comment{BMG outer step.}
    \EndFor
  \end{algorithmic}
\end{algorithm}

\begin{table}[t!]
    \centering
    \caption{Data-efficiency: mean meta-test accuracy over 3 seeds for best hyper-parameters per data budget. $\mu=1$ corresponds to $\KL{\tilde{\bx}}{\cdot}$ and $\mu=2$ to $\KL{\cdot}{\tilde{\bx}}$.}
    \label{tab:fs-data-efficiency}    
    \begin{tabular}{l c r r r r | c r r r r}
    \toprule
      &              &                       &                     &                  &           &               &                        &                  &              &           \\
    Step (K) &  $\beta$ &  $K$ & $L$ & $\mu$ & Acc. (\%) &  $\beta$ & $K$ & FOMAML & ANIL &  Acc. (\%) \\
    \midrule
    10  &       $10^{-3}$ &                     1 &                  10 &              1 &     61.4 &        $10^{-3}$ &                      1 &            False &         True &      61.7 \\
    20  &       $10^{-3}$ &                     1 &                  10 &              1 &     61.8 &        $10^{-3}$ &                      1 &            False &        False &      61.9 \\
    30  &       $10^{-3}$ &                     1 &                  10 &              1 &     62.5 &        $10^{-3}$ &                     10 &            False &         True &      62.3 \\
    40  &       $10^{-3}$ &                     5 &                   1 &              1 &     63.1 &        $10^{-3}$ &                      5 &            False &        False &      62.7 \\
    50  &       $10^{-3}$ &                     5 &                   1 &              1 &     63.5 &        $10^{-3}$ &                     10 &            False &         True &      62.9 \\
    60  &       $10^{-3}$ &                     5 &                   1 &              1 &     63.7 &        $10^{-3}$ &                      1 &            False &        False &      63.0 \\
    70  &       $10^{-3}$ &                     1 &                   1 &              2 &     63.7 &        $10^{-3}$ &                      5 &            False &        False &      63.0 \\
    80  &       $10^{-3}$ &                     5 &                   1 &              1 &     63.7 &        $10^{-4}$ &                      5 &            False &        False &      63.1 \\
    90  &       $10^{-3}$ &                     5 &                   1 &              1 &     63.8 &        $10^{-3}$ &                      5 &            False &        False &      63.3 \\
    100 &       $10^{-3}$ &                     1 &                   1 &              2 &     63.8 &        $10^{-4}$ &                      5 &            False &        False &      63.4 \\
    110 &       $10^{-3}$ &                     1 &                   1 &              2 &     63.9 &        $10^{-4}$ &                      5 &            False &        False &      63.6 \\
    120 &       $10^{-4}$ &                     5 &                   5 &              1 &     63.9 &        $10^{-4}$ &                      5 &            False &        False &      63.6 \\
    130 &       $10^{-4}$ &                     5 &                  10 &              1 &     64.0 &        $10^{-4}$ &                      5 &            False &        False &      63.6 \\
    140 &       $10^{-4}$ &                     5 &                   5 &              1 &     64.1 &        $10^{-4}$ &                      5 &            False &        False &      63.6 \\
    150 &       $10^{-4}$ &                     5 &                   5 &              1 &     64.2 &        $10^{-4}$ &                     10 &            False &         True &      63.6 \\
    160 &       $10^{-4}$ &                     5 &                   5 &              1 &     64.3 &        $10^{-4}$ &                      5 &            False &        False &      63.6 \\
    170 &       $10^{-4}$ &                     5 &                   5 &              1 &     64.4 &        $10^{-4}$ &                      5 &            False &        False &      63.7 \\
    180 &       $10^{-4}$ &                     5 &                   5 &              1 &     64.5 &        $10^{-4}$ &                     10 &            False &        False &      63.8 \\
    190 &       $10^{-4}$ &                     5 &                  10 &              1 &     64.6 &        $10^{-4}$ &                      5 &            False &        False &      63.9 \\
    200 &       $10^{-3}$ &                     5 &                  10 &              2 &     64.7 &        $10^{-4}$ &                     10 &            False &        False &      64.0 \\
    210 &       $10^{-4}$ &                     5 &                   1 &              1 &     64.7 &        $10^{-4}$ &                      5 &            False &        False &      64.1 \\
    220 &       $10^{-4}$ &                     5 &                   5 &              1 &     64.7 &        $10^{-4}$ &                     10 &            False &        False &      64.2 \\
    230 &       $10^{-4}$ &                     5 &                   5 &              1 &     64.8 &        $10^{-4}$ &                      5 &            False &        False &      64.2 \\
    240 &       $10^{-4}$ &                     5 &                   1 &              2 &     64.8 &        $10^{-4}$ &                      5 &            False &        False &      64.1 \\
    250 &       $10^{-4}$ &                     5 &                   5 &              1 &     64.9 &        $10^{-4}$ &                      5 &            False &        False &      64.1 \\
    260 &       $10^{-4}$ &                     5 &                   1 &              1 &     64.9 &        $10^{-4}$ &                      5 &            False &        False &      64.0 \\
    270 &       $10^{-4}$ &                     5 &                   1 &              1 &     64.8 &        $10^{-4}$ &                      5 &            False &        False &      63.9 \\
    280 &       $10^{-4}$ &                     5 &                   1 &              1 &     64.8 &        $10^{-4}$ &                      5 &            False &        False &      63.8 \\
    290 &       $10^{-4}$ &                     5 &                   1 &              1 &     64.7 &        $10^{-4}$ &                      5 &            False &        False &      63.8 \\
    300 &       $10^{-4}$ &                     5 &                   5 &              1 &     64.7 &        $10^{-4}$ &                      5 &            False &        False &      63.8 \\
    \bottomrule
    \end{tabular}
\end{table}

\begin{table}[t!]
    \centering
    \caption{Computational-efficiency: mean meta-test accuracy over 3 seeds for best hyper-parameters per time budget. $\mu=1$ corresponds to $\KL{\tilde{\bx}}{\cdot}$ and $\mu=2$ to $\KL{\cdot}{\tilde{\bx}}$.}
    \label{tab:fs-comp-efficiency}
    \begin{tabular}{l c r r r r | c r r r r}
    \toprule
      &              &                       &                     &                  &           &               &                        &                  &              &           \\
    Time (h) &  $\beta$ &  $K$ & $L$ & $\mu$ & Acc. (\%) &  $\beta$ & $K$ & FOMAML & ANIL &  Acc. (\%) \\
    \midrule
    1          &       $10^{-3}$ &                   1 &                   1 &              2 &     63.5 &        $10^{-3}$ &                    1 &            False &        False &      63.0 \\
    2          &       $10^{-3}$ &                   1 &                   1 &              2 &     63.6 &        $10^{-3}$ &                   10 &            False &         True &      63.0 \\
    3          &       $10^{-3}$ &                   5 &                   1 &              1 &     63.7 &        $10^{-3}$ &                    5 &            False &        False &      63.0 \\
    4          &       $10^{-3}$ &                   5 &                   1 &              1 &     63.8 &        $10^{-4}$ &                    5 &            False &         True &      63.1 \\
    4          &       $10^{-3}$ &                   5 &                   1 &              1 &     63.8 &        $10^{-4}$ &                    1 &            False &         True &      63.4 \\
    5          &       $10^{-3}$ &                   5 &                   1 &              1 &     63.8 &        $10^{-4}$ &                    5 &            False &        False &      63.5 \\
    6          &       $10^{-3}$ &                   5 &                  10 &              1 &     63.8 &        $10^{-4}$ &                    5 &            False &        False &      63.6 \\
    7          &       $10^{-4}$ &                   5 &                   1 &              1 &     63.8 &        $10^{-4}$ &                    5 &            False &        False &      63.6 \\
    8          &       $10^{-3}$ &                   5 &                   1 &              1 &     63.8 &        $10^{-4}$ &                    5 &            False &        False &      63.6 \\
    9          &       $10^{-4}$ &                   5 &                   1 &              1 &     63.9 &        $10^{-4}$ &                    5 &            False &        False &      63.6 \\
    10         &       $10^{-4}$ &                   5 &                   1 &              1 &     64.2 &        $10^{-4}$ &                    5 &            False &        False &      63.7 \\
    11         &       $10^{-4}$ &                   5 &                   5 &              1 &     64.3 &        $10^{-4}$ &                    5 &            False &        False &      63.8 \\
    12         &       $10^{-4}$ &                   5 &                   5 &              1 &     64.5 &        $10^{-4}$ &                    5 &            False &        False &      63.9 \\
    13         &       $10^{-4}$ &                   5 &                   5 &              1 &     64.6 &        $10^{-4}$ &                    5 &            False &        False &      63.9 \\
    14         &       $10^{-4}$ &                   5 &                   1 &              2 &     64.7 &        $10^{-4}$ &                    5 &            False &        False &      63.8 \\
    15         &       $10^{-4}$ &                   5 &                   1 &              1 &     64.8 &        $10^{-4}$ &                    5 &            False &        False &      63.4 \\
    16         &       $10^{-4}$ &                   5 &                   1 &              1 &     64.8 &        $10^{-3}$ &                   10 &            False &        False &      63.2 \\
    17         &       $10^{-4}$ &                   5 &                   1 &              1 &     64.8 &        $10^{-4}$ &                   10 &            False &        False &      63.3 \\
    18         &       $10^{-4}$ &                   5 &                  10 &              1 &     64.8 &        $10^{-4}$ &                   10 &            False &        False &      63.5 \\
    19         &       $10^{-4}$ &                   5 &                   5 &              1 &     64.8 &        $10^{-4}$ &                   10 &            False &        False &      63.6 \\
    20         &       $10^{-4}$ &                   5 &                   5 &              1 &     64.7 &        $10^{-4}$ &                   10 &            False &        False &      63.8 \\
    21         &       $10^{-4}$ &                   5 &                  10 &              1 &     64.7 &        $10^{-4}$ &                   10 &            False &        False &      63.9 \\
    21         &       $10^{-4}$ &                   5 &                  10 &              1 &     64.7 &        $10^{-4}$ &                   10 &            False &        False &      63.8 \\
    22         &       $10^{-4}$ &                   5 &                   5 &              1 &     64.7 &        $10^{-4}$ &                   10 &            False &        False &      63.9 \\
    23         &       $10^{-4}$ &                   5 &                  10 &              1 &     64.7 &        $10^{-4}$ &                   10 &            False &        False &      63.8 \\
    24         &       $10^{-4}$ &                   5 &                  10 &              1 &     64.7 &           --- &                    --- &              --- &          --- &       --- \\
    \bottomrule
    \end{tabular}
\end{table}

\begin{figure}[t!]
    \centering
    \includegraphics[width=\linewidth]{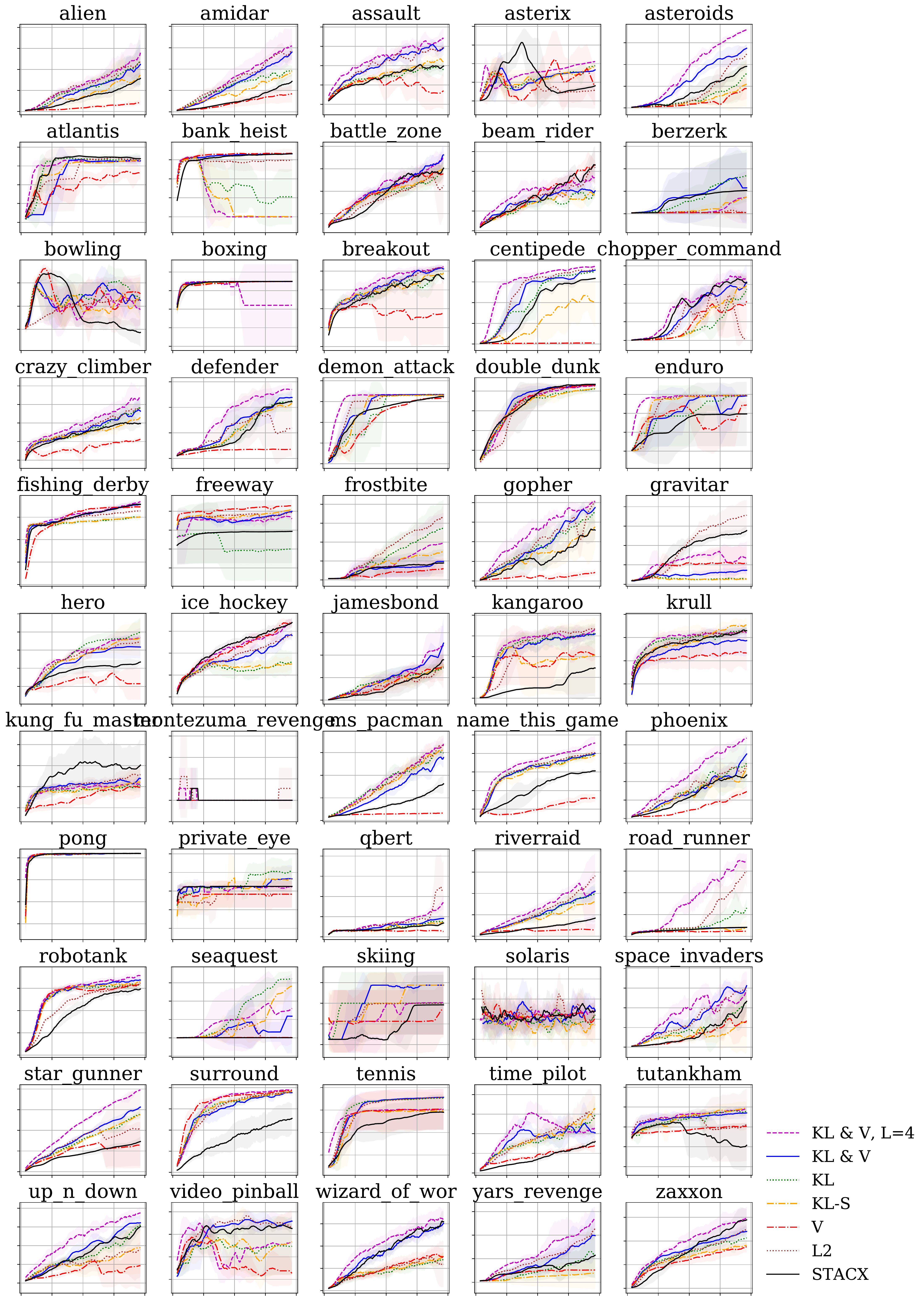}
    \caption{Atari, per-game performance across $3$ seeds. Shading depicts standard deviation.}
    \label{fig:atari-all-games}
\end{figure}

\begin{table}[t!]
    \centering
    \caption{Mean per-game performance between 190-200M frames.}
    \label{tab:atari-all-games}
\begin{tabular}{l|rrrrrrr}
\toprule
{} &      KL &  KL \& V &  KL \& V, L=4 &    KL-S &      L2 &   STACX &       V \\
\midrule
Alien             &   45677 &   44880 &        58067 &   35750 &   50692 &   31809 &    7964 \\
Amidar            &    4800 &    7099 &         7528 &    4974 &    7691 &    3719 &    1896 \\
Assault           &   20334 &   29473 &        33019 &   21747 &   28301 &   19648 &    4101 \\
Asterix           &  511550 &  439475 &       533385 &  487367 &    6798 &  245617 &   86053 \\
Asteroids         &  145337 &  238320 &       289689 &    8585 &  220366 &  156096 &   56577 \\
Atlantis          &  831920 &  813772 &       814780 &  806698 &  854441 &  848007 &  648988 \\
Bank Heist        &     571 &    1325 &            0 &      13 &    1165 &    1329 &    1339 \\
Battle Zone       &   73323 &   88407 &        88350 &   78941 &   50453 &   78359 &   72787 \\
Beam Rider        &   37170 &   51649 &        57409 &   41454 &   67726 &   62892 &   74397 \\
Berzerk           &   21146 &    2946 &         1588 &    2183 &     240 &    1523 &    1069 \\
Bowling           &      46 &      50 &           42 &      46 &      50 &      28 &      52 \\
Boxing            &     100 &     100 &           86 &     100 &     100 &     100 &     100 \\
Breakout          &     742 &     832 &          847 &     774 &     827 &     717 &      16 \\
Centipede         &  537032 &  542730 &       558849 &  291569 &  550394 &  478347 &    8895 \\
Chopper Command   &  830772 &  934863 &       838090 &  736012 &   11274 &  846788 &  341350 \\
Crazy Climber     &  233445 &  212229 &       265729 &  199150 &  229496 &  182617 &  126353 \\
Defender          &  393457 &  374012 &       421894 &  364053 &   69193 &  344453 &   55152 \\
Demon Attack      &  132508 &  133109 &       133571 &  132529 &  133469 &  130741 &  129863 \\
Double Dunk       &      22 &      23 &           23 &      21 &      23 &      24 &      23 \\
Enduro            &    2349 &    2349 &         2350 &    2360 &    2365 &     259 &    2187 \\
Fishing Derby     &      41 &      63 &           68 &      41 &      52 &      62 &      59 \\
Freeway           &      10 &      30 &           25 &      31 &      30 &      18 &      33 \\
Frostbite         &    8820 &    3895 &         3995 &    5547 &   13477 &    2522 &    1669 \\
Gopher            &  116010 &  116037 &       122459 &   92185 &  122790 &   87094 &   11920 \\
Gravitar          &     271 &     709 &          748 &     259 &    3594 &    2746 &     944 \\
Hero              &   60896 &   48551 &        52432 &   56044 &   51631 &   35559 &   20235 \\
Ice Hockey        &       5 &      15 &           20 &       4 &      15 &      19 &      20 \\
Jamesbond         &   22129 &   25951 &        30157 &   25766 &   18200 &   26123 &   23263 \\
Kangaroo          &   12200 &   12557 &        13174 &    1940 &   13235 &    3182 &    8722 \\
Krull             &   10750 &    9768 &        10510 &   11156 &   10502 &   10480 &    8899 \\
Kung Fu Master    &   51038 &   58732 &        54354 &   54559 &   63632 &   67823 &   54584 \\
Montezuma Revenge &       0 &       0 &            0 &       0 &       0 &       0 &       0 \\
Ms Pacman         &   25926 &   22876 &        28279 &   26267 &   27564 &   12647 &    2759 \\
Name This Game    &   31203 &   31863 &        36838 &   30912 &   32344 &   24616 &   12583 \\
Phoenix           &  529404 &  542998 &       658082 &  407520 &  440821 &  370270 &  247854 \\
Pitfall           &       0 &      -1 &            0 &       0 &       0 &       0 &       0 \\
Pong              &      21 &      21 &           21 &      21 &      21 &      21 &      21 \\
Private Eye       &     165 &     144 &           98 &     130 &      67 &     100 &      68 \\
Qbert             &   87214 &   37135 &        72320 &   30047 &   75197 &   27264 &    3901 \\
Riverraid         &  129515 &  132751 &        32300 &   91267 &  177127 &   47671 &   26418 \\
Road Runner       &  240377 &   61710 &       521596 &   17002 &  424588 &   62191 &   34773 \\
Robotank          &      64 &      66 &           71 &      65 &      64 &      61 &      65 \\
Seaquest          &  684870 &    2189 &        82925 &  616738 &    1477 &    1744 &    3653 \\
Skiing            &  -10023 &   -8988 &        -9797 &   -8988 &   -9893 &  -10504 &  -13312 \\
Solaris           &    2120 &    2182 &         2188 &    1858 &    2194 &    2326 &    2202 \\
Space Invaders    &   35762 &   54046 &        40790 &   11314 &   49333 &   34875 &   15424 \\
Star Gunner       &  588377 &  663477 &       790833 &  587411 &   39510 &  298448 &   43561 \\
Surround          &       9 &       9 &           10 &       9 &       9 &       3 &       9 \\
Tennis            &      23 &      24 &           23 &      21 &      24 &      19 &      24 \\
Time Pilot        &   94746 &   60918 &        68626 &   95854 &   93466 &   49932 &   40127 \\
Tutankham         &     282 &     268 &          291 &     280 &     288 &     101 &     205 \\
Up N Down         &  342121 &  303741 &       381780 &  109392 &  202715 &  315588 &   17252 \\
Venture           &       0 &       0 &            0 &       0 &       0 &       0 &       0 \\
Video Pinball     &  230252 &  479861 &       399094 &  505212 &  485852 &  441220 &   77100 \\
Wizard Of Wor     &   21597 &   45731 &        49806 &   22936 &   10817 &   47854 &   24250 \\
Yars Revenge      &   77001 &  286734 &       408061 &   32031 &  398656 &  113651 &   77169 \\
Zaxxon            &   44280 &   49448 &        59011 &   36261 &   49734 &   56952 &   35494 \\
\bottomrule
\end{tabular}
\end{table}

\end{document}